\newcommand{\Lap}{\text{Lap}}
\newcommand{\card}[1]{\texttt{card}(#1)}
\newcommand{\kcx}[1]{}
\newcommand{\kc}[1]{}
\newcommand{\shsit}[1]{{\itshape}}
\def\calS{\mathcal{S}}
\def\calQ{\mathcal{Q}}
\def\calX{\mathcal{X}}
\def\ttheta{\tilde{\theta}}
\def\g{g}
\def\dataolder{\text{older woman}}
\def\dataoverw{\text{overweight woman}}
\def\dataolders{\text{older women}}
\def\dataoverws{\text{overweight women}}
\newtheorem{theorem}{Theorem}[section]
\newtheorem{definition}[theorem]{Definition}
\newtheorem{lemma}[theorem]{Lemma}
\def\RQ{{R\cup Q}}
\def\mqm{\text{Markov Quilt Mechanism}}
\def\Pr{P}
\def\effect{max-influence}
\def\gT{\g_{\Theta}}
\def\piT{\pi^{\min}_{\Theta}}
\def\eT{e_{\Theta}}
\newcommand{\mypara}[1]{\medskip\noindent{\textbf{#1}}}
\def\calG{\mathcal{G}}
\def\bbR{\mathbb{R}}
\def\mgroupdp{\text{GroupDP}}
\def\minferen{\text{GK16}}
\def\mmqmapprox{\text{MQMApprox}}
\def\mmqmexact{\text{MQMExact}}
\def\mdp{\text{DP}}
\def\calP{\mathcal{P}}
\newcommand{\kset}[1]{\{1,\dots,#1\}}
\begin{document}

\title{Pufferfish Privacy Mechanisms for Correlated Data}

\numberofauthors{3} %
\author{
\alignauthor
Shuang Song\\
       \affaddr{UC San Diego}\\
       \email{shs037@eng.ucsd.edu}
\alignauthor
Yizhen Wang\titlenote{The first two authors contributed equally.}\\
       \affaddr{UC San Diego}\\
       \email{yiw248@eng.ucsd.edu}
\alignauthor 
Kamalika Chaudhuri\\
       \affaddr{UC San Diego}\\
       \email{kamalika@eng.ucsd.edu}
}

\maketitle
\begin{abstract}
Many modern databases include personal and sensitive correlated data, such as private information on users connected together in a social network, and measurements of physical activity of single subjects across time. However, differential privacy, the current gold standard in data privacy, does not adequately address privacy issues in this kind of data.

This work looks at a recent generalization of differential privacy, called Pufferfish, that can be used to address privacy in correlated data. The main challenge in applying Pufferfish is a lack of suitable mechanisms. We provide the first mechanism -- the Wasserstein Mechanism -- which applies to any general Pufferfish framework. Since this mechanism may be computationally inefficient, we provide an additional mechanism that applies to some practical cases such as physical activity measurements across time, and is computationally efficient. Our experimental evaluations indicate that this mechanism provides privacy and utility for synthetic as well as real data in two separate domains. 
\end{abstract}

\begin{CCSXML}
<ccs2012>
<concept>
<concept_id>10002978.10003029.10011150</concept_id>
<concept_desc>Security and privacy~Privacy protections</concept_desc>
<concept_significance>500</concept_significance>
</concept>
</ccs2012>
\end{CCSXML}

\vspace{-10pt}
\ccsdesc[500]{Security and privacy~Privacy protections}

\def\parent{\texttt{parent}}
\printccsdesc

\vspace{-10pt}
\keywords{privacy, differential privacy, Pufferfish privacy}

\vspace{-10pt}
\section{Introduction}

Modern database applications increasingly involve personal data, such as healthcare, financial and user behavioral information, and consequently it is important to design algorithms that can analyze sensitive data while still preserving privacy. For the past several years, differential privacy~\cite{DMNS06} has emerged as the gold standard in data privacy, and there is a large body of work on differentially private algorithm design that apply to a growing number of queries~\cite{hay2010boosting,li2010optimizing, CMS11, SCS13, li2012adaptive,stoddard2014differentially,he2015dpt,chen2011publishing,xiao2010differentially}. The typical setting in these works is that each (independently drawn) record is a single individual's private value, and the goal is to answer queries on a collection of records while adding enough noise to hide the evidence of the participation of a single individual in the data.

Many modern database applications, such as those involving healthcare, power usage and building management, also increasingly involve a different setting -- correlated data -- privacy issues in which are not as well-understood. Consider, for example, a simple time-series application -- measurement of physical activity of a {\em{single subject}} across time. The goal here is to release aggregate statistics on the subject's activities over a long period (say a week) while hiding the evidence of an activity at any specific instant (say, 10:30am on Jan 4). If the measurements are made at small intervals, then the records form a {\em{highly correlated time-series}} as human activities change slowly over time. 

What is a good notion of privacy for this example? Since the data belongs to a {\em{single}} subject, differential privacy is not directly applicable; however, a modified version called {\em{entry-privacy}}~\cite{HR13} applies. Entry-privacy ensures that the inclusion of a single time-series entry does not affect the probability of any outcome by much. It will add noise with standard deviation $\approx 1$ to each bin of the histogram of activities. While this is enough for independent activities, it is insufficient to hide the evidence of correlated activities that may continue for several time points. A related notion is group differential privacy~\cite{DR14}, which extends the definition to participation of {\em{entire groups}} of correlated individuals or entries. Here, all entries are correlated, and hence group differential privacy will add $\approx O(T)$ noise to a histogram over $T$ measurements, thus destroying all utility. Thus to address privacy challenges in this kind of data, we need a different privacy notion.

A generalized version of differential privacy called Pufferfish was proposed by~\cite{KM12}. In Pufferfish, privacy requirements are specified through three components -- $\calS$, a set of secrets that represents what may need to be hidden, $\calQ$, a set of secret pairs that represents pairs of secrets that need to be indistinguishable to the adversary and $\Theta$, a class of distributions that can plausibly generate the data. Privacy is provided by ensuring that the secret pairs in $\calQ$ are indistinguishable when data is generated from any $\theta \in \Theta$. In the time-series example, $\calS$ is the set of activities at each time $t$, and secret pairs are all pairs of the form (Activity $a$ at time $t$, Activity $b$ at time $t$). Assuming that activities transition in a Markovian fashion, $\Theta$ is a set of Markov Chains over activities. Pufferfish captures correlation in such applications effectively in two ways -- first, unlike differential privacy, it can hide private values against correlation across multiple entries/individuals; second, unlike group privacy, it also allows utility in cases where a large number of individuals or entries are correlated, yet the ``average amount'' of correlation is low. 

Because of these properties, we adopt Pufferfish as our privacy definition. To bolster the case for Pufferfish, we provide a general result showing that even if the adversary's belief $\tilde{\theta}$ lies outside the class $\Theta$, the resulting loss in privacy is low if $\tilde{\theta}$ is close to $\Theta$.

The main challenge in using Pufferfish is a lack of suitable mechanisms. While mechanisms are known for specific instantiations~\cite{KM12, DKM14, BDP}, there is no mechanism for general Pufferfish. In this paper, we provide the first mechanism, called the Wasserstein Mechanism, that can be adopted to any general Pufferfish instantiation. Since this mechanism may be computationally inefficient, we consider the case when correlation between variables can be described by a Bayesian network, and the goal is to hide the private value of each variable. We provide a second mechanism, called the Markov Quilt Mechanism, that can exploit properties of the Bayesian network to reduce the computational complexity. As a case study, we derive a simplified version of the mechanism for the physical activity measurement example. We provide privacy and utility guarantees, establish composition properties, and finally demonstrate the practical applicability of the mechanism through experimental evaluation on synthetic as well as real data. Specifically, our contributions are as follows:

\begin{itemize}\itemsep=1pt\parskip=1pt

\item We establish that when we guarantee Pufferfish privacy with respect to a distribution class $\Theta$, but the adversary's belief $\tilde{\theta}$ lies outside $\Theta$, the resulting loss in privacy is small when $\tilde{\theta}$ is close to $\Theta$. 

\item We provide the first mechanism that applies to any Pufferfish instantiation and is a generalization of the Laplace mechanism for differential privacy. We call this the Wasserstein Mechanism.

\item Since the above mechanism may be computationally inefficient, we provide a more efficient mechanism called the Markov Quilt Mechanism when correlation between entries is described by a Bayesian Network. 

\item We show that under certain conditions, applying the Markov Quilt Mechanism multiple times over the same database leads to a gracefully decaying privacy parameter. This makes the mechanism particularly attractive as Pufferfish privacy does not always compose~\cite{KM12}.

\item We derive a simplified and computationally efficient version of this mechanism for time series applications such as physical activity monitoring.

\item Finally, we provide an experimental comparison between Markov Quilt Mechanism and standard baselines as well as some concurrent work~\cite{ghosh2016inferential}; experiments are performed on simulated data, a moderately-sized real dataset ($\approx 10,000$ observations per person) on physical activity, and a large dataset (over $1$ million observations) on power consumption.
\end{itemize} 

\subsection{Related Work} 

There is a body of work on differentially private mechanisms~\cite{NRS07, MT07, CMS11, CHS14, DL09} -- see surveys~\cite{SC13, DR14}. As we explain earlier, differential privacy is not the right formalism for the kind of applications we consider. A related framework is coupled-worlds privacy~\cite{BS13}; while it can take data distributions into account through a distribution class $\Theta$, it requires that the participation of a single individual or entry does not make a difference, and is not suitable for our applications. We remark that while mechanisms for {\em{specific}} coupled-worlds privacy frameworks exist, there is also no {\em{generic}} coupled-worlds privacy mechanism.

Our work instead uses Pufferfish, a recent generalization of differential privacy~\cite{KM12}. \cite{KM12, DKM14} provide some specific instances of Pufferfish frameworks along with associated privacy mechanisms; they do not provide a mechanism that applies to any Pufferfish instance, and their examples do not apply to Bayesian networks. \cite{BDP} uses a modification of Pufferfish, where instead of a distribution class $\Theta$, they consider a single generating distribution $\theta$. They consider the specific case where correlations can be modeled by Gaussian Markov Random Fields, and thus their work also does not apply to our physical activity monitoring example. \cite{mittal} designs Pufferfish privacy mechanisms for distribution classes that include Markov Chains. Their mechanism adds noise proportional to a parameter $\rho$ that measures correlation between entries. However, they do not specify how to calculate $\rho$ as a function of the distribution class $\Theta$, and as a result, their mechanism cannot be implemented when only $\Theta$ is known. \cite{Xiong} releases time-varying location trajectories under differential privacy while accounting for temporal correlations using a Hidden Markov Model with publicly known parameters. Finally, \cite{germans} apply Pufferfish to smart-meter data; instead of directly modeling correlation through Markov models, they add noise to the wavelet coefficients of the time series corresponding to different frequencies. 

In concurrent work,~\cite{ghosh2016inferential} provide an alternative algorithm for Pufferfish privacy when data can be written as $X = (X_1, \ldots, X_n)$ and the goal is to hide the value of each $X_i$. Their mechanism is less general than the Wasserstein Mechanism, but applies to a broader class of models than the Markov Quilt Mechanism. In Section~\ref{sec:experiment}, we experimentally compare their method with ours for Markov Chains and show that our method works for a broader range of distribution classes than theirs. 

Finally, there has also been some previous work on differentially private time-series release~\cite{map, xiongseries,RastogiN10}; however, they relate to aggregates over trajectories from a large number of people, unlike trajectories of single subjects as in our work.

\section{The Setting}
\label{sec:model}
\label{sec:setting}

To motivate our privacy framework, we use two applications -- physical activity monitoring of a single subject and flu statistics in a social network. We begin by describing them at a high level, and provide more details in Section~\ref{sec:pfframework}.

\mypara{Example 1: Physical Activity Monitoring.} The database consists of a time-series $X = \{X_1, X_2, \ldots, X_T\}$ where record $X_t$ denotes a discrete physical activity (e.g, running, sitting, etc) of a subject at time $t$. Our goal is to release (an approximate) histogram of activities over a period (say, a week), while preventing an adversary from inferring the subject's activity at a specific time (say, 10:30am on Jan 4).

\mypara{Example 2: Flu Status.} The database consists of a set of records $X = \{ X_1, \ldots, X_n \}$, where record $X_i$, which is $0$ or $1$, represents person $i$'s flu status. The goal is to release (an approximation to) $\sum_i X_i$, the number of infected people, while ensuring privacy against an adversary who wishes to detect whether a particular person Alice in the database has flu. The database includes people who interact socially, and hence the flu statuses are highly correlated.  Additionally, the decision to participate in the database is made at a group-level (for example, workplace-level or school-level), so individuals do not control their participation.

\subsection{The Privacy Framework}
\label{sec:pfframework}

The privacy framework of our choice is Pufferfish~\cite{KM12}, an elegant generalization of differential privacy~\cite{DMNS06}. A Pufferfish framework is instantiated by three parameters -- a set $\calS$ of secrets, a set $\calQ \subseteq \calS \times \calS$ of secret pairs, and a class of data distributions $\Theta$. $\calS$ is the set of possible facts about the database that we might wish to hide, and could refer to a single individual's private data or part thereof. $\calQ$ is the set of secret pairs that we wish to be indistinguishable. Finally, $\Theta$ is a set of distributions that can plausibly generate the data, and controls the amount and nature of correlation. Each $\theta \in \Theta$ represents a belief an adversary may hold about the data, and the goal of the privacy framework is to ensure indistinguishability in the face of these beliefs.

\begin{definition}
A privacy mechanism $M$ is said to be $\epsilon$-Pufferfish private in a framework $(\calS, \calQ, \Theta)$ if for all $\theta \in \Theta$ with $X$ drawn from distribution $\theta$, for all secret pairs $(s_i, s_j) \in \calQ$, and for all $w \in \text{Range}(M)$, we have
\begin{equation} \label{eqn:defpf}
e^{-\epsilon} \leq \frac{P_{M,\theta}(M(X)=w|s_i, \theta)}{P_{M,\theta}(M(X)=w|s_j, \theta)} \leq e^{\epsilon} 
\end{equation}
when $s_i$ and $s_j$ are such that $P(s_i|\theta) \neq 0$, $P(s_j | \theta) \neq 0.$
\end{definition}
Readers familiar with differential privacy will observe that unlike differential privacy, the probability in~\eqref{eqn:defpf} is with respect to the randomized mechanism {\em{and}} the actual data $X$, which is drawn from a $\theta \in \Theta$; to emphasize this, we use the notation $X$ instead of $D$.  

\subsubsection{Properties of Pufferfish} 
An alternative interpretation of~\eqref{eqn:defpf} is that for $X \sim \theta$, $\theta \in \Theta$, for all $(s_i, s_j) \in \calQ$, and for all $w \in \text{Range}(M)$, we have:  
\begin{equation} \label{eqn:postprior}
e^{-\epsilon} \cdot \frac{P(s_i|\theta)}{P(s_j|\theta)} \leq \frac{P(s_i|M(X) = w, \theta)}{P(s_j|M(X) = w, \theta)}  \leq e^{\epsilon} \cdot \frac{P(s_i|\theta)}{P(s_j|\theta)}.
\end{equation}
In other words, knowledge of $M(X)$ does not affect the posterior ratio of the likelihood of $s_i$ and $s_j$, compared to the initial belief. 

\cite{KM12} shows that Differential Privacy is a special case of Pufferfish, where $\calS$ is the set of all facts of the form $s^i_x =$ {\em{Person $i$ has value $x$}} for $i \in \{ 1, \ldots, n\}$ and $x$ in a domain $\calX$, $\calQ$ is set of all pairs $(s^i_x, s^i_z)$ for $x$ and $z$ in $\calX$ with $x\neq z$, and $\Theta$ is the set of all distributions where each individual is distributed independently. Moreover, we cannot have both privacy and utility when $\Theta$ is the set of all distributions~\cite{KM12}.  Consequently, it is essential to select $\Theta$ wisely; if $\Theta$ is too restrictive, then we may not have privacy against legitimate adversaries, and if $\Theta$ is too large, then the resulting privacy mechanisms may have little utility.

Finally, Pufferfish privacy does not always compose~\cite{KM12} -- in the sense that the privacy guarantees may not decay gracefully as more computations are carried out on the same data. However, some of our privacy mechanisms themselves have good composition properties -- see Section~\ref{sec:composition} for more details.

A related framework is Group Differential Privacy~\cite{DR14}, which applies to databases with groups of correlated records. 

\begin{definition}[Group Differential Privacy]\label{def:groupdp}
Let $D$ be a database with $n$ records, and let $\calG = \{ G_1, \ldots, G_k \}$ be a collection of subsets $G_i \subseteq \{ 1, \ldots, n \}$. A privacy mechanism $M$ is said to be $\epsilon$-group differentially private with respect to $\calG$ if for all $G_i$ in $\calG$, for all pairs of databases $D$ and $D'$ which differ in the private values of the individuals in $G_i$, and for all $S \subseteq Range(M)$, we have:
\[ \Pr(M(D) \in S) \leq e^{\epsilon} \Pr(M(D') \in S). \]
\end{definition}

In other words, Definition~\ref{def:groupdp} implies that the participation of each group $G_i$ in the dataset does not make a substantial difference to the outcome of mechanism $M$.

\subsection{Examples}
\label{sec:examplemodel}

\kc{We next show what Pufferfish frameworks would look like for our running examples, and argue why this framework would provide effective privacy.}

We next illustrate how instantiating these examples in Pufferfish would provide effective privacy and utility.

\kc{\mypara{Example 1: Physical Activity Measurement.} Concretely, to model the physical activity example, we set $\calS$ to be the set $\{ s^{t}_a: t = 1,\ldots, T, a \in A \}$, where $A$ is the set of all activities and $s^{t}_a$ means that activity $a$ happens at time $t$.  $\calQ$ is the set of all pairs $(s^{t}_a, s^{t}_b)$ for $a$, $b$ in $A$ and for all $t$. Finally, $\Theta$ is a set of time series models that capture how people switch between activities. For example, a plausible $\Theta$ is a set of Markov Chains $X_1 \rightarrow X_2 \rightarrow \ldots \rightarrow X_T$ where each $X_t$ takes values in $A$. Recall that such a Markov Chain may be fully described by a tuple $(q, P)$ where $q$ is an initial distribution and $P$ is a transition matrix; each $\theta \in \Theta$ thus corresponds to a tuple $(q_{\theta}, P_{\theta})$. \\}

\mypara{Example 1: Physical Activity Measurement.} Let $A$ be the set of all activities -- for example, $\{${\em{walking}}, {\em{running}}, {\em{sitting}}$\}$ -- and let $s^t_a$ denote the event that activity $a$ occurs at time $t$ -- namely, $X_t = a$. In the Pufferfish instantiation, we set $\calS$ as $\{ s^{t}_a: t = 1,\ldots, T, a \in A \}$ -- so the activity at each time $t$ is a secret. $\calQ$ is the set of all pairs $(s^{t}_a, s^{t}_b)$ for $a$, $b$ in $A$ and for all $t$; in other words, the adversary cannot distinguish whether the subject is engaging in activity $a$ or $b$ at any time $t$ for all pairs $a$ and $b$. Finally, $\Theta$ is a set of time series models that capture how people switch between activities. A plausible modeling decision is to restrict $\Theta$ to be a set of Markov Chains $X_1 \rightarrow X_2 \rightarrow \ldots \rightarrow X_T$ where each state $X_t$ is an activity in $A$. Each such Markov Chain can be described by an initial distribution $q$ and a transition matrix $P$. For example, when there are three activities, $\Theta$ can be the set:
\[ \Bigg{\{} \left( \begin{bmatrix} 1 \\ 0 \\ 0 \end{bmatrix}, \begin{bmatrix} 0.1 & 0.5 & 0.4 \\ 0.4 & 0.3 & 0.3 \\ 0.3 & 0.3 & 0.4  \end{bmatrix} \right),
 \left( \begin{bmatrix} 0 \\ 1 \\ 0 \end{bmatrix}, \begin{bmatrix} 0.9 & 0.1 & 0.0 \\ 0.0 & 0.9 & 0.1 \\ 0.1 & 0.0 & 0.9  \end{bmatrix} \right)  \Bigg{\}} .\]

In this example, 
\begin{itemize}\itemsep=1pt\parskip=1pt
\item Differential privacy does not directly apply since we have a single person's data.
\item Entry differential privacy~\cite{HR13, KM12} and coupled worlds privacy~\cite{BS13} add noise with standard deviation $\approx 1/\epsilon$ to each bin of the activity histogram. However, this does not erase the evidence of an activity at time $t$. Moreover, an activity record does not have its agency, and hence its not enough to hide its participation.
\item As all entries are correlated, group differential privacy adds noise with standard deviation $\approx T/\epsilon$, where $T$ is the length of the chain; this destroys all utility.
\end{itemize}
In contrast, in Section~\ref{sec:bn} we show an $\epsilon$-Pufferfish mechanism which will add noise approximately equal to the mixing time over $\epsilon$, and thus offer both privacy and utility for rapidly mixing chains.

\kc{\mypara{Example 2: Flu Status over Social Network.} Here, $\calS$ is the set $\{ s^{i}_0, s^{i}_1: i = 1, \ldots, n\}$, where $s^{i}_j$ means person $i$ has disease status $j$ ($0$ or $1$).  $\calQ$ is $\{ (s^{i}_0, s^{i}_1) : i = 1, \ldots, n \}$, which means we wish the flu status of each person to be hidden. $\Theta$ is a set of models that describe the spread of flu; a plausible $\Theta$ is a graph $G = (X, E)$ that describes personal interactions, and a set of probability distributions over the connected components of $G$ that describes how flu spreads among interacting people.
 We assume that the decision to participate or not is made at a group-level.\\}

\mypara{Example 2: Flu Status over Social Network.} Let $s^i_0$ (resp. $s^i_1$) denote the event that person $i$'s flu status is $0$ (resp. $1$). In the Pufferfish instantiation, we let $\calS$ be the set $\{ s^{i}_0, s^{i}_1: i = 1, \ldots, n\}$ -- so the flu status of each individual is a secret.  $\calQ$ is $\{ (s^{i}_0, s^{i}_1) : i = 1, \ldots, n \}$ -- so the adversary cannot tell if each person $i$ has flu or not. $\Theta$ is a set of models that describe the spread of flu; a plausible $\theta \in \Theta$ is a tuple $(G_{\theta}, p_{\theta})$ where $G_{\theta} = (X, E)$ is a graph of personal interactions, and $p_{\theta}$ is a probability distributions over the connected components of $G_{\theta}$. As a concrete example, $G_{\theta}$ could be an union of cliques $C_1, \ldots, C_k$, and $p_{\theta}$ could be the following distribution on the number $N$ of infected people in each clique $C_i$: $\Pr(N = j) =  e^{2 j} / \sum_{i=0}^{|C_i|} e^{2 i}, j = 0, \ldots, |C_i|.$

Similarly, for the flu status example, we have:
\begin{itemize}\itemsep=1pt\parskip=1pt
\item Both differential privacy and coupled worlds privacy add noise with standard deviation $\approx 1/\epsilon$ to the number of infected people. This will hide whether an Alice participates in the data, but if flu is contagious, then it is not enough to hide evidence of Alice's flu status.  Note that unlike differential privacy, as the decision to participate is made at group level, Alice has no agency over whether she participates, and hence we cannot argue it is enough to hide her participation.

\item Group differential privacy will add noise proportional to the size of the largest connected component; this may result in loss of utility if this component is large, even if the ``average spread'' of flu is low.
\end{itemize}

Again, we can achieve Pufferfish privacy by adding noise proportional to the ``average spread" of flu which may be less noise than group differential privacy. For a concrete numerical example, see Section~\ref{sec:wasserstein}.

\subsection{Guarantee Against Close Adveraries}
\label{sec:guaranteepf}
A natural question is what happens when we offer Pufferfish privacy with respect to a distribution class $\Theta$, but the adversary's belief $\tilde{\theta}$ does not lie in $\Theta$. Our first result is to show that the loss in privacy is not too large if $\tilde{\theta}$ is close to $\Theta$ conditioned on each secret $s_i \in S$, provided closeness is quantified according to a measure called max-divergence. 
\begin{definition}[max-divergence]
Let $p$ and $q$ be two distributions with the same support. The max-divergence $D_{\infty}(p || q)$ between them is defined as: 
\[ D_{\infty}(p || q) = \sup_{x \in \text{support}(p)} \log \frac{p(x)}{q(x)}.\]
\end{definition}

For example, suppose $p$ and $q$ are distributions over $\{1, 2, 3\}$ such that $p$ assigns probabilities $\{1/3, 1/2, 1/6\}$ and $q$ assigns probabilities $\{1/2, 1/4, 1/4\}$ to $1$, $2$ and $3$ respectively. Then, $D_{\infty}(p || q) = \log 2$. Max-divergence belongs to the family of Renyi-divergences~\cite{CoverThomas}, which also includes the popular Kullback-Leibler divergence. 

\mypara{Notation.} Given a belief distribution $\theta$ (which may or may not be in the distribution class $\Theta$), we use the notation $\theta_{|s_i}$ to denote the conditional distribution of $\theta$ given secret $s_i$. 

\begin{theorem} \label{thm:Pufferfishsmoothness}
Let $M$ be a mechanism that is $\epsilon$-Pufferfish private with respect to parameters $(\calS, \calQ, \Theta)$ and suppose that $\Theta$ is a closed set. Suppose an adversary has beliefs represented by a distribution $\tilde{\theta} \notin \Theta$. Then,
\[ e^{-\epsilon - 2\Delta} \leq \frac{P_{M,\theta}(M(X)=w|s_i, \theta)}{P_{M,\theta}(M(X)=w|s_j, \theta)} \leq e^{\epsilon + 2\Delta} \]
where
\[ \Delta = \inf_{\theta \in \Theta} \max_{s_i \in S} \max\left(D_{\infty}( \tilde{\theta}_{|s_i} || \theta_{| s_i}), D_{\infty}( \theta_{|s_i} || \tilde{\theta}_{| s_i})\right) .\]
\end{theorem}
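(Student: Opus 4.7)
The plan is to reduce the bound under the adversary's belief $\tilde\theta$ to the Pufferfish guarantee under some $\theta \in \Theta$, paying a multiplicative factor controlled by the max-divergence between $\tilde\theta_{|s_i}$ and $\theta_{|s_i}$ on both sides of the ratio. The key observation is that conditioning on $s_i$ and averaging out the data gives
\[
\Pr\nolimits_{M,\tilde\theta}(M(X)=w\mid s_i,\tilde\theta) \;=\; \mathbb{E}_{x\sim \tilde\theta_{|s_i}}\bigl[\Pr(M(x)=w)\bigr],
\]
so the dependence on the belief enters only through the conditional data distribution.

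First, I would fix an arbitrary $\theta\in\Theta$ (using closedness at the end to realize the infimum) and compare $\tilde\theta_{|s_i}$ with $\theta_{|s_i}$ pointwise. By definition of max-divergence, for every $x$ in the support,
\[
\tilde\theta_{|s_i}(x) \;\le\; e^{D_\infty(\tilde\theta_{|s_i}\|\theta_{|s_i})}\;\theta_{|s_i}(x),
\qquad
\theta_{|s_j}(x) \;\le\; e^{D_\infty(\theta_{|s_j}\|\tilde\theta_{|s_j})}\;\tilde\theta_{|s_j}(x).
\]
Integrating the nonnegative function $\Pr(M(x)=w)$ against these inequalities lifts them to the marginals:
\[
\Pr(M(X)=w\mid s_i,\tilde\theta) \;\le\; e^{D_\infty(\tilde\theta_{|s_i}\|\theta_{|s_i})}\;\Pr(M(X)=w\mid s_i,\theta),
\]
and symmetrically
\[
\Pr(M(X)=w\mid s_j,\tilde\theta) \;\ge\; e^{-D_\infty(\theta_{|s_j}\|\tilde\theta_{|s_j})}\;\Pr(M(X)=w\mid s_j,\theta).
\]

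Next, I would divide these two estimates and invoke the hypothesis that $M$ is $\epsilon$-Pufferfish private with respect to $\Theta$, which bounds the $\theta$-ratio by $e^\epsilon$. The combined bound is
\[
\frac{\Pr(M(X)=w\mid s_i,\tilde\theta)}{\Pr(M(X)=w\mid s_j,\tilde\theta)}
\;\le\; \exp\!\Bigl(\epsilon + D_\infty(\tilde\theta_{|s_i}\|\theta_{|s_i}) + D_\infty(\theta_{|s_j}\|\tilde\theta_{|s_j})\Bigr).
\]
Since the two divergences each are at most $\max_{s\in\calS}\max\!\bigl(D_\infty(\tilde\theta_{|s}\|\theta_{|s}), D_\infty(\theta_{|s}\|\tilde\theta_{|s})\bigr)$, their sum is at most $2$ times this maximum. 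Because $\theta$ was arbitrary and $\Theta$ is closed, I can take the infimum over $\theta\in\Theta$ to replace the exponent by $\epsilon + 2\Delta$. The lower bound $e^{-\epsilon-2\Delta}$ follows by swapping the roles of $s_i$ and $s_j$.

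The only subtle step is handling the infimum cleanly: I would note that for each $\theta\in\Theta$ the displayed inequality holds, so the ratio is bounded by the infimum of the right-hand side over $\theta$; closedness of $\Theta$ ensures this infimum is attained (or, at worst, approached by a limiting argument if the divergences are continuous in $\theta$), matching the stated $\Delta$. Everything else is routine manipulation of the max-divergence inequality and the Pufferfish definition; the main conceptual point is that max-divergence is exactly the right notion because its pointwise likelihood-ratio bound survives integration against the nonnegative kernel $\Pr(M(x)=w)$.
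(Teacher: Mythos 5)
Your proposal is correct and follows essentially the same route as the paper's proof: the paper factors the $\tilde\theta$-ratio into three terms (two change-of-belief ratios and one $\theta$-ratio handled by the Pufferfish guarantee) and bounds each change-of-belief ratio by $e^{\Delta}$ via the same observation you use, namely that a ratio of integrals of the nonnegative kernel $\Pr(M(x)=w)$ against two conditional data distributions is controlled by the pointwise likelihood-ratio bound that defines max-divergence. Your handling of the infimum via closedness of $\Theta$ also matches the paper's (brief) treatment.
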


The conditional dependence on $s_i$ cannot be removed unless additional conditions are met. To see this, suppose $\theta$ places probability mass $\{ 0.9, 0.05, 0.05 \}$ and $\tilde{\theta}$ places probability mass $\{0.01, 0.95, 0.04\}$ on databases $D_1$, $D_2$ and $D_3$ respectively. In this case, $\max\{D_{\infty}(\theta || \tilde{\theta}), D_{\infty}(\tilde{\theta}\|\theta)\} = \log 90$. Suppose now that conditioning on $s_i$ tells us that the probability of occurrence of $D_3$ is $0$, but leaves the relative probabilities of $D_1$ and $D_2$ unchanged. Then $\theta_{|s_i}$ places probability mass $\{ 0.9474, 0.0526 \}$ while $\tilde{\theta}_{|s_i}$ places probability mass $\{0.0104, 0.9896\}$ on $D_1$ and $D_2$ respectively. In this case, $\max\{D_{\infty}(\theta_{|s_i} || \tilde{\theta}_{|s_i}), D_{\infty}(\tilde{\theta}_{|s_i} \| \theta_{|s_i})\} = \log 91.0962$ which is larger than $\max\{D_{\infty}(\theta || \tilde{\theta}), D_{\infty}(\tilde{\theta}\|\theta)\}$.

\subsection{Additional Notation}

We conclude this section with some additional definitions and notation that we will use throughout the  paper.

\begin{definition}[Lipschitz]
A query $F: \calX^n \rightarrow \bbR^k$ is said to be $L$-Lipschitz in $L_1$ norm if for all $i$ and $X_i,X'_i\in\calX$,
\[ \| F(X_1, \ldots, X_i, \ldots, X_n) - F(X_1, \ldots, X'_i, \ldots, X_n)\|_1 \leq L. \]
This means that changing one record out of $n$ in a database changes the $L_1$ norm of the query output by at most $L$. 
\end{definition}

For example, if $F(X_1, \ldots, X_n)$ is a (vector-valued) histogram over records $X_1, \ldots, X_n$, then $F$ is $2$-Lipschitz in the $L_1$ norm, as changing a single $X_i$ can affect the count of at most two bins.

We use the notation $\Lap(\sigma)$ to denote a Laplace distribution with mean $0$ and scale parameter $\sigma$. Recall that this distribution has the density function: $h(x) = \frac{1}{2\sigma} e^{-|x|/\sigma}$. Additionally, we use the notation $\card{S}$ to denote the cardinality, or, the number of items in a set $S$.

\section{A General Mechanism}
\label{sec:wasserstein}

While a number of mechanisms for specific Pufferfish instantiations are known~\cite{KM12, DKM14}, there is no mechanism that applies to any general Pufferfish instantiation. We next provide the first such mechanism. Given a database represented by random variable $X$, a Pufferfish instantiation $(\calS, \calQ, \Theta)$, and a query $F$ that maps $X$ into a scalar, we design mechanism $M$ that satisfies $\epsilon$-Pufferfish privacy in this instantiation and approximates $F(X)$.  

Our proposed mechanism is inspired by the Laplace mechanism in differential privacy; the latter adds noise to the result of the query $F$ proportional to the sensitivity, which is the worst case distance between $F(D)$ and $F(D')$ where $D$ and $D'$ are two databases that differ in the value of a single individual. In Pufferfish, the quantities analogous to $D$ and $D'$ are the distributions $P(F(X) | s_i, \theta)$ and $P(F(X) | s_j, \theta)$ for a secret pair $(s_i, s_j)$, and therefore, the added noise should be proportional to the worst case distance between these two distributions according to some metric.

\mypara{Wasserstein Distances.} It turns out that the relevant metric is the $\infty$-Wasserstein distance -- a modification of the Earthmover's distance used in information retrieval and computer vision. 
\begin{definition}[$\infty$-Wasserstein Distance]
Let $\mu$, $\nu$ be two probability distributions on $\bbR$, and let $\Gamma(\mu, \nu)$ be the set of all joint distributions with marginals $\mu$ and $\nu$. The $\infty$-Wasserstein distance between $\mu$ and $\nu$ is defined as:
\begin{align}\label{eqn:Wdefn}
W_\infty (\mu, \nu) = \inf_{\gamma \in \Gamma(\mu, \nu)} \max_{(x,y)\in support(\gamma)} |x - y|.
\end{align}
\end{definition}
Intuitively, each $\gamma \in \Gamma(\mu, \nu)$ is a way to {\em{shift}} probability mass between $\mu$ and $\nu$; the cost of a shift $\gamma$ is: \\ $\max_{(x,y)\in support(\gamma)}|x - y|$, and the cost of the min-cost shift is the $\infty$-Wasserstein distance.  It can be interpreted as the maximum ``distance'' that any probability mass moves while transforming $\mu$ to $\nu$ in the most optimal way possible. A pictorial example is shown in Figure~\ref{fig:ws}.

\begin{figure}[!ht]
    \centering
        \includegraphics[height=0.13\textwidth]{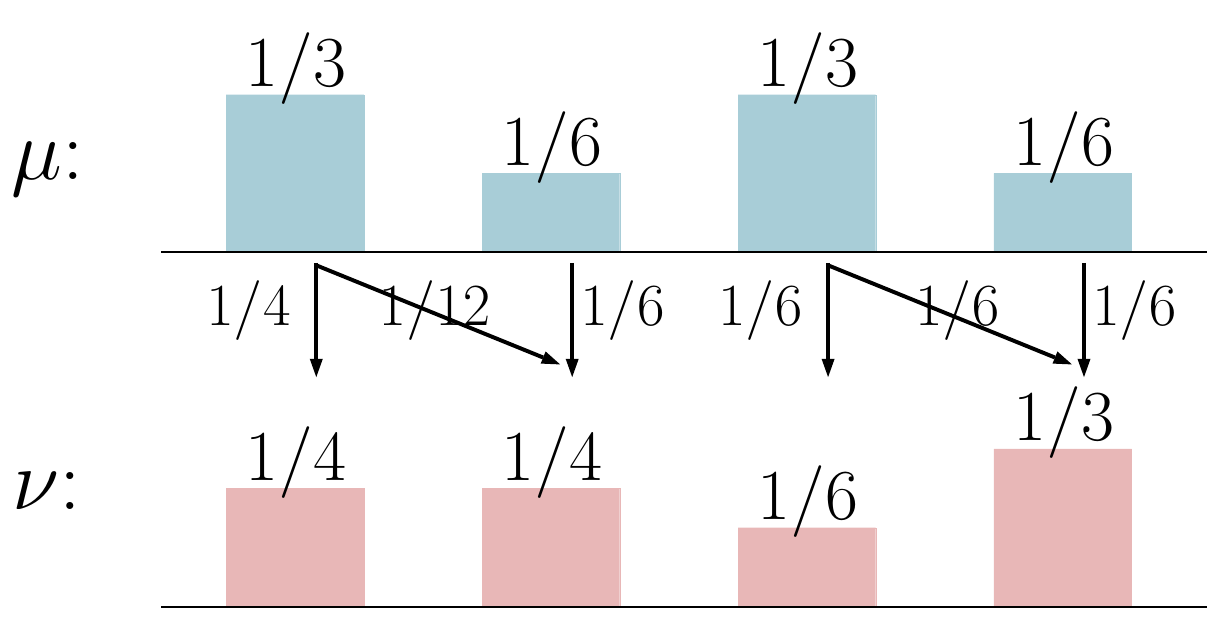}
\caption{An example of $\infty$-Wasserstein distance. The frequency distributions represent two measures $\mu$ and $\nu$, and the arrows represent the joint distribution $\gamma$ that minimizes the right hand side of~\eqref{eqn:Wdefn}. Here, $W_\infty(\mu,\nu) = 1$.}
    \label{fig:ws}
\end{figure}

\subsection{The Wasserstein Mechanism}

The main intuition behind our mechanism is based on this interpretation. Suppose for some $(s_i, s_j)$ and $\theta$, we would like to transform $P(F(X) | s_i, \theta)$ to $P(F(X) | s_j, \theta)$. Then, the maximum ``distance'' that any probability mass moves is 
\[ W_{i, j, \theta} = W_{\infty}\left( P(F(X) | s_i, \theta), P(F(X) | s_j, \theta)\right), \]
and adding Laplace noise with scale $W_{i, j, \theta}/\epsilon$ to $F$ will guarantee that the likelihood ratio of the outputs under $s_i$ and $s_j$ lies in $[e^{-\epsilon}, e^{\epsilon}]$. Iterating over all pairs $(s_i, s_j) \in \calQ$ and all $\theta \in \Theta$ and taking the maximum over $W_{i, j, \theta}$ leads to a mechanism for the entire instantiation $(\calS, \calQ, \Theta)$. 

The full mechanism is described in Algorithm~\ref{alg:wasserstein}, and its privacy properties in Theorem~\ref{thm:wp}. Observe that when Pufferfish reduces to differential privacy, then the corresponding Wasserstein Mechanism reduces to the Laplace mechanism; it is thus a generalization of the Laplace mechanism.

\begin{algorithm}
\caption{Wasserstein Mechanism (Database $D$, query $F$, Pufferfish framework $(\calS, \calQ, \Theta)$, privacy parameter $\epsilon$)}
\label{alg:wasserstein}
\begin{algorithmic}
\FOR {all $(s_i, s_j) \in \calQ$ and all $\theta \in \Theta$ such that $P(s_i | \theta) \neq 0$ and $P(s_j | \theta) \neq 0$}
\STATE{Set $\mu_{i, \theta} = P(F(X) = \cdot | s_i, \theta)$, $\mu_{j, \theta} = P(F(X) = \cdot | s_j, \theta)$. Calculate $W_{\infty}(\mu_{i,\theta}, \mu_{j,\theta})$}
\ENDFOR
 \STATE {Set $W = \sup_{(s_i, s_j) \in \calQ, \theta \in \Theta} W_{\infty}(\mu_{i,\theta}, \mu_{j,\theta}).$}
\STATE{\textbf{return} $F(D) + Z$, where $Z \sim \Lap(\frac{W}{\epsilon})$}
\end{algorithmic}
\end{algorithm}

\kcx{\mypara{Example.} Consider our flu status example with a social interaction graph of size $4$. Suppose that the number $N$ of infected individuals is described by the distribution: for any individual $i$, let $X_i$ be its disease status. Then, $\Pr(N = 0| X_i = 0) = 1/2$, $\Pr(N = j| X_i = 0) = 1/6$ for $j \in [3]$, and moreover, $\Pr(N = j | X_i = 1) = 1/4$ for $j \in [4]$. It is easy to show that the  parameter $W$ in Algorithm \ref{alg:wasserstein} is $2$, and the Wasserstein Mechanism  adds $\Lap(2/\epsilon)$ noise to the number of infected individuals. As all the $X_i$'s are correlated, the global sensitivity mechanism with Group Differential Privacy would add $\Lap(4/\epsilon)$ noise, which gives worse utility.} 

\mypara{Example.} 
Consider a Pufferfish instantiation of the flu status application. Suppose that the database has size $4$, and $\Theta = \{ (G_{\theta}, p_{\theta}) \}$ where $G_{\theta}$ is a clique on $4$ nodes, and $p_\theta$ is the following symmetric joint distribution on the number $N$ of infected individuals:
\setlength\tabcolsep{3pt}
\begin{center}
\begin{tabular}{c||c|c|c|c|c}\hline
$i$			&	$0$		&	$1$		&	$2$		&	$3$		&	$4$		\\ \hline
$P(N=j)$	&	$0.1$	&	$0.15$	&	$0.5$	&	$0.15$	&	$0.1$	\\ \hline
\end{tabular}
\end{center}
From symmetry, this induces the following conditional distributions:

\begin{center}
\begin{tabular}{c||c|c|c|c|c}\hline
$j$				&	$0$		&	$1$		&	$2$		&	$3$		&	$4$		\\ \hline
$P(N=j|X_i=0)$	&	$0.2$	&	$0.225$	&	$0.5$	&	$0.075$	&	$0$		\\ \hline
$P(N=j|X_i=1)$	&	$0$		&	$0.075$	&	$0.5$	&	$0.225$	&	$0.2$	\\ \hline
\end{tabular}
\end{center}

In this case, the  parameter $W$ in Algorithm \ref{alg:wasserstein} is $2$, and the Wasserstein Mechanism will add $\Lap(2/\epsilon)$ noise to the number of infected individuals. As all the $X_i$'s are correlated, the sensitivity mechanism with Group Differential Privacy would add $\Lap(4/\epsilon)$ noise, which gives worse utility.

\subsection{Performance Guarantees}
\begin{theorem}[Wasserstein Privacy]
\label{thm:wp}
The Wasserstein Mechanism provides $\epsilon$-Pufferfish privacy in the framework $(\calS, \calQ, \Theta)$.
\end{theorem}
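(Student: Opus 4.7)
The plan is to reduce the theorem to a direct computation using the optimal coupling implicit in the definition of the $\infty$-Wasserstein distance, combined with the shift-stability of the Laplace density. Fix any $\theta \in \Theta$ and any $(s_i, s_j) \in \calQ$ such that $P(s_i \mid \theta), P(s_j \mid \theta) \neq 0$. Write $\mu_i = P(F(X)=\cdot \mid s_i, \theta)$ and $\mu_j = P(F(X)=\cdot \mid s_j, \theta)$, so by construction of the algorithm $W_\infty(\mu_i,\mu_j) \leq W$. Since $M(X) = F(X) + Z$ with $Z \sim \Lap(W/\epsilon)$, the conditional density of the output satisfies
\[ P_{M,\theta}(M(X) = w \mid s_k, \theta) \;=\; \int h_{W/\epsilon}(w - v)\, d\mu_k(v), \qquad k \in \{i,j\}, \]
where $h_\sigma(x) = \tfrac{1}{2\sigma} e^{-|x|/\sigma}$ is the Laplace density.

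The key step is to rewrite the density for $s_i$ in terms of a coupling so that every integrand can be compared against the corresponding one for $s_j$. For any $\delta > 0$, pick a coupling $\gamma_\delta \in \Gamma(\mu_i, \mu_j)$ achieving $\max_{(u,v)\in\mathrm{supp}(\gamma_\delta)}|u - v| \leq W + \delta$ (such a $\gamma_\delta$ exists by the definition of the infimum in $W_\infty$; if the infimum is attained we may take $\delta = 0$). Letting $(U,V) \sim \gamma_\delta$, we have $U \sim \mu_i$ and $V \sim \mu_j$, so
\[ P_{M,\theta}(M(X) = w \mid s_i, \theta) \;=\; \mathbb{E}_{(U,V)\sim\gamma_\delta}\bigl[h_{W/\epsilon}(w - U)\bigr]. \]
Using the pointwise bound $h_\sigma(a) \leq e^{|a-b|/\sigma}\,h_\sigma(b)$ (immediate from the triangle inequality applied to $|a| \leq |b| + |a-b|$) with $\sigma = W/\epsilon$, $a = w - U$, $b = w - V$ gives
\[ h_{W/\epsilon}(w - U) \;\leq\; e^{\epsilon |U - V|/W}\, h_{W/\epsilon}(w - V) \;\leq\; e^{\epsilon(W+\delta)/W}\, h_{W/\epsilon}(w - V) \]
almost surely under $\gamma_\delta$. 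Taking expectations and using that $V \sim \mu_j$ yields
\[ P_{M,\theta}(M(X) = w \mid s_i, \theta) \;\leq\; e^{\epsilon(1 + \delta/W)}\, P_{M,\theta}(M(X) = w \mid s_j, \theta). \]
Letting $\delta \to 0$ gives the upper bound $e^{\epsilon}$, and the matching lower bound $e^{-\epsilon}$ follows by swapping the roles of $s_i$ and $s_j$ (which is legitimate because $W_\infty$ is symmetric, so the same $W$ works for the pair $(s_j,s_i)$). Since $\theta$ and $(s_i,s_j)$ were arbitrary, this establishes $\epsilon$-Pufferfish privacy.

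The only subtlety I anticipate is the existence of a near-optimal coupling with bounded support when $W_\infty$ is defined as an infimum rather than a minimum; I handle this above via the $\delta$-slack argument, which is why the bound is proved for every $\delta > 0$ and then the limit is taken. The rest is mechanical: the Laplace shift inequality and linearity of expectation do all the work, and the supremum over $(s_i, s_j) \in \calQ$ and $\theta \in \Theta$ in the definition of $W$ ensures the argument is uniform over the framework.
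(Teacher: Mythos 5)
Your proof is correct and follows essentially the same route as the paper's: rewrite the output density via a coupling between $P(F(X)\mid s_i,\theta)$ and $P(F(X)\mid s_j,\theta)$ whose support has displacement at most $W$, then apply the shift-stability of the Laplace density $h_\sigma(a)\leq e^{|a-b|/\sigma}h_\sigma(b)$. The only difference is that you handle the possible non-attainment of the infimum in $W_\infty$ via a $\delta$-slack argument, whereas the paper simply takes $\gamma^*$ to be an achieving coupling; your version is marginally more careful on that technical point.
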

\mypara{Utility.} Because of the extreme generality of the Pufferfish framework, it is difficult to make general statements about the utility of the Wasserstein mechanism. However, we show that the phenomenon illustrated by the flu status example is quite general. When $X$ is written as $X = (X_1, \ldots, X_n)$ where $X_i$ is the $i$-th individual's private value, and the goal is to keep each individual's value private, the Wasserstein Mechanism for Pufferfish never performs worse than the Laplace mechanism for the corresponding group differential privacy framework. The proof is in the Appendix~\ref{proof:wsvsgdp}.

\begin{theorem}[Wasserstein Utility] \label{thm:wsvsgdp}
Let $(\calS, \calQ, \Theta)$ be a Pufferfish framework, and let $\calG$ be the corresponding group differential privacy framework (so that $\calG$ includes a group $G$ for each set of correlated individuals in $\Theta$). Then, for a $L$-Lipschitz query $F$, the parameter $W$ in the Wasserstein Mechanism 
is less than or equal to the global sensitivity of $F$ in the $\calG$-group differential privacy framework.
\end{theorem}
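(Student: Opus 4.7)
The plan is to prove the bound pointwise: for every pair $(s_i, s_j) \in \calQ$ and every $\theta \in \Theta$, I will argue $W_\infty(\mu_{i,\theta}, \mu_{j,\theta}) \le \text{GS}_\calG(F)$, where $\text{GS}_\calG(F)$ denotes the $\calG$-group-DP global sensitivity of $F$. Taking the supremum over $(s_i, s_j) \in \calQ$ and $\theta \in \Theta$ then yields $W \le \text{GS}_\calG(F)$, which is the claim.

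Fix such a triple. Because the secrets refer to individual entries of $X = (X_1, \ldots, X_n)$, we may write $s_i = \{X_k = x\}$ and $s_j = \{X_k = z\}$ for some index $k$ and values $x \ne z$. Let $G \in \calG$ be the group containing $k$ -- i.e., the set of individuals correlated with $k$ under $\theta$. By the construction of $\calG$ as the corresponding group-DP framework, individuals outside $G$ are independent of those inside $G$ under $\theta$; in particular, conditioning on $X_k$ leaves the marginal law of $X_{-G}$ unchanged.

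The key step is to build an explicit coupling of the two conditional laws of $X$. Sample $Y$ from $P(X_{-G} \mid \theta)$, and then conditional on $Y$ draw $X^{(1)}_G \sim P(X_G \mid X_k = x, X_{-G} = Y, \theta)$ and independently $X^{(2)}_G \sim P(X_G \mid X_k = z, X_{-G} = Y, \theta)$, setting $X^{(1)}_{-G} = X^{(2)}_{-G} = Y$. By the independence noted above, the marginal of $X^{(1)}$ is $P(X \mid s_i, \theta)$ and that of $X^{(2)}$ is $P(X \mid s_j, \theta)$. Moreover $X^{(1)}$ and $X^{(2)}$ agree on every coordinate outside $G$, so they are databases differing only on the group $G$. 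Consequently $|F(X^{(1)}) - F(X^{(2)})| \le \text{GS}_\calG(F)$ almost surely. Pushing this joint law forward through $F$ produces a $\gamma \in \Gamma(\mu_{i,\theta}, \mu_{j,\theta})$ supported on $\{(a,b) : |a-b| \le \text{GS}_\calG(F)\}$, and applying the infimum in the definition of $W_\infty$ gives $W_\infty(\mu_{i,\theta}, \mu_{j,\theta}) \le \text{GS}_\calG(F)$.

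The main obstacle -- and where the argument leans hardest on the framework setup -- is the existence of this coupling. It relies on the structural assumption built into the phrase ``corresponding group differential privacy framework'': every pair of individuals that are correlated under some $\theta \in \Theta$ must lie in a common group of $\calG$, so that conditioning on a single $X_k$ cannot disturb the joint law of the out-of-group coordinates. The Lipschitz hypothesis on $F$ is not used directly in the inequality; it only guarantees $\text{GS}_\calG(F)$ is finite and, if one wishes an explicit bound, at most $L \max_{G \in \calG} |G|$.
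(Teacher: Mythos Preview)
Your argument is correct. Both proofs rest on the same structural fact---that the out-of-group coordinates $X_{-G}$ are independent of $X_k$ under $\theta$, so their law is unaffected by conditioning on the secret---but they package it differently. The paper writes each $\mu_{i,\theta}$ and $\mu_{j,\theta}$ as a mixture over realizations $x_T$ of $X_{-G}$ with \emph{identical} mixing weights $P(X_{-G}=x_T\mid\theta)$, then invokes a separate lemma (stated and proved as Lemma~\ref{lemma:wdofmixture}) asserting that $W_\infty$ of mixtures with shared weights is bounded by the maximum componentwise $W_\infty$; each component pair is then bounded by the diameter of its joint support, which is at most $\Delta_{G_k}F$. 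You instead build the coupling of $\mu_{i,\theta}$ and $\mu_{j,\theta}$ directly by sharing $X_{-G}$ and pushing forward through $F$, which is more self-contained for this particular theorem and sidesteps the need for the auxiliary mixture lemma. The paper's detour buys a reusable statement about Wasserstein distances of mixtures; your route is shorter and arguably cleaner here. Your closing remark that the $L$-Lipschitz hypothesis only enters to ensure finiteness (and give the explicit bound $L\max_G|G|$) is also accurate.
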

\section{A Mechanism for Bayesian Networks}
\label{sec:bn}

The Wasserstein Mechanism, while general, may be computationally expensive. We next consider a more restricted setting where the database $X$ can be written as a collection of variables $X = (X_1, \ldots, X_n)$ whose dependence is described by a Bayesian network, and the goal is to keep the value of each $X_i$ private. This setting is still of practical interest, as it can be applied to physical activity monitoring and power consumption data. 

\subsection{The Setting}
\label{sec:bnframework}
\mypara{Bayesian Networks.} Bayesian networks are a class of probabilistic models that are commonly used to model dependencies among random variables or vectors, and include popular models such as Markov Chains and trees. A Bayesian network is described by a set of variables $X = \{ X_1, \ldots, X_n\}$ (where each $X_i$ is a scalar or a vector) and a {\em{directed acyclic graph}} $G = (X, E)$ whose vertices are variables in $X$. Since $G$ is directed acyclic, its edges $E$ induce a parent-child relationship $\parent$ among the nodes $X$. The probabilistic dependence on $X$ induced by the network can be written as:
\[ \Pr(X_1, \ldots, X_n) = \prod_{i} \Pr(X_i | \parent(X_i)). \]
For example, Figure~\ref{fig:bn} shows a Bayesian Network on $4$ variables, whose joint distribution is described by:
\[ \Pr(X_1, X_2, X_3, X_4) = \Pr(X_1) \Pr(X_2 | X_1) \Pr(X_3 | X_1) \Pr(X_4 | X_2, X_3). \]
A node $X_i$ may have more than one parent, and as such these networks can describe complex probabilistic dependencies.
\begin{figure}[!ht]
\centering
\includegraphics[width=0.13\textwidth]{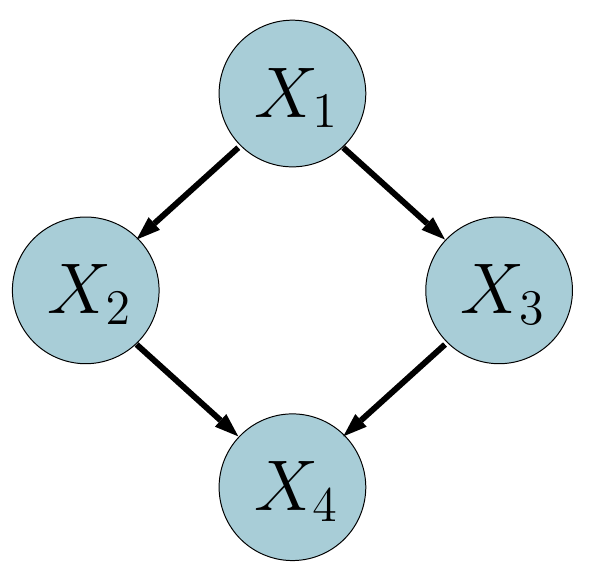}
\caption{A Bayesian Network on four variables.}
\label{fig:bn}
\end{figure}
\vspace{-10pt}

\mypara{The Framework.} Specifically, we assume that the database $X$ can be written as $X = \{ X_1, \ldots, X_n \}$, where each $X_i$ lies in a bounded domain $\calX$. Let $s^i_a$ denote the event that $X_i$ takes value $a$. The set of secrets is $\calS = \{ s^i_a\ : a \in \calX, i \in \kset{n}\}$, and the set of secret pairs is $\calQ = \{ (s^i_a, s^i_b): a, b \in \calX, a\neq b, i \in \kset{n} \}$. We also assume that there is an underlying known Bayesian network $G = (X, E)$ connecting the variables. Each $\theta \in \Theta$ that describes the distribution of the variables is based on this Bayesian network $G$, but may have different parameters. 

For example, the Pufferfish instantiation in Example 1 will fall into this framework, with $n=T$ and $G$ a Markov Chain $X_1 \rightarrow X_2 \rightarrow \ldots X_T$. 

\mypara{Notation.} We use $X$ with a lowercase subscript, for example, $X_i$, to denote a single node in $G$, and $X$ with an uppercase subscript, for example, $X_A$, to denote a set of nodes in $G$. For a set of nodes $X_A$ we use the notation $\card{X_A}$ to denote the number of nodes in $X_A$. 

\subsection{The Markov Quilt Mechanism} 

The main insight behind our mechanism is that if nodes $X_i$ and $X_j$ are ``far apart'' in $G$, then, $X_j$ is largely independent of $X_i$. Thus, to obscure the effect of $X_i$ on the result of a query, it is sufficient to add noise proportional to the number of nodes that are ``local'' to $X_i$ plus a correction term to account for the effect of the distant nodes. The rest of the section will explain how to calculate this correction term, and how to determine how many nodes are local.
 
First, we quantify how much changing the value of a variable $X_i \in X$ can affect a set of variables $X_A \subset X$, where $X_i \notin X_A$ and the dependence is described by a distribution $\theta$ in a class $\Theta$.  To this end, we define the {\em{\effect\ }}of a variable $X_i$ on a set of variables $X_A$ under a distribution class $\Theta$ as follows.
\begin{definition}[\effect] \label{def:effect}
We define the \effect\ of a variable $X_i$ on a set of variables $X_A$ under $\Theta$ as:
\begin{align*}
&e_{\Theta}(X_A | X_i) =\\& \sup_{\theta \in \Theta} \max_{a, b \in \calX}  \max_{x_A \in \calX^{\card{X_A}}} \log \frac{\Pr(X_A = x_A | X_i = a, \theta)}{\Pr(X_A = x_A | X_i = b, \theta)}.
\end{align*}
\end{definition}

Here $\calX$ is the range of any $X_j$. The \effect\ is thus the maximum max-divergence between the distributions $X_A | X_i = a, \theta$ and $X_A | X_i = b, \theta$ where the maximum is taken over any pair $(a, b) \in \calX \times \calX$ and any $\theta \in \Theta$. If $X_A$ and $X_i$ are independent, then the \effect\ of $X_i$ on $X_A$ is $0$, and a large \effect\ means that changing $X_i$ can have a large impact on the distribution of $X_A$.
In a Bayesian network, the \effect\ of any $X_i$ and $X_A$ can be calculated given the probabilistic dependence.

Our mechanism will attempt to find large sets $X_A$ such that $X_i$ has low \effect\ on $X_A$ under $\Theta$. The naive way to do so is through brute force search, which takes time exponential in the size of $G$. We next show how structural properties of the Bayesian network $G$ can be exploited to perform this search more efficiently. 

\mypara{Markov Blankets and Quilts.} For this purpose, we provide a second definition that generalizes the Markov Blanket, a standard notion in probabilistic graphical models~\cite{FriedmanKoller}. The Markov Blanket of a node $X_u$ in a Bayesian network consists of its parents, its children and the other parents of its children, and the rest of the nodes in the network are independent of $X_u$ conditioned on its Markov Blanket. We define its generalization, the Markov Quilt, as follows.
\begin{definition}[Markov Quilt]
\label{def:mquilt}
A set of nodes $X_Q$, $Q \subset \kset{n}$ in a Bayesian network $G = (X, E)$ is a Markov Quilt for a node $X_i$ if the following conditions hold:\\
1. Deleting $X_Q$ partitions $G$ into parts $X_N$ and $X_R$ such that $X = X_N \cup X_Q \cup X_R$ and $X_i \in X_N$.\\
2. For all $x_R \in \calX^{\card{X_R}}$, all $x_Q \in \calX^{\card{X_Q}}$ and for all $a \in \calX$, $P(X_R = x_R | X_Q = x_Q, X_i = a) = P(X_R = x_R | X_Q = x_Q)$.\\
\quad Thus, $X_R$ is independent of $X_i$ conditioned on $X_Q$.
\end{definition}
Intuitively, $X_R$ is a set of ``remote" nodes that are far from $X_i$, and $X_N$ is the set of ``nearby" nodes; $X_N$ and $X_R$ are separated by the Markov Quilt $X_Q$. Observe that unlike Markov Blankets, a node can have many Markov Quilts. Figure \ref{fig:markov blanket quilt} shows an example. We also allow the ``trivial Markov Quilt" with $X_Q = \emptyset$, $X_N = X$ and $X_R = \emptyset$. 

\begin{figure*}[!t]
    \centering
    \begin{subfigure}[b]{0.3395\textwidth}
        \includegraphics[width=\linewidth]{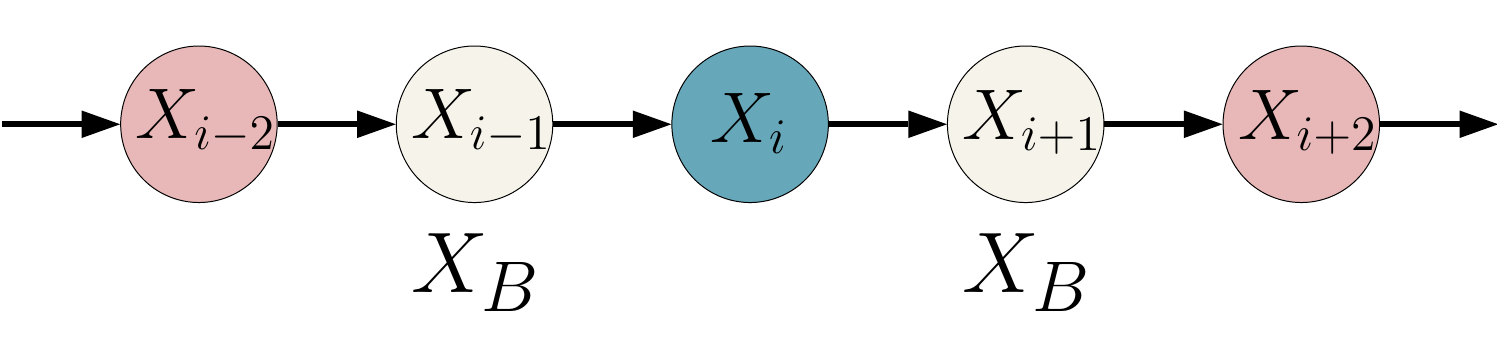}
		\caption{The Markov Blanket of $X_i$ is $X_B=\{X_{i-1}, X_{i+1}\}$}
        \label{fig:markov_blanket}
    \end{subfigure}
    \hfill
    \begin{subfigure}[b]{0.5615\textwidth}
        \includegraphics[width=\linewidth]{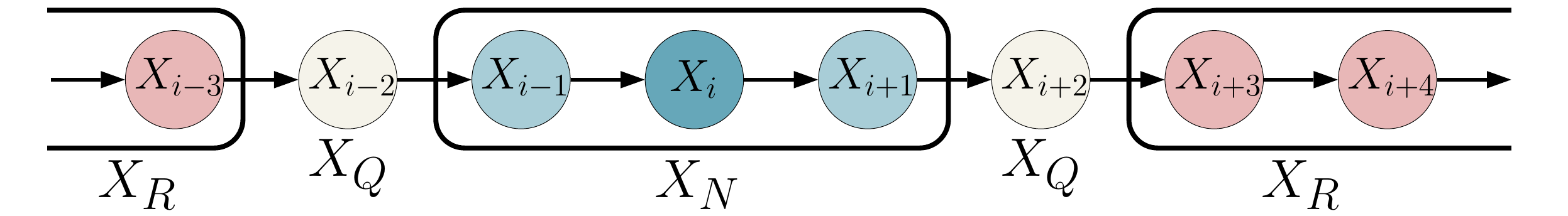}
		\caption{One Markov Quilt of $X_i$ is $X_Q = \{X_{i-2}, X_{i+2}\}$ with corresponding $X_{N} = \{X_{i-1}, X_i, X_{i+1}\}$, $X_{R} = \{\dots,X_{i-3}, X_{i+3},\dots\}$}
        \label{fig:markov_quilt}
    \end{subfigure}
    \caption{Illustration of Markov Blanket and Markov Quilt}
    \label{fig:markov blanket quilt}
\end{figure*}

Suppose we would like to release the result of a $L$-Lipschitz scalar query $F$ while protecting a node $X_i$. If we can find a Markov Quilt $(X_N, X_Q, X_R)$ such that the \effect\ of $X_i$ on $X_Q$ under $\Theta$ is at most $\delta$, then, it is sufficient to add Laplace noise to $F$ with scale parameter $L\cdot\card{X_N}/(\epsilon - \delta)$. This motivates the following mechanism, which we call the Markov Quilt Mechanism. To protect $X_i$, we search over a set $S_{Q, i}$ of Markov Quilts for $X_i$ and pick the one which requires adding the least amount of noise. We then iterate over all $X_i$ and add to $F$ the maximum amount of noise needed to protect any $X_i$; this ensures that the private values of {\em{all nodes}} are protected. Details are presented in Algorithm \ref{alg:mqm}, and Theorem~\ref{thm:mq} establishes its privacy properties.

\begin{algorithm}
\caption{Markov Quilt Mechanism (Database $D$, $L$-Lipschitz query $F$, Pufferfish instantiation $(\calS, \calQ, \Theta)$, privacy parameter $\epsilon$, Markov Quilts set $S_{Q,i}$ for each $X_i$)}
\label{alg:mqm}
\begin{algorithmic}
\FOR {all $X_i$}
 \FOR {all Markov Quilts $X_Q$ (with $X_N, X_R$) in $S_{Q, i}$}
	\IF {\effect\ $e_{\Theta}(X_Q | X_i) < \epsilon$} 
		\STATE{$\sigma(X_Q) = \frac{\card{X_N}}{\epsilon - e_{\Theta}(X_Q | X_i)}$ \qquad /*\textbf{score of $X_Q$}*/}
	\ELSE 
		\STATE{$\sigma(X_Q) = \infty$}
	\ENDIF
 \ENDFOR 
\STATE {$\sigma_i = \min_{X_Q \in S_{Q,i}} \sigma(X_Q)$}  
\ENDFOR
\STATE {$\sigma_{\max} = \max_i \sigma_i$}
\STATE{\textbf{return} $F(D) + (L \cdot \sigma_{\max}) \cdot Z$, where $Z \sim \Lap(1)$}
\end{algorithmic}
\end{algorithm}

\mypara{Vector-Valued Functions.} The mechanism can be easily generalized to vector-valued functions. If $F$ is $L$-Lipschitz with respect to $L_1$ norm, then from Proposition 1 of~\cite{DMNS06}, adding noise drawn from $L\cdot \sigma_{\max} \cdot\Lap(1)$ to each coordinate of $F$ guarantees $\epsilon$-Pufferfish privacy.

\begin{theorem}[Markov Quilt Mechanism Privacy]
If $F$ is $L$-Lipschitz, and if each $S_{Q, i}$ contains the trivial quilt 
$X_Q = \emptyset$ (with $X_N = X$, $X_R = \emptyset$)
, then the Markov Quilt Mechanism preserves $\epsilon$-Pufferfish privacy in the instantiation $(\calS, \calQ, \Theta)$ described in Section~\ref{sec:bnframework}. 
\label{thm:mq}
\end{theorem}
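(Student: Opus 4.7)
The plan is to fix an arbitrary secret pair $(s^i_a, s^i_b) \in \calQ$, distribution $\theta \in \Theta$, and output value $w$, and to bound the ratio $P(M(X) = w | s^i_a, \theta) / P(M(X) = w | s^i_b, \theta)$ by $e^{\epsilon}$. Let $X_Q$ be the Markov Quilt that attains $\sigma_i$ in the inner loop for $X_i$, partitioning $X = X_N \cup X_Q \cup X_R$ with $X_i \in X_N$, and set $\delta := e_\Theta(X_Q | X_i)$. The trivial-quilt hypothesis guarantees $\sigma_i \le n/\epsilon < \infty$, so the effective Laplace scale $L\sigma_{\max}$ is finite; moreover, by the algorithm's choice, $\card{X_N}/\sigma_{\max} \le \card{X_N}/\sigma_i = \epsilon - \delta$.

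First, I would expand the output density by marginalizing over the non-secret variables. Write $X_N' = X_N \setminus \{X_i\}$ and let $h$ denote the density of $L\sigma_{\max} \cdot \Lap(1)$. Then
\begin{align*}
&P(M(X) = w | s^i_a, \theta) \\
&= \sum_{x_Q} P(X_Q | X_i = a, \theta) \sum_{x_R} P(X_R | X_Q, \theta) \\
&\quad \cdot \sum_{x_N'} P(X_N' | X_Q, X_i = a, \theta)\, h\bigl(w - F(x_N', x_Q, x_R, a)\bigr).
\end{align*}
This factorization rests on the d-separation of $X_N$ from $X_R$ by $X_Q$ in the Bayesian network $G$, which the Markov Quilt inherits from its graph-theoretic definition and which yields both $P(X_R | X_Q, X_N', X_i, \theta) = P(X_R | X_Q, \theta)$ and $P(X_N' | X_Q, X_R, X_i, \theta) = P(X_N' | X_Q, X_i, \theta)$. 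Next, I would bound the ratio in two independent pieces. By $L$-Lipschitzness of $F$, flipping $X_i$ from $a$ to $b$ and simultaneously swapping any $x_N'$ for any $x_N''$ alters $F$ by at most $L\card{X_N}$, so $h(w - F(x_N', x_Q, x_R, a)) \le e^{\card{X_N}/\sigma_{\max}} \cdot h(w - F(x_N'', x_Q, x_R, b))$. Averaging this pointwise inequality against $P(X_N' | X_Q, X_i = a, \theta)$ on the left and $P(X_N'' | X_Q, X_i = b, \theta)$ on the right controls the innermost sum by the factor $e^{\card{X_N}/\sigma_{\max}}$. Separately, the definition of \effect\ gives $P(X_Q | X_i = a, \theta) \le e^{\delta} \cdot P(X_Q | X_i = b, \theta)$. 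Combining the two bounds and using $\card{X_N}/\sigma_{\max} + \delta \le \epsilon$,
\[ P(M(X) = w | s^i_a, \theta) \le e^{\epsilon} \cdot P(M(X) = w | s^i_b, \theta), \]
and the symmetric inequality follows by exchanging $a$ and $b$. For vector-valued $L$-Lipschitz $F$, Proposition~1 of~\cite{DMNS06} lifts the same bound coordinatewise.

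The main obstacle is the factorization in the first step — cleanly separating the $X_i$-dependent part of the joint distribution from the quilt-mediated part. The literal statement of Definition~\ref{def:mquilt} guarantees only $X_R \perp X_i \mid X_Q$, but the intended property (available from the graph structure that induces the partition $X_N \cup X_Q \cup X_R$) is full d-separation $X_R \perp X_N \mid X_Q$; this is what allows the conditional distribution of $X_N'$ to be written without reference to $X_R$, so that changes in $X_i$ propagate only through the pieces controlled by \effect\ and by Lipschitzness. Once this factorization is in hand, the remaining calculations are direct applications of the Laplace density ratio and the definition of \effect.
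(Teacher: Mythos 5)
Your proof follows essentially the same route as the paper's: split the likelihood ratio into a ``near'' factor controlled by the Lipschitz property of $F$ together with the Laplace scale $L\sigma_{\max}\ge L\,\card{X_N}/(\epsilon-\delta)$, and a ``far'' factor controlled by the \effect\ $\delta=e_{\Theta}(X_Q\mid X_i)$, then add the exponents. The one place you deviate is the factorization step, and the obstacle you flag there is both real as you have written it and unnecessary. The d-separation $X_R \perp X_N \mid X_Q$ is \emph{not} implied by Definition~\ref{def:mquilt}: condition 1 only asks that deleting $X_Q$ disconnect $X_N$ from $X_R$ in the graph, and in a directed network that does not yield conditional independence given $X_Q$ (a node of $X_Q$ with one parent in $X_N$ and one in $X_R$ is a collider that couples the two sides once conditioned on); condition 2 supplies only $X_R\perp X_i\mid X_Q$. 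So you may not replace $P(X_N'\mid X_Q, X_R, X_i)$ by $P(X_N'\mid X_Q, X_i)$ in general. But you never need to: keep the exact expansion with weights $P(x_N'\mid x_Q, x_R, X_i=a)$ in the inner sum. Your Lipschitz bound on $h(w-F(\cdot))$ is pointwise in the pair $(x_N',x_N'')$, so the ratio of the two inner sums is at most $e^{\card{X_N}/\sigma_{\max}}$ regardless of what the conditional weights are, provided each set of weights sums to one. This is exactly how the paper's proof proceeds: it conditions on $X_{R\cup Q}$ jointly, bounds the ratio of conditional output densities by $e^{\epsilon-\delta}$ using only Lipschitzness, and then uses $P(X_{R\cup Q}\mid X_i=a)/P(X_{R\cup Q}\mid X_i=b) = P(X_Q\mid X_i=a)/P(X_Q\mid X_i=b)\le e^{\delta}$, which requires only the definitional independence. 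With that one repair your argument is complete and matches the paper's.
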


\kcx{
\begin{proof}
Consider any secret pair $(s^i_a, s^i_b) \in \calQ$ and any $\theta \in \Theta$. Let $X_Q$ (with corresponding $X_N$, $X_R$) be the Markov Quilt for $X_i$ which has the minimum score $\sigma(X_Q)$. Since the trivial Markov Quilt $X_Q = \emptyset$ has $\eT(X_Q | X_i) = 0$ and $\sigma(X_Q) = n/\epsilon$, this minimum score is $\leq n/\epsilon$. 

For any $w$, we have
\begin{align} \label{eqn:mqprivacy}
&	\frac{ P(F(X) + L \sigma_{\max} Z = w | X_i = a, \theta) }{P(F(X) + L \sigma_{\max} Z = w | X_i = b, \theta)} \nonumber \\[-2pt]
=&	\frac{ \int P(F(X) + L \sigma_{\max} Z = w | X_i = a, X_{R \cup Q}=x_{R \cup Q}, \theta) } { \int P(F(X) + L \sigma_{\max} Z = w | X_i = b, X_{R \cup Q}=x_{R \cup Q}, \theta) } \nonumber\\
& \qquad\qquad\qquad\qquad\quad \frac{P(X_{R \cup Q}=x_{R \cup Q} | X_i = a, \theta) dx_{R \cup Q}}{P(X_{R \cup Q}=x_{R \cup Q} | X_i = b, \theta) dx_{R \cup Q} } \nonumber \\
\leq& \max_{x_{R \cup Q}} 
\frac{ P(F(X) + L \sigma_{\max} Z = w | X_i = a, X_{R \cup Q}=x_{R \cup Q}, \theta) } {P(F(X) + L \sigma_{\max} Z = w | X_i = b, X_{R \cup Q}=x_{R \cup Q}, \theta) } \nonumber\\
& \qquad\qquad\qquad\qquad\quad \frac{P(X_{R \cup Q}=x_{R \cup Q} | X_i = a, \theta)}{P(X_{R \cup Q}=x_{R \cup Q} | X_i = b, \theta)}.
\end{align}

We bound the two ratios in \eqref{eqn:mqprivacy} separately for any $x_{R \cup Q}$.

Since $F$ is $L$-Lipschitz, fixing $X_{R \cup Q}$, $F(X)$ can vary by at most $L\cdot\card{X_N}$ (potentially when all the variables in $X_N$ change). Since $\sigma_{\max} \geq \frac{\card{X_N}}{\epsilon - \eT(X_Q | X_i)}$, and we add Laplace noise with scale $L \sigma_{\max}$, for any $x_{R \cup Q}$, we can bound the first ratio in \eqref{eqn:mqprivacy} as:
\begin{align*}
&\frac{P(F(X) + L \sigma_{\max} Z = w | X_i = a, X_{R \cup Q}=x_{R \cup Q}, \theta)}{ P(F(X) + L \sigma_{\max} Z = w | X_i = b, X_{R \cup Q}=x_{R \cup Q}, \theta)}\\
 \leq& e^{\epsilon - \eT(X_Q | X_i)}.
\end{align*}

As for the second ratio in \eqref{eqn:mqprivacy}, for any $x_{R \cup Q}$, we have
\begin{align}\label{eqn:mqm privacy minor}
&  \frac{P(X_{R \cup Q}=x_{R \cup Q} | X_i = a, \theta)}{P(X_{R \cup Q}=x_{R \cup Q} | X_i = b, \theta)} \nonumber \\
=& \frac{P(X_R=x_R | X_Q=x_Q, X_i = a, \theta) P(X_Q=x_Q | X_i = a, \theta)}{ P(X_R=x_R| X_Q=x_Q, X_i = b, \theta) P(X_Q=x_Q | X_i = b, \theta)}\nonumber \\
=& \frac{P(X_Q=x_Q | X_i = a, \theta)}{P(X_Q=x_Q | X_i = b, \theta)} \leq e^{\eT(X_Q | X_i)},
\end{align}
where the second equality follows from the fact that $X_R$ is independent of $X_i$ given $X_Q$, and the inequality follows from the definition of $\eT$.

Combining the bounds for both ratios, \eqref{eqn:mqprivacy} is upper bounded by $e^\epsilon$. Since the analysis applies for any $\theta\in\Theta$ and $(s^i_a, s^i_b) \in \calQ$, the theorem follows.
\end{proof}
}

\subsection{Composition}
\label{sec:composition}

Unlike differential privacy, Pufferfish privacy does not always compose~\cite{KM12}, in the sense that the privacy parameter may not decay gracefully as the same data (or related data) is used in multiple privacy-preserving computations. We show below that the Markov Quilt Mechanism {\em{does compose}} under certain conditions. We believe that this property makes the mechanism highly attractive.

\begin{theorem}[Sequential Composition]
\label{thm:seqcompose}
Let $\{F_k\}_{k=1}^K$ be a set of Lipschitz queries, $(\calS,\calQ,\Theta)$ be a Pufferfish instantiation as defined in Section~\ref{sec:bnframework}, and $D$ be a database. Given fixed Markov Quilt sets $\{S_{Q,i}\}_{i=1}^n$ for all $X_i$, let $M_k(D)$ denote the Markov Quilt Mechanism that publishes $F_k(D)$ with $\epsilon$-Pufferfish privacy under $(\calS,\calQ,\Theta)$ using Markov Quilt sets $\{S_{Q,i}\}_{i=1}^n$. Then publishing $(M_1(D),\dots,M_K(D))$
guarantees $K\epsilon$-Pufferfish privacy under $(\calS,\calQ,\Theta)$.
\end{theorem}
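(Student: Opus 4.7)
Fix an arbitrary secret pair $(s^i_a,s^i_b)\in\calQ$, a distribution $\theta\in\Theta$, and an output vector $w=(w_1,\dots,w_K)$. The goal is to bound the joint likelihood ratio
\[
R \;=\; \frac{P(M_1=w_1,\dots,M_K=w_K \mid X_i=a,\theta)}{P(M_1=w_1,\dots,M_K=w_K \mid X_i=b,\theta)}
\]
by $e^{K\epsilon}$. The natural approach is induction on $K$. The base case $K=1$ is exactly Theorem~\ref{thm:mq}. For the inductive step I would apply the chain rule of conditional probability to split
\[
R \;=\; \frac{P(M_K=w_K \mid M_{<K}=w_{<K}, X_i=a, \theta)}{P(M_K=w_K \mid M_{<K}=w_{<K}, X_i=b, \theta)} \cdot \frac{P(M_{<K}=w_{<K} \mid X_i=a, \theta)}{P(M_{<K}=w_{<K} \mid X_i=b, \theta)},
\]
where the second factor is at most $e^{(K-1)\epsilon}$ by the inductive hypothesis, reducing the task to bounding the first ``conditional Pufferfish'' factor by $e^{\epsilon}$.

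The key structural feature of the Markov Quilt Mechanism that makes this viable is that its Laplace noise is \emph{data-independent}: each $M_k(X) = F_k(X) + \eta_k$ with $\eta_k \sim \Lap(L_k\sigma_{\max,k})$, the scale depending only on $F_k$, $L_k$, $\epsilon$, and the fixed Markov Quilt sets $\{S_{Q,i}\}_{i=1}^n$, and the $\eta_k$ are mutually independent and independent of $X$. Hence the first chain-rule factor equals a ratio of posterior expectations of a fixed Laplace density against $F_K(X)$, and I would bound it by mimicking the two-step argument from the proof of Theorem~\ref{thm:mq}: take the Markov Quilt $(X_{N_K},X_{Q_K},X_{R_K})$ for $X_i$ that $M_K$ selected, condition on $X_{R_K \cup Q_K}$, and split the ratio into (i) a Laplace-density ratio bounded by $e^{\epsilon - \eT(X_{Q_K}\mid X_i)}$ via $L_K$-Lipschitz continuity of $F_K$ together with $\sigma_{\max,K} \ge \card{X_{N_K}}/(\epsilon - \eT(X_{Q_K}\mid X_i))$, and (ii) a distribution ratio on $X_{Q_K}$ bounded by $e^{\eT(X_{Q_K}\mid X_i)}$.

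The main obstacle is that, after the chain-rule reduction, both (i) and (ii) need to hold under the \emph{posterior} $P(X \mid M_{<K}, X_i, \theta)$ rather than the prior $P(X \mid X_i, \theta)$ to which Theorem~\ref{thm:mq}'s proof directly applies. Part (i) transfers without change because the Laplace density bound depends only on the graph-theoretic Lipschitz structure of $F_K$, which is unaffected by conditioning. Part (ii) is the delicate step, since conditioning on noisy observations $M_{<K}$ can in principle correlate $X_i$ with $X_{R_K}$ through the graphical model and weaken the \effect\ bound. I would address it by invoking Bayes' rule and the independence of $\eta_{<K}$ from $X_i$ given $X$: the likelihood factor $P(M_{<K}\mid X)$ is identical in the two branches $X_i=a$ and $X_i=b$ conditional on $X$, so the Markov-Quilt conditional independence $X_{R_K}\perp X_i \mid X_{Q_K}$ carries through to the posterior $X_{Q_K}$-ratio after the shared $P(M_{<K}\mid X)$ weights cancel, recovering the prior bound $e^{\eT(X_{Q_K}\mid X_i)}$. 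Combining (i) and (ii) yields the first chain-rule factor at most $e^{\epsilon}$, and the induction concludes $R \le e^{K\epsilon}$.
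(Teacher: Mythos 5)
Your chain-rule/induction decomposition is a genuinely different route from the paper's proof, and it contains a gap at exactly the step you flag as delicate. The paper does \emph{not} reduce to a ``conditional Pufferfish'' guarantee; it bounds the joint ratio $P(\forall k: M_k=w_k\mid X_i=a)/P(\forall k: M_k=w_k\mid X_i=b)$ in one shot. It first notes that, because $\epsilon$ and the sets $S_{Q,i}$ are fixed, \emph{all} $K$ mechanisms select the same active Markov Quilt $X_Q$ for $X_i$; it then conditions once on $X_{R\cup Q}$ and once on $X_{N\setminus\{i\}}$, uses the independence of the $K$ Laplace noises to factor the joint output density as $\prod_k p_{Z_k}(w_k - F_k(X))$, bounds each factor pointwise by $e^{\epsilon - e_{\Theta}(X_Q\mid X_i)}$, and pays the prior quilt ratio $P(X_{R\cup Q}\mid X_i=a)/P(X_{R\cup Q}\mid X_i=b)\le e^{e_{\Theta}(X_Q\mid X_i)}$ only \emph{once}, giving $e^{K\epsilon-(K-1)e_{\Theta}(X_Q\mid X_i)}\le e^{K\epsilon}$. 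The shared active quilt is what makes this single conditioning possible.

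The gap in your version is the claimed bound on the conditional factor via the ``cancellation'' of $P(M_{<K}\mid X)$. That likelihood is indeed the same under $X_i=a$ and $X_i=b$ \emph{when the full configuration $X$ is fixed}, but in part (ii) you need the ratio $P(X_{R\cup Q}=x_{R\cup Q}\mid M_{<K},X_i=a)/P(X_{R\cup Q}=x_{R\cup Q}\mid M_{<K},X_i=b)$, which by Bayes' rule equals
\begin{equation*}
\frac{P(M_{<K}\mid X_{R\cup Q},X_i=a)}{P(M_{<K}\mid X_{R\cup Q},X_i=b)}\cdot\frac{P(X_{R\cup Q}\mid X_i=a)}{P(X_{R\cup Q}\mid X_i=b)}\cdot\frac{P(M_{<K}\mid X_i=b)}{P(M_{<K}\mid X_i=a)},
\end{equation*}
and the first and third factors do \emph{not} cancel: each is an expectation of $\prod_{k<K}p_{Z_k}(w_k-F_k(X))$ over $X_{N\setminus\{i\}}$, taken against \emph{different} conditional distributions of $X_{N\setminus\{i\}}$ under $a$ versus $b$. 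Graphically, each $M_k$ is a common child of all of $X$, so conditioning on $M_{<K}$ activates collider paths through $X_N$ and destroys the independence $X_R\perp X_i\mid X_Q$; the max-influence bound simply does not transfer to the posterior. If you instead bound those two extra factors honestly (by $e^{(K-1)(\epsilon-e_\Theta)}$ and $e^{(K-1)\epsilon}$ respectively), the conditional factor comes out as roughly $e^{(2K-1)\epsilon}$ rather than $e^{\epsilon}$, and the induction yields something like $e^{(3K-2)\epsilon}$ --- strictly weaker than the claimed $K\epsilon$. Note also that a uniform bound of $e^\epsilon$ on the conditional factor is a strictly stronger statement than the theorem (the product can be $\le e^{K\epsilon}$ while individual conditional factors exceed $e^\epsilon$ for some outputs), so you should not expect it to hold. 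To repair the argument you essentially have to abandon the chain rule and bound the joint ratio directly as the paper does.
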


To prove the theorem, we define active Markov Quilt $X_{Q,i}^*$ for a node $X_i$.
\begin{definition}(Active Markov Quilt)
In an instance of the Markov Quilt Mechanism $M$, we say that a Markov Quilt $X_Q$ (with corresponding $X_N, X_R$) for a node $X_i$ is {\em{active}} if
$X_Q = \arg\min_{X_Q \in S_{Q,i}} \sigma(X_Q)$, and thus $\sigma(X_Q) = \sigma_i$.
We denote this Markov Quilt as $X_{Q,i}^*$.
\end{definition}

For example, consider a Markov Chain of length $T = 3$ with the following initial distribution and transition matrix
\begin{align*}
\left( \begin{bmatrix} 0.8 \\ 0.2 \end{bmatrix}, \begin{bmatrix} 0.9 & 0.1 \\ 0.4 & 0.6 \end{bmatrix} \right).
\end{align*}
Suppose we want to guarantee Pufferfish privacy with $\epsilon = 10$. Consider the middle node $X_2$. The possible Markov Quilts for $X_2$ are $S_{Q,2} = \{\emptyset, \{X_1\}, \{X_3\}, \{X_1,X_3\}\}$, which have \effect\ $0, \log 6, \log 6, \log 36$ and quilt sizes $3, 2, 2, 1$ respectively. The scores therefore are $0.3$, $0.2437$, $0.2437$ and $0.1558$, and the active Markov Quilt for $X_2$ is $X_{Q,2}^* = \{X_1,X_3\}$ since it has the lowest score.

\kcx{\begin{proof}(of Theorem~\ref{thm:seqcompose})
Let $L_k$ be the Lipschitz coefficient of $F_k$.
Consider any secret pair $(X_i=a,X_i=b)\in\calQ$. 
In a Pufferfish instantiation $(\calS,\calQ,\Theta)$, given $\epsilon$ and $S_{Q,i}$, the active Markov Quilt $X_{Q,i}^*$ for $X_i$ used by the Markov Quilt Mechanism is fixed. Therefore for any $X_i$, all $M_k$ use the same active Markov Quilt, and we denote it by $X_{Q,i}^*$. Let $X_Q = X_{Q,i}^*$ be this active Markov Quilt with corresponding $X_R,X_N$. 
Let $Z_k\sim\Lap(\sigma^{(k)})$ denote the Laplace noise added by the Markov Quilt Mechanism $M_k$ to $F_k(D)$. 
Since $X_Q$ is the active Markov Quilt for any $M_k$, we have $\sigma^{(k)} \geq \frac{L_k \card{X_N}}{\epsilon - \eT(X_Q |X_i)}$ for $k \in \kset{K}$.
Let $X_\RQ=X_R\cup X_Q$. 

Then for any $\{w_k\}_{k=1}^K$, we have
\begin{align}\label{eqn:comp serial main}
&\frac{P(\forall k, F_k(X)+Z_k=w_k | X_i=a)}{P(\forall k, F_k(X)+Z_k=w_k | X_i=b)} \nonumber\\
=&\frac	{\int P(\forall k, F_k(X)+Z_k=w_k, X_\RQ = x_\RQ| X_i=a) d{x_\RQ}}
		{\int P(\forall k, F_k(X)+Z_k=w_k, X_\RQ = x_\RQ| X_i=b) d{x_\RQ}} \nonumber\\
=&\frac	{\int P(\forall k, F_k(X)+Z_k=w_k| X_\RQ = x_\RQ, X_i=a) }
		{\int P(\forall k, F_k(X)+Z_k=w_k| X_\RQ = x_\RQ, X_i=b) } \nonumber \\
&\qquad\qquad\qquad\qquad\qquad		\frac{P(X_\RQ = x_\RQ|X_i=a) d{x_\RQ}}{P(X_\RQ = x_\RQ|X_i=b) d{x_\RQ}}	\nonumber\\		
\leq&\max_{x_\RQ}\frac	{P(\forall k, F_k(X)+Z_k=w_k| X_\RQ = x_\RQ, X_i=a) }
						{P(\forall k, F_k(X)+Z_k=w_k| X_\RQ = x_\RQ, X_i=b) } \nonumber\\
&\qquad\qquad\qquad\qquad\qquad	\frac{P(X_\RQ = x_\RQ|X_i=a)}{P(X_\RQ = x_\RQ|X_i=b)}
\end{align}
Consider the first ratio in \eqref{eqn:comp serial main}. Let $X_{N\backslash\{i\}} = X_N\backslash X_i$. We have
\begin{align*}
&\frac	{P(\forall k, F_k(X)+Z_k=w_k| X_\RQ = x_\RQ, X_i=a)}
		{P(\forall k, F_k(X)+Z_k=w_k| X_\RQ = x_\RQ, X_i=b)}\\
=&\frac	{\int P(\forall k, F_k(X)+Z_k=w_k, {X_{N\backslash\{i\}} = x_{N\backslash\{i\}}}}
		{\int P(\forall k, F_k(X)+Z_k=w_k, {X_{N\backslash\{i\}} = x_{N\backslash\{i\}}}} \\
&\qquad\qquad\qquad\qquad\qquad
  \frac{| X_\RQ = x_\RQ, X_i=a) d {x_{N\backslash\{i\}}}}{| X_\RQ = x_\RQ, X_i=b) d {x_{N\backslash\{i\}}}} \\
=&\frac	{\int P(\forall k, F_k(X)+Z_k=w_k|{X_{N\backslash\{i\}} = x_{N\backslash\{i\}}}, X_\RQ = x_\RQ,}
		{\int P(\forall k, F_k(X)+Z_k=w_k|{X_{N\backslash\{i\}} = x_{N\backslash\{i\}}}, X_\RQ = x_\RQ,} \\
&\qquad\qquad
\frac	{X_i=a) P({x_{N\backslash\{i\}}}|X_\RQ = x_\RQ,X_i=a) d {x_{N\backslash\{i\}}}}
		{X_i=b) P({x_{N\backslash\{i\}}}|X_\RQ = x_\RQ,X_i=b) d {x_{N\backslash\{i\}}}}.
\end{align*}
Let $F_k(a,{x_{N\backslash\{i\}}},x_\RQ)$ denote the value of $F_k(X)$ with $X_i=a$, $X_\RQ = x_\RQ$ and ${X_{N\backslash\{i\}}} = {x_{N\backslash\{i\}}}$. 
Since $F_k(X)$'s are fixed given a fixed value of $X$, and $Z_k$'s are independent, the above equals to
\begin{align*}
=&\frac	{\int \Pi_k P(Z_k=w_k-F_k(a,{x_{N\backslash\{i\}}},x_\RQ)) }
		{\int \Pi_k P(Z_k=w_k-F_k(b,{x_{N\backslash\{i\}}},x_\RQ)) }\\
&\qquad\quad
\frac	{ P({X_{N\backslash\{i\}}}={x_{N\backslash\{i\}}}|X_\RQ = x_\RQ,X_i=a) d {x_{N\backslash\{i\}}}}
		{ P({X_{N\backslash\{i\}}}={x_{N\backslash\{i\}}}|X_\RQ = x_\RQ,X_i=b) d {x_{N\backslash\{i\}}}} \\
\leq&\frac	{\Pi_k \max_{x_{N\backslash\{i\}}} P(Z_k=w_k-F_k(a,{x_{N\backslash\{i\}}},x_\RQ)) }
			{\Pi_k \min_{x_{N\backslash\{i\}}} P(Z_k=w_k-F_k(b,{x_{N\backslash\{i\}}},x_\RQ)) }\\
&\qquad
\frac	{\int P({X_{N\backslash\{i\}}}={x_{N\backslash\{i\}}}|X_\RQ = x_\RQ,X_i=a) d {x_{N\backslash\{i\}}}}
		{\int P({X_{N\backslash\{i\}}}={x_{N\backslash\{i\}}}|X_\RQ = x_\RQ,X_i=b) d {x_{N\backslash\{i\}}}}
\end{align*}
Since $P({X_{N\backslash\{i\}}}|X_\RQ = x_\RQ,X_i=a)$, $P({X_{N\backslash\{i\}}}|X_\RQ = x_\RQ,X_i=b)$ are probability distributions which integrate to $1$, the above equals to
\begin{align*}
&\frac	{\Pi_k \max_{x_{N\backslash\{i\}}} P(Z_k=w_k-F_k(a,{x_{N\backslash\{i\}}},x_\RQ)) }
			{\Pi_k \min_{x_{N\backslash\{i\}}} P(Z_k=w_k-F_k(b,{x_{N\backslash\{i\}}},x_\RQ)) }		.	
\end{align*}
Notice that $F_k$ can change by at most $L_k\cdot \card{N}$ when $X_{N\backslash\{i\}}$ and $X_i$ change. So for any ${x_{N\backslash\{i\}}}, {x'_{N\backslash\{i\}}}$,
\begin{align*}
&\frac {P(Z_k=w_k-F_k(a,{x_{N\backslash\{i\}}},x_\RQ))} {P(Z_k=w_k-F_k(b,{x'_{N\backslash\{i\}}},x_\RQ))} \leq e^{\epsilon - \eT(X_Q|X_i)}.
\end{align*}
Therefore the first ratio in \eqref{eqn:comp serial main} is upper bounded by $\Pi_k e^{\epsilon - \eT(X_Q|X_i)}.$
As has been analyzed in \eqref{eqn:mqm privacy minor} in the proof of Theorem~\ref{thm:mq}, the second ratio in \eqref{eqn:comp serial main} is bounded by $e^{\eT(X_Q|X_i)}$.
Combining the two ratios together, \eqref{eqn:comp serial main} is upper bounded by 
\begin{align*}
\Pi_k e^{\epsilon - \eT(X_Q|X_i)} e^{\eT(X_Q|X_i)} \leq e^{K \epsilon},
\end{align*}
and the theorem follows.
\end{proof}
}%
The proof to Theorem~\ref{thm:seqcompose} is in Appendix~\ref{sec:appendix_general_prop_mqm}.
We note that the condition for the Markov Quilt Mechanism to compose linearly is that all $M_k$'s use the same active Markov Quilt. This holds automatically if the privacy parameter $\epsilon$ and Markov Quilt set $S_{Q,i}$ are the same for all $M_k$. If different levels of privacy $\{\epsilon_k\}_{k=1}^K$ are required, we can guarantee $K\max_{k\in\kset{K}} \epsilon_k$-Pufferfish privacy as long as the same $S_{Q,i}$ is used across $M_k$.

\subsection{Case Study: Markov Chains}
\label{sec:mc}

Algorithm \ref{alg:mqm} can still be computationally expensive, especially if the underlying Bayesian network is complex and the set $\Theta$ is large and unstructured. In this section, we show that when the underlying graph is a Markov Chain, the mechanism can be made more efficient.

\mypara{The Setting.} We use the same setting as Example 1 that was described in detail in Section~\ref{sec:examplemodel}. We assume that there are $k$ activities so that the state space $\calX = \kset{k}$. Thus each $\theta \in \Theta$ corresponds to a tuple $(q_{\theta}, P_{\theta})$ where $q_{\theta}$ is a $k \times 1$ vector that describes the distribution of the first state and $P_{\theta}$ is a $k \times k$ transition matrix.

\mypara{A Running Example.} As a concrete running example in this section, consider a database of $T = 100$ records represented by a Markov Chain $X_1 \rightarrow \ldots \rightarrow X_{100}$ where each state $X_i$ can take $k = 2$ values, $0$ and $1$. Also let:
\[ \Theta = \bigg\{
\theta_1 = \left( \begin{bmatrix} 1 \\ 0 \end{bmatrix}, \begin{bmatrix} 0.9 & 0.1 \\ 0.4 & 0.6 \end{bmatrix} \right),
\theta_2 = \left( \begin{bmatrix} 0.9 \\ 0.1 \end{bmatrix}, \begin{bmatrix} 0.8 & 0.2 \\ 0.3 & 0.7 \end{bmatrix} \right)
     \bigg\} \]
be the set of distributions. Suppose we set $\epsilon=1$.

\mypara{Sources of Inefficiency.} There are three potential sources of computational inefficiency in the Markov Quilt Mechanism. First, searching over all Markov Quilts in a set is inefficient if there are a large number of quilts. Second, calculating the max-influence of a fixed quilt is expensive if the probabilistic dependence between the variables is complex, and finally, searching over all $\theta \in \Theta$ is inefficient if $\Theta$ is large and unstructured. 

We show below how to mitigate the computational complexity of all three sources for Markov Chains. First we show how to exploit structural information to improve the computational efficiency of the first two. Next we propose an approximation using tools from Markov Chain theory that will considerably improve the computational efficiency of searching over all $\theta \in \Theta$. The improvement preserves $\epsilon$-Pufferfish privacy, but might come at the expense of some additional loss in accuracy.

\subsubsection{Exploiting Structural Information}

First, we show how to exploit structural information to improve the running time.

\mypara{Bounding the Number of Markov Quilts.} Technically, the number of Markov Quilts for a node $X_i$ in a Markov Chain of length $T$ is exponential in $T$. Fortunately however, Lemma~\ref{lem:mqmcminimal} shows that for the purpose of finding the quilt with the lowest score for any $X_i$, it is sufficient to search over only $O(T^2)$ quilts. The proof is in the Appendix.

\begin{lemma}\label{lem:mqmcminimal}
In the setting of Section~\ref{sec:mc}, let the set of Markov Quilts $S_{Q, i}$ be as follows:
\begin{align}\label{eqn:minimal quilt}
S_{Q, i} = \{  \{ X_{i-a}, X_{i+b} \}, \{ X_{i-a} \}, \{X_{i+b}\}, \emptyset \nonumber\\
 | 1 \leq a \leq i - 1, 1 \leq b \leq T - i \}
\end{align}
Consider any Markov Quilt $(X_N, X_Q, X_R)$ for $X_i$ that may or may not lie in $S_{Q_i}$. Then, the score of this quilt is greater than or equal to the score of some $(X_{N'}, {X}_{Q'}, X_{R'})$ in $S_{Q, i}$. 
\end{lemma}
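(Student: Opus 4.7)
The plan is to show that for any Markov Quilt $X_Q$ for $X_i$ (even one outside $S_{Q,i}$), one can construct a quilt $X_{Q'} \in S_{Q,i}$ of the specific form in \eqref{eqn:minimal quilt} that has the \emph{same} score as $X_Q$. This immediately gives the stated $\geq$ inequality.

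The construction of $X_{Q'}$ is natural: look at the \emph{innermost} nodes of $X_Q$ on each side of $X_i$. Formally, if $X_Q$ contains nodes with index less than $i$, let $a^* = \min\{a \geq 1 : X_{i-a} \in X_Q\}$; otherwise leave $a^*$ undefined. Define $b^*$ symmetrically. Then let $X_{Q'}$ be $\{X_{i-a^*}, X_{i+b^*}\}$ if both are defined, $\{X_{i-a^*}\}$ or $\{X_{i+b^*}\}$ if only one is defined, and $\emptyset$ otherwise. In every case $X_{Q'} \in S_{Q,i}$. A short check using the chain's Markov property verifies that $X_{Q'}$ is a valid Markov Quilt: removing it splits the chain at $X_{i-a^*}$ and $X_{i+b^*}$, so the resulting $X_{R'}$ is conditionally independent of $X_i$ given $X_{Q'}$.

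Next I would show equality of both pieces of the score $\sigma(X_Q) = \card{X_N}/(\epsilon - e_\Theta(X_Q|X_i))$. For the cardinality, since $X_N$ is the connected component of $X_i$ after removing $X_Q$ in the chain, it is exactly $\{X_{i-a^*+1},\ldots,X_{i+b^*-1}\}$ (extending to chain endpoints when one side of $X_Q$ is empty), which matches $X_{N'}$ under the analogous removal of $X_{Q'}$. For the \effect{}, the key step is a factorization argument. Given $X_i$, the Markov property implies that $X_Q^L := X_Q \cap \{X_1,\ldots,X_{i-1}\}$ and $X_Q^R := X_Q \cap \{X_{i+1},\ldots,X_T\}$ are conditionally independent, and moreover within each side the ``outer'' nodes can be peeled off by conditioning on the innermost node. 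Concretely, writing $x_Q^L = (x_{i-a^*}, x_{\text{further left}})$, one obtains
\begin{equation*}
P(X_Q^L = x_Q^L \mid X_i = a, \theta) = P(X_{i-a^*} = x_{i-a^*} \mid X_i = a, \theta)\cdot P(x_{\text{further left}} \mid X_{i-a^*} = x_{i-a^*}, \theta),
\end{equation*}
and the second factor cancels in the ratio between $X_i = a$ and $X_i = b$. A symmetric statement holds on the right, and combining the two gives
\begin{equation*}
\frac{P(X_Q = x_Q \mid X_i = a, \theta)}{P(X_Q = x_Q \mid X_i = b, \theta)} = \frac{P(X_{i-a^*} = x_{i-a^*} \mid X_i = a, \theta)}{P(X_{i-a^*} = x_{i-a^*} \mid X_i = b, \theta)} \cdot \frac{P(X_{i+b^*} = x_{i+b^*} \mid X_i = a, \theta)}{P(X_{i+b^*} = x_{i+b^*} \mid X_i = b, \theta)}.
\end{equation*}
Taking sup over $\theta$ and max over $a,b,x_Q$ on both sides and noting that the right-hand side depends on $x_Q$ only through $(x_{i-a^*}, x_{i+b^*})$, we conclude $e_\Theta(X_Q \mid X_i) = e_\Theta(X_{Q'} \mid X_i)$.

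Combining the cardinality and \effect{} equalities yields $\sigma(X_Q) = \sigma(X_{Q'})$ and hence $\sigma(X_Q) \geq \sigma(X_{Q'})$ as required. The only delicate step, which I expect to be the main obstacle, is making the $\max_{x_Q}$ argument precise when both sides are nonempty: one must be careful that the ``outer'' parts of $X_Q^L$ and $X_Q^R$ genuinely drop out of the ratio before taking the max, which is exactly what the Markov factorization above guarantees. The edge cases (one side of $X_Q$ empty; $X_Q$ fully empty, i.e., the trivial quilt) are handled identically but with $\card{X_N}$ extended to the corresponding chain endpoint.
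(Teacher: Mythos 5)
Your proposal takes essentially the same route as the paper's proof: identify the innermost node of $X_Q$ on each side of $X_i$, peel off the outer nodes via the Markov property so that the \effect{} is unchanged, and compare the sizes of the local sets. One small overclaim: you assert $\sigma(X_Q)=\sigma(X_{Q'})$ by arguing $\card{X_N}=\card{X_{N'}}$, but the definition of a Markov Quilt does not force $X_N$ to be exactly the segment $\{X_{i-a^*+1},\dots,X_{i+b^*-1}\}$ --- it may also absorb components that are conditionally independent of $X_i$ given $X_Q$ and could legitimately have been placed in $X_R$ (e.g.\ $X_Q=\{X_{i-1},X_{i+1},X_{i+5}\}$ with $X_{i+2},\dots,X_{i+4}$ assigned to $X_N$). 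The paper therefore only claims $\card{X_N}\geq a+b-1$, which is all that is needed; your conclusion $\sigma(X_Q)\geq\sigma(X_{Q'})$ still follows once the equality is weakened to this inequality.
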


\mypara{Calculating the \effect.} Consider the \effect\ of a set of variables $X_Q$ on a variable $X_i$ under a fixed $\theta \in \Theta$.  Calculating it may be expensive if the probabilistic dependency among the variables in $X_Q$ and $X_i$ is complex under $\theta$. We show below that for Markov Chains, this \effect\ may be computed relatively efficiently.

Suppose $X_Q = \{ X_{i-a}, X_{i+b} \}$, and recall that a state can take values in $\calX = \{ 1, \ldots,k\}$. Then, we can write:
\begin{align}\label{eqn:maxinfluence_exact}
&e_{\{\theta\}}(X_Q | X_i) \nonumber \\
= &	\max_{x, x', x_{i-a}, x_{i+b} \in \calX} \log \frac{\Pr(X_{i-a}=x_{i-a}, X_{i+b}=x_{i+b} | X_i = x, \theta)}{\Pr(X_{i-a}=x_{i-a}, X_{i+b}=x_{i+b} | X_i = x', \theta)}\nonumber\\
=&	\max_{x, x'\in \calX} \left(\log \frac{\Pr(X_i = x', \theta)}{\Pr(X_i = x, \theta)}\right. \nonumber\\
&+\max_{x_{i+b}\in\calX} \log \frac{\Pr(X_{i+b}=x_{i+b} | X_i = x, \theta)}{\Pr(X_{i+b}=x_{i+b} | X_i = x', \theta)} \nonumber\\
&+\left. \max_{x_{i-a}\in\calX} \log \frac{\Pr(X_i = x | X_{i-a}=x_{i-a}, \theta)}{\Pr(X_i = x' | X_{i-a}=x_{i-a}, \theta)}\right).
\end{align}
\eqref{eqn:maxinfluence_exact} allows for efficient computation of the \effect. Given an initial distribution and a transition matrix, for each $x$ and $x'$ pair, the first term in~\eqref{eqn:maxinfluence_exact} can be calculated in $O(ik^2)$ time, and the second and third in $O( (a + b) k^3)$ time each. Taking the maximum over all $x$ and $x'$ involves another $O(k^2)$ operations, leading to a total running time of $O(ik^2 + (a + b) k^3)$. Similar formulas can be obtained for quilts of the form $X_Q = \{X_{i-a}\}$ and $X_Q = \{X_{i+b}\}$. Note for the special case of the trivial Markov Quilt, the max-influence is $0$, which takes $O(1)$ time to compute. 

\mypara{Algorithm \mmqmexact.} Thus, using structural properties of Markov Chains gives a more efficient instantiation of the Markov Quilt Mechanism that we describe in Algorithm~\ref{alg:mqmExact}. 

\begin{algorithm}
\caption{\mmqmexact (Database $D$, $L$-Lipschitz query $F$, $\Theta$, privacy parameter $\epsilon$, maximum Markov Quilt size parameter $\ell$)}
\label{alg:mqmExact}
\begin{algorithmic}
\FOR {all $\theta \in \Theta$}
\FOR {all $X_i$}
 \STATE {$S_{Q,i} = \{\{X_{i-a},X_{i+b}\}: i-a>0, i+b\leq T, a+b<\ell\}$ $\cup$ $\{\{X_{i-a}\}: i-a\geq T-\ell\}$ $\cup$ $\{\{X_{i+b}\}: i+b< \ell\}$ $\cup$ $\{\emptyset\}$ \qquad /*\textbf{all quilts with end-points at distance $\leq \ell$ plus the trivial quilt }*/} 
 \FOR {all Markov Quilts $X_Q \in S_{Q,i}$}
 	\STATE {Calculate $e_{\{\theta\}}(X_Q | X_i)$ from \eqref{eqn:maxinfluence_exact}}
	\IF {$e_{\{\theta\}}(X_Q | X_i) < \epsilon$} 
		\STATE{$\sigma(X_Q) = \frac{\card{X_N}}{\epsilon - e_{\{\theta\}}(X_Q | X_i)}$ \qquad /*\textbf{score of $X_Q$}*/}
	\ELSE 
		\STATE{$\sigma(X_Q) = \infty$}
	\ENDIF
\ENDFOR 
\STATE {$\sigma_i = \min_{X_Q \in S_{Q,i}} \sigma(X_Q)$}  
\ENDFOR
\STATE {$\sigma_{\max}^\theta = \max_i \sigma_i$}
\ENDFOR
\STATE {$\sigma_{\max} = \max_{\theta\in\Theta} \sigma_{\max}^\theta$}
\STATE{\textbf{return} $F(D) + (L \cdot \sigma_{\max}) \cdot Z$, where $Z \sim \Lap(1)$}
\end{algorithmic}%
\end{algorithm}

We remark that to improve efficiency, instead of searching over all $O(T^2)$ Markov Quilts for a node $X_i$, we search only over Markov Quilts whose endpoints lie at a distance $\leq \ell$ from $X_i$. Since there are $O(\ell^2)$ such Markov Quilts, this reduces the running time if $\ell \ll T$. 

Concretely, consider using Algorithm \mmqmexact\ on our running example with $\ell = T$. For $\theta_1$, a search over each $X_i$ gives that $X_{8}$ has the highest score, which is $13.0219$, achieved by Markov Quilt $\{X_{3}, X_{13}\}$. For $\theta_2$, $X_{6}$ has the highest score, $10.6402$, achieved by the Markov Quilt $\{X_{10}\}$. Thus, the algorithm adds noise $Z \sim \Lap(13.0219 \times L)$ to the exact query value.%

\mypara{Running Time Analysis.} A naive implementation of Algorithm~\ref{alg:mqmExact} would run in time $O(T \ell^2 |\Theta| ( \ell k^3 + T k^2))$. However, we can use the following observations to speed up the algorithm considerably.

First, observe that for a fixed $\theta$, we can use dynamic programming to compute and store {\em{all}} the probabilities $P(X_i | \theta)$, $P(X_{i} | X_{i-a})$ and $P(X_{i+b} | X_i)$ together in time $O(T k^3)$. Second, note that once these probabilities are stored, the rest is a matter of maximization; for fixed $X_i$, Markov Quilt $X_Q$ and $\theta$, we can calculate $e_{\{\theta\}}(X_Q | X_i)$ from~\eqref{eqn:maxinfluence_exact} in time $O(k^3)$; iterating over all $X_i$, and all $O(\ell^2)$ Markov Quilts for $X_i$ gives a running time of $O(T k^3 + T \ell^2 k^3)$. Finally, iterating over all $\theta \in \Theta$ gives a final running time of $O( T \ell^2 k^3 |\Theta|)$ which much improves the naive implementation. 

Additional optimizations may be used to improve the efficiency even further. In Appendix~\ref{sec:mqmexactallinit}, we show that if $\Theta$ includes all possible initial distributions for a set of transition matrices, then we can avoid iterating over all initial distributions by a direct optimization procedure.  Finally, another important observation is that when the initial distribution under $\theta$ is the stationery distribution of the Markov Chain -- as can happen when data consists of samples drawn from a Markov process in a stable state, such as, household electricity consumption in a steady state -- then for any $i \in [a, T - b]$, the \effect\ $e_{\{\theta\}}(\{ X_{i - a}, X_{i + b} \} | X_i)$ depends only on $a$ and $b$ and is independent of $i$. This eliminates the need to conduct a separate search for each $i$, and further improves efficiency by a factor of $T$. %

\subsubsection{Approximating the \effect}

Unfortunately, Algorithm~\ref{alg:mqmExact} may still be computationally inefficient when $\Theta$ is large. In this section, we show how to mitigate this effect by computing an {\em{upper bound}} on the max-influence under a set of distributions $\Theta$ in closed form using tools from Markov Chain theory. Note that now we can no longer compute an exact score; however, since we use an {\em{upper bound}} on the score, the resulting mechanism remains $\epsilon$-Pufferfish private. 

\mypara{Definitions from Markov Chain Theory.} We begin with reviewing some definitions and notation from Markov Chain theory that we will need. For a Markov Chain with transition matrix $P_{\theta}$, we use $\pi_{\theta}$ to denote its stationary distribution~\cite{markovchaintheory}. We define the time reversal Markov Chain corresponding to $P_{\theta}$ as follows.

\begin{definition}[time-reversal]
Let $P_{\theta}$ be the transition matrix of a Markov Chain $\theta$. If $\pi_{\theta}$ is the stationery distribution of $\theta$, then, the corresponding time-reversal Markov Chain is defined as the chain with transition matrix $P^*_{\theta}$ where:
\[ P^*_{\theta}(x, y) \pi_{\theta}(x) = P_{\theta}(y, x) \pi_{\theta}(y). \]
\end{definition} 

Intuitively, $P^*_\theta$ is the transition matrix when we run the Markov process described by $P_{\theta}$ backwards from $X_T$ to $X_1$. 

In our running example, for both $\theta_1$ and $\theta_2$, the time-reversal chain has the same transition matrix as the original chain, i.e., $P^*_{\theta_1} = P_{\theta_1}$, $P^*_{\theta_2} = P_{\theta_2}$.

We next define two more parameters of a set $\Theta$ of Markov Chains and show that an upper bound to the \effect\ under $\Theta$ can be written as a function of these two parameters.
First, we define $\pi^{\min}_{\Theta}$ as the minimum probability of any state under the stationary distribution $\pi_\theta$ of any Markov Chain $\theta \in \Theta$. Specifically,
\begin{equation}\label{def:pimin}
\pi^{\min}_{\Theta} = \min_{x \in \calX, \theta \in \Theta} \pi_{\theta}(x).
\end{equation}
In our running example, the stationary distribution of the transition matrix for $\theta_1$ is $[0.8, 0.2]$ and thus $\pi^{\min}_{\{\theta_1\}} = 0.2$; similarly, $\theta_2$ has stationary distribution $[0.6, 0.4]$ and thus $\pi^{\min}_{\{\theta_2\}} = 0.4$. We have $\pi^{\min}_{\Theta} = 0.2$.%

Additionally, we define $g_{\Theta}$ as the minimum eigengap of $P_{\theta} P^*_{\theta}$ for any $\theta \in \Theta$. Formally,
\begin{equation}\label{def:g}
g_{\Theta} = \min_{\theta \in \Theta} \min \{ 1-|\lambda| : P_{\theta} P^*_{\theta} x = \lambda x, |\lambda| < 1 \}.
\end{equation}
In our running example, the eigengap for both $P_{\theta_1} P^*_{\theta_1}$ and $P_{\theta_2} P^*_{\theta_2}$ is $0.75$, and thus we have $g_{\Theta} = 0.75$.%

\mypara{Algorithm \mmqmapprox.} The following Lemma characterizes an upper bound of the \effect\ of a variable $X_i$ on a $X_Q$ under a set $\Theta$ of Markov Chains as a function of $\pi^{\min}_{\Theta}$ and $g_{\Theta}$.

\begin{lemma}\label{lem:mqmc2}
Suppose the Markov Chains induced by each $P_{\theta} \in \Theta$ are irreducible and aperiodic with $\pi_{\Theta} > 0$ and $g_{\Theta} > 0$. If $a, b \geq \frac{2 \log(1/\pi^{\min}_{\Theta})}{g_{\Theta}}$, then the \effect\ $e_{\Theta}(X_Q | X_i)$ of a Markov Quilt $X_Q = \{ X_{i - a}, X_{i + b} \}$ on a node $X_i$ under $\Theta$ is at most: 
\begin{align*} 
\log \left( \frac{ \pi^{\min}_{\Theta} + \exp(-\g_{\Theta} b / 2) }{ \pi^{\min}_{\Theta} - \exp(-\g_{\Theta} b / 2) } \right)
 +  2 \log \left( \frac{ \pi^{\min}_{\Theta} + \exp(-\g_{\Theta} a / 2) }{ \pi^{\min}_{\Theta} - \exp(-\g_{\Theta} a / 2) } \right).
\end{align*}
\end{lemma}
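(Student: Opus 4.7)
The plan is to bound $e_\Theta(X_Q | X_i)$ by working with a fixed $\theta \in \Theta$ first and then taking the supremum. Starting from the three-term decomposition in~\eqref{eqn:maxinfluence_exact}, I would combine the prior-ratio term $\log P(X_i = x')/P(X_i = x)$ with the backward term $\log \frac{P(X_i = x \mid X_{i-a})}{P(X_i = x' \mid X_{i-a})}$ via Bayes' rule (using $P^a_\theta(x_{i-a},x) = P(X_{i-a}=x_{i-a} \mid X_i=x)\, P(X_i=x)/P(X_{i-a}=x_{i-a})$). After this rewriting, the bound on $e_{\{\theta\}}(X_Q | X_i)$ splits cleanly into a forward ratio $P_\theta^b(x, x_{i+b})/P_\theta^b(x', x_{i+b})$ and a backward ratio $\frac{P_\theta^a(x_{i-a}, x)\, P(X_i = x')}{P_\theta^a(x_{i-a}, x')\, P(X_i = x)}$.

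The main technical ingredient is the spectral-gap mixing bound implied by the definition of $g_\Theta$ as the eigengap of $P_\theta P_\theta^*$: for irreducible aperiodic chains, an $L^2(\pi_\theta)$ contraction argument (using that $P_\theta$ is a contraction of rate $\sqrt{1-g_\theta}$ on mean-zero functions, then upper bounding $1-g_\theta$ by $e^{-g_\theta}$) yields the pointwise estimate $|P_\theta^t(x,y) - \pi_\theta(y)| \leq \sqrt{\pi_\theta(y)/\pi_\theta(x)}\, e^{-g_\theta t/2}$ for all $x,y\in\calX$. Plugging this into the forward ratio gives $\frac{P_\theta^b(x, x_{i+b})}{P_\theta^b(x', x_{i+b})} \leq \frac{\pi_\theta(x_{i+b}) + \sqrt{\pi_\theta(x_{i+b})/\pi_\theta(x)}\, e^{-g_\theta b/2}}{\pi_\theta(x_{i+b}) - \sqrt{\pi_\theta(x_{i+b})/\pi_\theta(x')}\, e^{-g_\theta b/2}}$; a short calculation shows this is maximized in the worst case where $\pi_\theta(x_{i+b}) = \pi_\theta(x) = \pi_\theta(x') = \pi^{\min}_\Theta$ (at which point the $\sqrt{\pi/\pi}$ prefactor collapses to $1$), yielding $\frac{\pi^{\min}_\Theta + e^{-g_\Theta b/2}}{\pi^{\min}_\Theta - e^{-g_\Theta b/2}}$ and matching the first log in the lemma.

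For the backward ratio I would apply the same mixing bound twice. The transitions $P_\theta^a(x_{i-a}, x)$ and $P_\theta^a(x_{i-a}, x')$ are directly controlled as above. The marginals $P(X_i=x)$ and $P(X_i=x')$, which depend on an arbitrary initial distribution, are squeezed to within $e^{-g_\Theta a/2}$ of $\pi_\theta$ by writing $P(X_i = z) = \sum_y P(X_{i-a} = y)\, P_\theta^a(y, z)$ and applying the spectral-gap estimate inside the sum (the $\sqrt{\pi/\pi}$ prefactor is again absorbed by taking $\pi_\theta \geq \pi^{\min}_\Theta$). The numerator and denominator of the backward ratio thus contribute two copies of the same factor $\frac{\pi^{\min}_\Theta + e^{-g_\Theta a/2}}{\pi^{\min}_\Theta - e^{-g_\Theta a/2}}$, whose product yields the squared expression --- and hence the $2\log(\cdot)$ in the lemma. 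Taking logarithms, summing the two contributions, and taking $\sup_{\theta \in \Theta}$ upgrades $g_\theta, \pi_\theta$ to $g_\Theta, \pi^{\min}_\Theta$. The hypothesis $a, b \geq 2\log(1/\pi^{\min}_\Theta)/g_\Theta$ is exactly what ensures $e^{-g_\Theta a/2}, e^{-g_\Theta b/2} < \pi^{\min}_\Theta$, so the denominators stay strictly positive.

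The main obstacle is the backward direction: one must simultaneously control the $a$-step transition kernel $P_\theta^a$ and the marginal $P(X_i)$ at an unspecified time index using the same spectral-gap bound, and then verify that after the Bayes-rule rewriting the two error contributions multiply precisely into the squared factor above rather than compounding differently. A secondary but essential book-keeping issue is checking that the $\sqrt{\pi_\theta(y)/\pi_\theta(x)}$ prefactor from the $L^2$ convergence estimate really does disappear at the worst-case values $\pi_\theta(x)=\pi_\theta(y)=\pi^{\min}_\Theta$, which is what permits the clean final bound to be stated purely in terms of $\pi^{\min}_\Theta$ and $e^{-g_\Theta t/2}$.
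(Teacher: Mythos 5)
Your proposal is correct and follows essentially the same route as the paper: the same decomposition into a forward term (one mixing factor) and a backward term handled via Bayes' rule (two mixing factors, hence the $2\log(\cdot)$), with the same spectral-gap estimate $\bigl|P_\theta^t(x,y)/\pi_\theta(y)-1\bigr|\leq e^{-g_\Theta t/2}/\pi^{\min}_{\Theta}$ as the key ingredient (the paper's Lemmas~\ref{lem:expdecay} and~\ref{lem:fwdmc bwdmc}). The only cosmetic difference is that you carry the $\sqrt{\pi_\theta(y)/\pi_\theta(x)}$ prefactor explicitly and optimize over the stationary probabilities at the end, whereas the paper absorbs it into $\pi^{\min}_{\Theta}$ from the outset via the quantity $\Delta_t$.
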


The proof is in the Appendix. Observe that the irreducibility and aperiodicity conditions may be necessary -- without these conditions, the Markov Chain may not mix, and hence we may not be able to offer privacy.

Results similar to Lemma~\ref{lem:mqmc2} can be obtained for Markov Quilts of the form $X_Q = \{ X_{i-a} \}$ or $\{ X_{i + b} \}$ as well. Finally, in the special case that the chains are reversible, a tighter upper bound may be obtained; these are stated in Lemma~\ref{lem:mqmc2_full} in the Appendix.

Lemma~\ref{lem:mqmc2} indicates that when $\Theta$ is parametrized by $\g_{\Theta}$ and $\pi^{\min}_{\Theta}$, (an upper bound on) the score of each $X_Q$ may thus be calculated in $O(1)$ time based on Lemma~\ref{lem:mqmc2}. This gives rise to Algorithm~\ref{alg:mqmApprox}. Like Algorithm~\ref{alg:mqmExact}, we can again improve efficiency by confining our search to Markov Quilts where the local set $X_N$ has size at most $\ell$.%

\mypara{Running Time Analysis.}
A naive analysis shows that Algorithm~\ref{alg:mqmApprox} has a worst case running time of $O(T\ell^2)$ -- $T$ iterations to go over all $X_i$, $O(\ell^2)$ Markov Quilts per $X_i$, and $O(1)$ time to calculate an upper bound on the score of each quilt. However, the following Lemma shows that this running time can be improved significantly when $\Theta$ has some nice properties.

\begin{lemma}\label{lem:fastmqm}
Suppose the Markov Chains induced by $\Theta$ are aperiodic and irreducible with $g_{\Theta}, \pi^{\min}_{\Theta} > 0$. Let $a^* = 2\lceil {\log\left(\frac{\exp({\epsilon/6})+1}{\exp({\epsilon/6})-1} \frac{1}{\piT}\right)}/{\gT} \rceil.$
If the length of the chain $T$ is $\geq 8a^*$, then, the optimal Markov Quilt for the middle node $X_{\lceil T/2 \rceil}$ of the chain is of the form $X_Q = \{ X_{\lceil T/2 \rceil - a}, X_{\lceil T/2 \rceil + b} \}$ where $a + b \leq 4 a^*$. Additionally, the maximum score $\sigma_{\max} = \sigma_{\lceil T/2 \rceil}$.
\end{lemma}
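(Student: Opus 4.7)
The plan decomposes the lemma into two parts: (a) showing the optimal quilt for the middle node $X_m := X_{\lceil T/2\rceil}$ is a symmetric two-endpoint quilt with $a + b \leq 4a^*$, and (b) showing $X_m$ achieves the maximum score. The key tool is Lemma~\ref{lem:mqmc2} together with its one-sided variant Lemma~\ref{lem:mqmc2_full}.

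For (a) I would begin by exhibiting a concrete short quilt. Take $X_Q^{\star} = \{X_{m - a^*}, X_{m + a^*}\}$; the hypothesis $T \geq 8a^*$ ensures both endpoints lie within the chain, and the definition of $a^*$ makes $\exp(-\gT a^*/2) \leq \piT/c$, where $c = (e^{\epsilon/6}+1)/(e^{\epsilon/6}-1)$. Substituting into Lemma~\ref{lem:mqmc2}, each $\log$-term is bounded by $\log((c+1)/(c-1)) = \epsilon/6$, hence $\eT(X_Q^{\star}\mid X_m) \leq \epsilon/6 + 2(\epsilon/6) = \epsilon/2$, yielding the score $\sigma(X_Q^{\star}) \leq (2a^* - 1)/(\epsilon/2) = (4a^* - 2)/\epsilon$. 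Since every quilt obeys $\sigma(X_Q) \geq \card{X_N}/\epsilon$, any quilt that beats $X_Q^{\star}$ must have $\card{X_N} \leq 4a^* - 2$; for a symmetric quilt $\{X_{m-a}, X_{m+b}\}$ this is exactly $a + b \leq 4a^* - 1 \leq 4a^*$. One-sided or trivial quilts at $X_m$ are automatically excluded under $T \geq 8a^*$ since either leaves $\card{X_N} \geq \lceil T/2 \rceil - 1 \geq 4a^*$.

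For (b) I would show $\sigma_i \leq \sigma_m$ by case analysis on $i$. Because the bound in Lemma~\ref{lem:mqmc2} depends only on $a, b, \gT, \piT$ and not on the absolute position, any interior node with $i \in [a^* + 1, T - a^*]$ can reuse the quilt $\{X_{i - a^*}, X_{i + a^*}\}$ and attain $\sigma_i \leq (4a^* - 2)/\epsilon = \sigma_m$. For a left-boundary node $i \leq a^*$, I would switch to the one-sided quilt $\{X_{i+a^*}\}$, whose \effect\ bound from Lemma~\ref{lem:mqmc2_full} depends only on $a^*$, $\gT$ and $\piT$ (and is no larger than $\epsilon/2$), while $\card{X_N} = i + a^* - 1 \leq 2a^* - 1$; the resulting score is strictly below $(4a^* - 2)/\epsilon$. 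Right-boundary nodes with $i \geq T - a^* + 1$ are handled symmetrically using $\{X_{i - a^*}\}$. Combining the three cases gives $\sigma_{\max} = \sigma_m$.

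The hardest part will be the one-sided boundary case: the appendix form of Lemma~\ref{lem:mqmc2} has an asymmetric structure (one term for a right endpoint, two for a left endpoint), so I must carefully extract the single-endpoint bounds and verify that, with $\card{X_N}$ shrunken by the boundary, the score for boundary nodes is genuinely dominated by $\sigma_m$ rather than merely tying it. A minor additional check is that the hypothesis $a, b \geq 2\log(1/\piT)/\gT$ of Lemma~\ref{lem:mqmc2} is satisfied at $a = b = a^*$, which follows immediately since $c \geq 1$.
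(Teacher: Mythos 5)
Your part (a) is correct and is essentially the paper's own argument: exhibit $X_Q^{\star}=\{X_{m-a^*},X_{m+a^*}\}$, use the definition of $a^*$ to make each log-term of Lemma~\ref{lem:mqmc2} at most $\epsilon/6$ so that $\eT(X_Q^{\star}\mid X_m)\le\epsilon/2$ and $\sigma(X_Q^{\star})\le(4a^*-2)/\epsilon$, then note that any quilt with $\card{X_N}\ge 4a^*-2$ — in particular every one-sided or trivial quilt of the middle node when $T\ge 8a^*$ — already has score at least $(4a^*-2)/\epsilon$ because the \effect\ is nonnegative.

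Part (b), however, has a genuine gap. You bound $\sigma_i$ for every node by $(4a^*-2)/\epsilon$ and then assert $(4a^*-2)/\epsilon=\sigma_m$, but that equality is unjustified: $(4a^*-2)/\epsilon$ is only an \emph{upper bound} on $\sigma(X_Q^{\star})$, which is in turn only an upper bound on $\sigma_m=\min_{X_Q\in S_{Q,m}}\sigma(X_Q)$; in general the optimal quilt for the middle node has a strictly smaller score (the $\epsilon/2$ bound on the max-influence is loose, and a different quilt may win the minimization). Knowing $\sigma_i\le(4a^*-2)/\epsilon$ and $\sigma_m\le(4a^*-2)/\epsilon$ does not rule out $\sigma_i>\sigma_m$, so it does not establish $\sigma_{\max}=\sigma_m$ — and that equality is precisely what the fast algorithm relies on, since it calibrates the noise to $\sigma_m$ alone. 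The repair is the paper's Lemma~\ref{lem:mqm fast}: take the \emph{actual} optimal quilt $\{X_{m-a},X_{m+b}\}$ of the middle node (two-sided with $a+b\le 4a^*$ by part (a)) and translate \emph{those} offsets to any other node $i'$. If $i'-a\ge 1$ and $i'+b\le T$, the translated quilt has the same $\card{X_N}=a+b-1$ and the same position-independent max-influence bound, hence score exactly $\sigma_m$, so $\sigma_{i'}\le\sigma_m$; otherwise the one-sided quilt $\{X_{i'-a}\}$ (or $\{X_{i'+b}\}$) has $\card{X_{N'}}\le a+b-1$ and a max-influence bound obtained by dropping one of the log-terms, so again $\sigma_{i'}\le\sigma_m$. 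Your boundary-case analysis is structurally the right idea, but it must be run with the offsets of the middle node's optimal quilt rather than with the fixed offset $a^*$.
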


Lemma~\ref{lem:fastmqm} implies that for long enough chains, it is sufficient to search over Markov Quilts of length $\ell = 4 a^*$, and only over $X_i = X_{\lceil T/2 \rceil}$; this leads to a running time of $O((a^*)^2)$, which is considerably better and independent of the length of the chain.%

\begin{algorithm}
\caption{\mmqmapprox (Database $D$, $L$-Lipschitz query $F$, $\Theta$ containing Markov chains of length $T$, privacy parameter $\epsilon$, maximum Markov Quilts length $\ell$}
\label{alg:mqmApprox}
\begin{algorithmic}
\FOR {all $X_i$}
 \STATE {$S_{Q,i} = \{\{X_{i-a},X_{i+b}\}: i-a>0, i+b\leq T, a+b<\ell\}$ $\cup$ $\{\{X_{i-a}\}: i-a\geq T-\ell\}$ $\cup$ $\{\{X_{i+b}\}: i+b< \ell\}$ $\cup$ $\{\emptyset\}$ \qquad /*\textbf{all quilts with end-points at distance $\leq \ell$ plus the trivial quilt }*/} 
 \FOR {all Markov Quilts $X_Q$ in $S_{Q, i}$}
	\STATE{Calculate $e_{\Theta}(X_Q | X_i)$ from Lemma~\ref{lem:mqmc2}}
	\IF {\effect\ $e_{\Theta}(X_Q | X_i) < \epsilon$} 
		\STATE{$\sigma(X_Q) = \frac{\card{X_N}}{\epsilon - e_{\Theta}(X_Q | X_i)}$ \qquad /*\textbf{score of $X_Q$}*/}
	\ELSE 
		\STATE{$\sigma(X_Q) = \infty$}
	\ENDIF
 \ENDFOR 
\STATE {$\sigma_i = \min_{X_Q \in S_{Q,i}} \sigma(X_Q)$}  
\ENDFOR
\STATE {$\sigma_{\max} = \max_i \sigma_i$}
\STATE{\textbf{return} $F(D) + (L \cdot \sigma_{\max}) \cdot Z$, where $Z \sim \Lap(1)$}
\end{algorithmic}%
\end{algorithm}

\mypara{Utility.} We conclude with an utility analysis of Algorithm~\ref{alg:mqmApprox}.
\begin{theorem}[Utility Guarantees]
\label{thm:mqm_utility_lemma}
Suppose we apply Algorithm~\ref{alg:mqmApprox} to release an approximation to a $1$-Lipschitz query $F$ of the states in the Pufferfish instantiation in Example 1. If the length $T$ of the chain satisfies: \\ $T \geq 8 \lceil{\log\left(\frac{\exp({\epsilon/6})+1}{\exp({\epsilon/6})-1} \frac{1}{\pi_{\Theta}}\right)}/{\g_{\Theta}}\rceil + 3$, then the Laplace noise added by the Markov Quilt Mechanism has scale parameter $\leq C/\epsilon$ for some positive constant $C$ that depends only on $\Theta$.
\end{theorem}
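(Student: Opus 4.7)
The plan is to chain together Lemmas~\ref{lem:mqmc2} and~\ref{lem:fastmqm}. The length assumption $T \geq 8a^* + 3$ essentially coincides with the hypothesis of Lemma~\ref{lem:fastmqm}, which delivers two structural facts: the overall maximum $\sigma_{\max}$ is attained at the middle index $i^* = \lceil T/2 \rceil$, and the minimizer over Markov Quilts for $X_{i^*}$ has the two-endpoint form $\{X_{i^* - a}, X_{i^* + b}\}$ with $a + b \leq 4 a^*$. Because $\sigma_{i^*}$ is defined as a minimum over the quilt set, it suffices to exhibit one specific quilt for $X_{i^*}$ whose score is at most $C/\epsilon$.

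For that exhibit, I would take the symmetric quilt $X_Q = \{X_{i^* - a^*}, X_{i^* + a^*}\}$. The inequality $T \geq 8a^* + 3$ ensures both endpoints lie inside $[1,T]$, and this quilt has $|X_N| = 2a^* - 1$. Next I would verify the hypothesis of Lemma~\ref{lem:mqmc2}: since $(e^{\epsilon/6}+1)/(e^{\epsilon/6}-1) > 1$, the definition of $a^*$ gives $a^* \geq 2 \log(1/\piT)/\gT$, as required. Then I would plug $a = b = a^*$ into Lemma~\ref{lem:mqmc2} to bound the max-influence.

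The key algebraic step is to show that with this choice, each log-ratio appearing in Lemma~\ref{lem:mqmc2} is at most $\epsilon/6$. Writing $u = \exp(-\gT a^*/2)$, the defining inequality of $a^*$ gives $u \leq \piT \cdot (e^{\epsilon/6}-1)/(e^{\epsilon/6}+1)$. A short calculation then yields
\[ \frac{\piT + u}{\piT - u} \;\leq\; \frac{1 + (e^{\epsilon/6}-1)/(e^{\epsilon/6}+1)}{1 - (e^{\epsilon/6}-1)/(e^{\epsilon/6}+1)} \;=\; e^{\epsilon/6}, \]
so each logarithmic term is $\leq \epsilon/6$. Summing with the coefficients $1$ and $2$ in Lemma~\ref{lem:mqmc2} produces $e_\Theta(X_Q \mid X_{i^*}) \leq 3 \cdot \epsilon/6 = \epsilon/2$.

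Putting the pieces together, $\sigma_{\max} = \sigma_{i^*} \leq |X_N|/(\epsilon - e_\Theta(X_Q \mid X_{i^*})) \leq (2a^* - 1)/(\epsilon/2) \leq 4a^*/\epsilon$, so setting $C = 4a^*$ (a quantity determined by $\gT$ and $\piT$, with the prescribed $\epsilon$-dependence absorbed into $a^*$) finishes the argument. I expect the only nontrivial obstacle to be the algebraic manipulation showing $(\piT + u)/(\piT - u) \leq e^{\epsilon/6}$ at exactly the threshold value of $u$; everything else is bookkeeping about indices, parity of $T$, and confirming that the invoked lemmas apply.
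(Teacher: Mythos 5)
Your proposal follows the paper's proof essentially verbatim: the paper likewise takes the middle node, the symmetric quilt $\{X_{\lceil T/2\rceil - a^*}, X_{\lceil T/2\rceil + a^*}\}$, bounds its \effect\ by $\epsilon/2$ via Lemma~\ref{lem:mqmc2} and the definition of $a^*$, and then reduces all other nodes to the middle one via Lemma~\ref{lem:mqm fast}; the only difference is that you spell out the $(\piT+u)/(\piT-u)\le e^{\epsilon/6}$ algebra that the paper leaves implicit, and that computation is correct. One caution: with $a^* = 2\lceil\cdot\rceil$ as defined in Lemma~\ref{lem:fastmqm} (the normalization you are clearly using, since you take $|X_N|=2a^*-1$ and $C=4a^*$), the theorem's hypothesis reads $T \ge 4a^*+3$, not $T\ge 8a^*+3$, so Lemma~\ref{lem:fastmqm}'s requirement $T\ge 8a^*$ is not literally satisfied --- though the paper's own proof is equally loose on this point, as it invokes Lemma~\ref{lem:mqm fast} without verifying its hypothesis that the middle node's \emph{minimizing} quilt is a two-sided interior one.
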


The proof is in the Appendix. Theorem~\ref{thm:mqm_utility_lemma} implies that the noise added does not grow with $T$ and the {\em{relative accuracy}} improves with more and more observations. A careful examination of the proof also shows that the amount of noise added is an upper bound on the mixing time of the chain. Thus if $\Theta$ consists of rapidly mixing chains, then Algorithm~\ref{alg:mqmApprox} provides both privacy and utility.

\section{Experiments}\label{sec:experiment}

We next demonstrate the practical applicability of the Markov Quilt Mechanism when the underlying Bayesian network is a discrete time homogeneous Markov chain and the goal is to hide the private value of a single time entry -- in short, the setting of Section~\ref{sec:mc}. Our goal in this section is to address the following questions:

\begin{enumerate}\itemsep=1pt\parskip=1pt

\item What is the privacy-utility tradeoff offered by the Markov Quilt Mechanism as a function of the privacy parameter $\epsilon$ and the distribution class $\Theta$?

\item How does this tradeoff compare against existing baselines, such as~\cite{ghosh2016inferential} and Group-differential privacy?

\item What is the accuracy-run time tradeoff offered by the \mmqmapprox\ algorithm as compared with \mmqmexact?

\end{enumerate}

These questions are considered in three different contexts -- (a) a small problem involving a synthetic dataset generated by a two-state Markov Chain, (b) a medium-sized problem involving real physical activity measurement data and a four-state Markov Chain, and (c) a large problem involving real data on power consumption in a single household over time and a fifty-one state Markov Chain.

\subsection{Methodology}
\mypara{Experimental Setup.} Our experiments involve the Pufferfish instantiation of Example 1 described in Section~\ref{sec:examplemodel}. To ensure that results across different chain lengths are comparable, we release a private {\em{relative frequency histogram}} over states of the chain which represents the (approximate) fraction of time spent in each state. This is a vector valued query, and is $\frac{2}{T}$-Lipschitz in its $L_1$-norm.

For our experiments, we consider three values of the privacy parameter $\epsilon$ that are representative of three different privacy regimes -- $0.2$ (high privacy), $1$ (moderate privacy), $5$ (low privacy). All run-times are reported for a desktop with a 3.00 GHz Intel Core i5-3330 CPU and 8GB memory.
 
\mypara{Algorithms.} Our experiments involve four mechanisms that guarantee $\epsilon$-Pufferfish privacy -- \mgroupdp, \minferen, \\
\mmqmapprox\ and \mmqmexact.

{\bf{\mgroupdp}}\ is a simple baseline that assumes that all entries in a {\em{connected}} chain are completely correlated, and therefore adds $\Lap(1/\epsilon)$ noise to each bin. {\bf{\minferen}}\ is the algorithm proposed by~\cite{ghosh2016inferential}, which defines and computes an ``influence matrix'' for each $\theta\in\Theta$. The algorithm applies only when the spectral norm of this matrix is less than $1$, and the standard deviation of noise added increases as the spectral norm approaches $1$. We also use two variants of the Markov Quilt Mechanism -- {\bf{\mmqmexact}} and {\bf{\mmqmapprox}}. 

\begin{figure*}[!t]
    \centering
	\begin{subfigure}[b]{0.325\textwidth}
        \includegraphics[width=\textwidth]{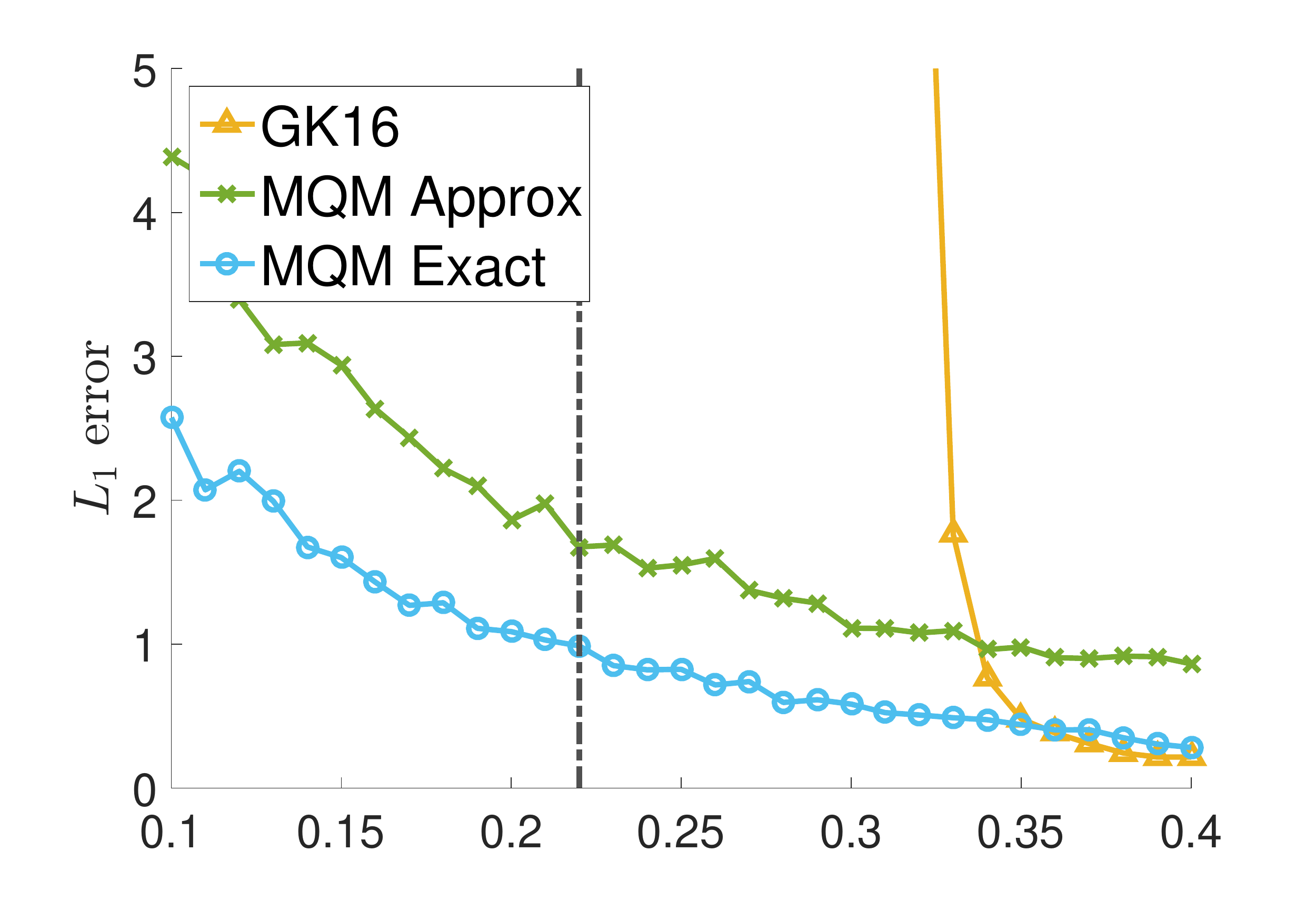}
		\caption{Synthetic binary chain. $\epsilon = 0.2$.}
		\label{fig:syn ep5}
    \end{subfigure}
    \begin{subfigure}[b]{0.325\textwidth}
        \includegraphics[width=\textwidth]{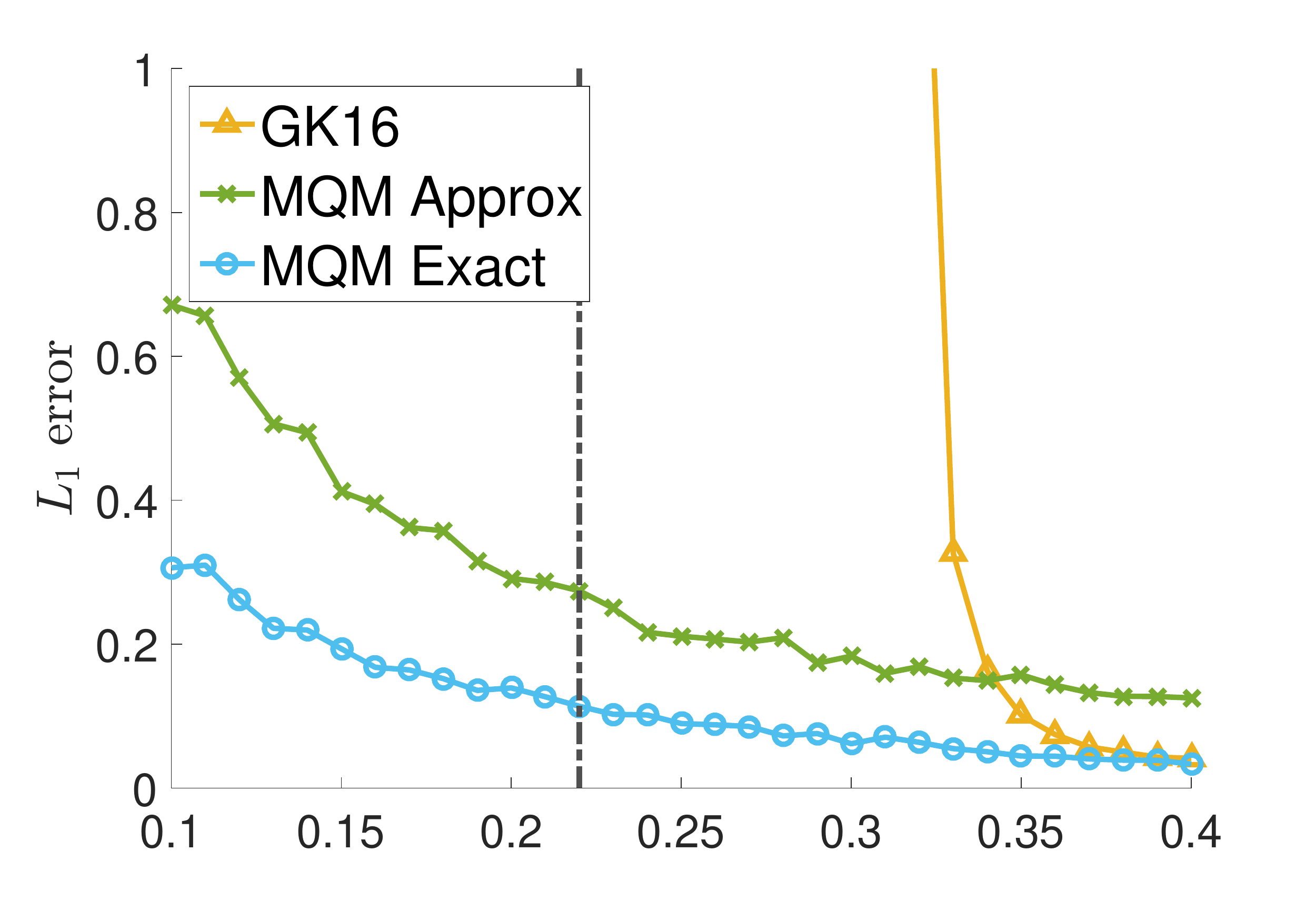}
        \caption{Synthetic binary chain. $\epsilon = 1$.}
        \label{fig:syn ep1}
    \end{subfigure}
    \begin{subfigure}[b]{0.325\textwidth}
        \includegraphics[width=\textwidth]{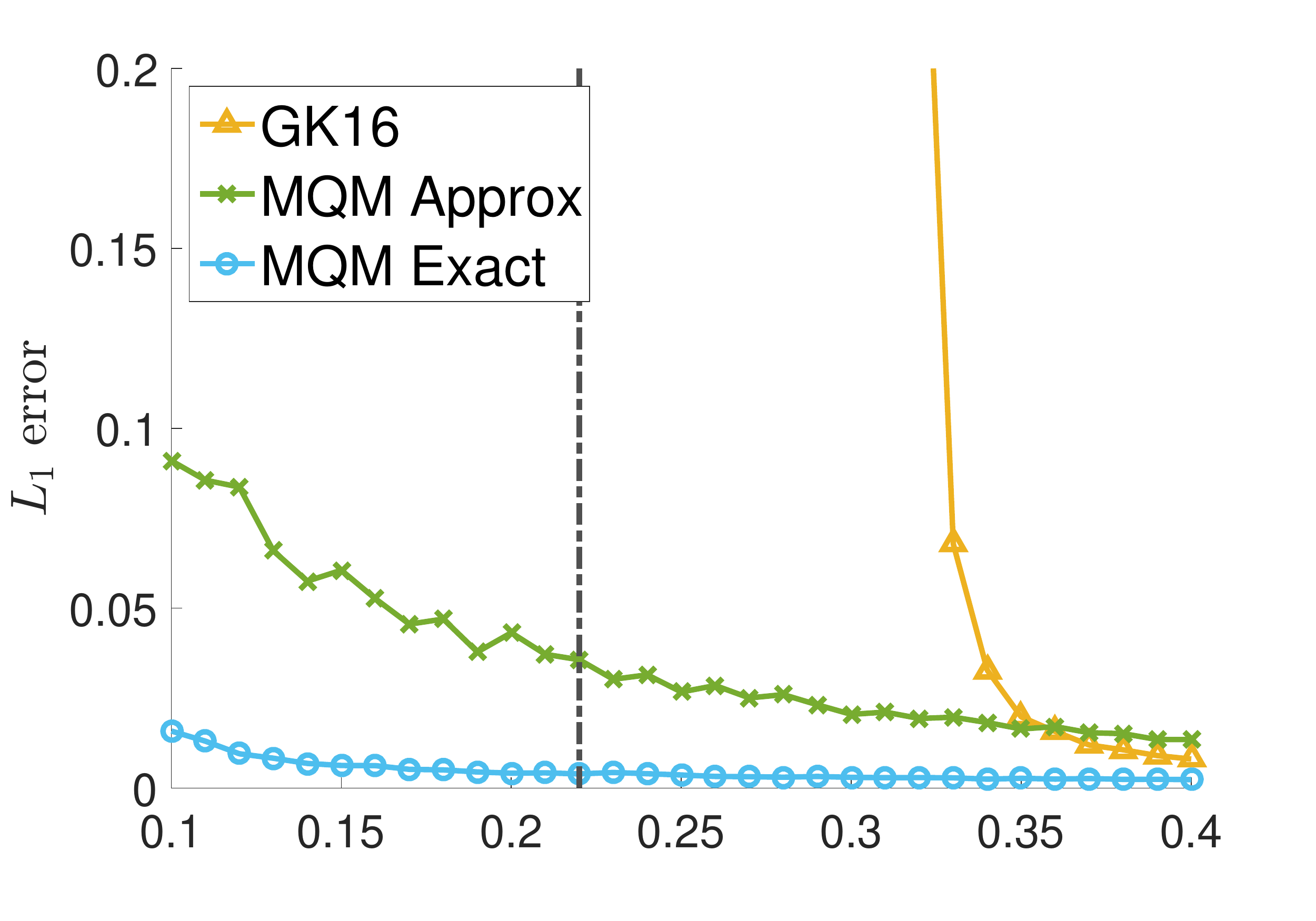}
		\caption{Synthetic binary chain. $\epsilon = 5$.}
		\label{fig:syn ep5}
    \end{subfigure}   
	
   	\begin{subfigure}[b]{0.325\textwidth}
        \includegraphics[width=\textwidth]{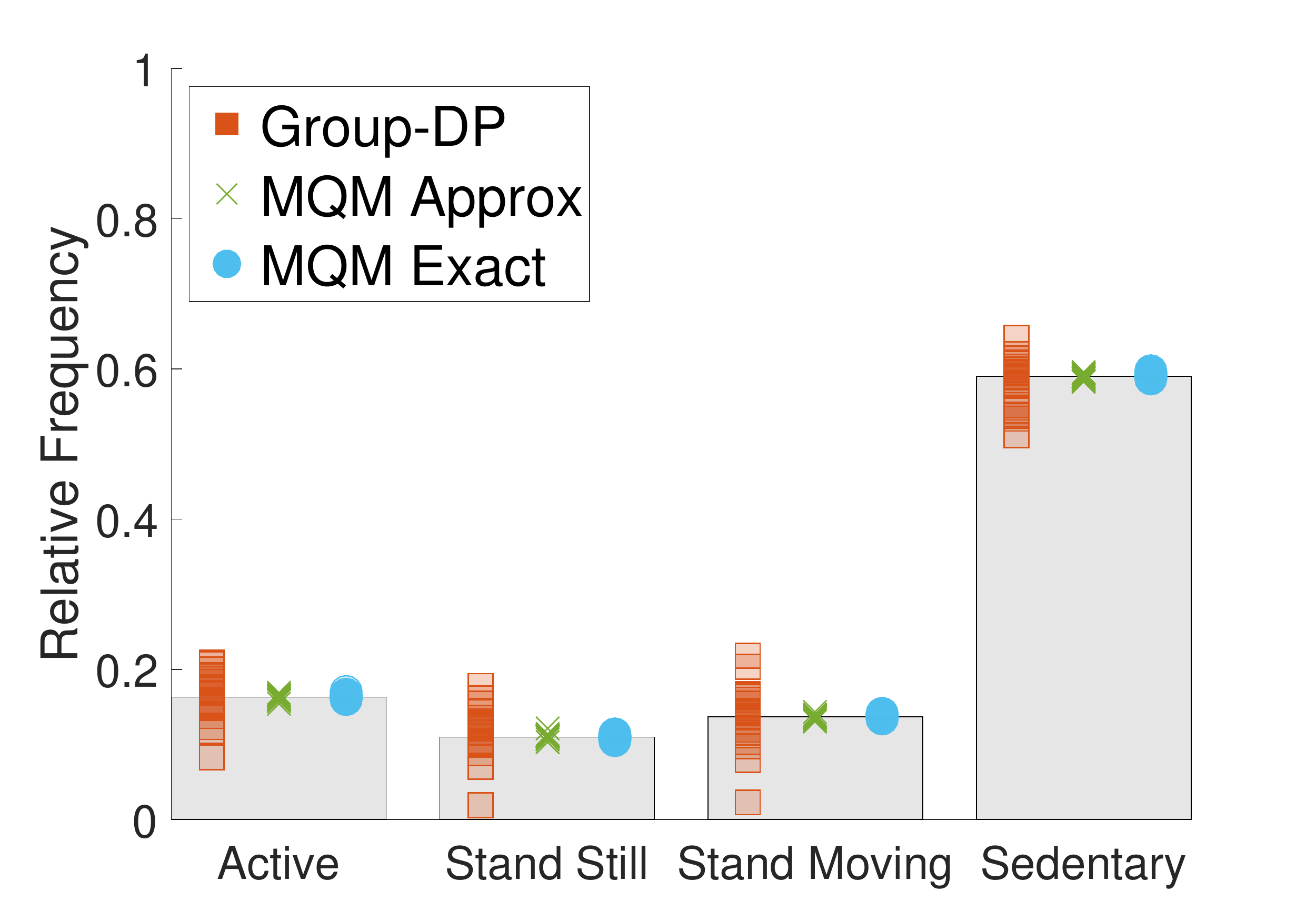}
		\caption{Cyclist aggregate. $\epsilon = 1$.}
    \end{subfigure}    
    	\begin{subfigure}[b]{0.325\textwidth}
        \includegraphics[width=\textwidth]{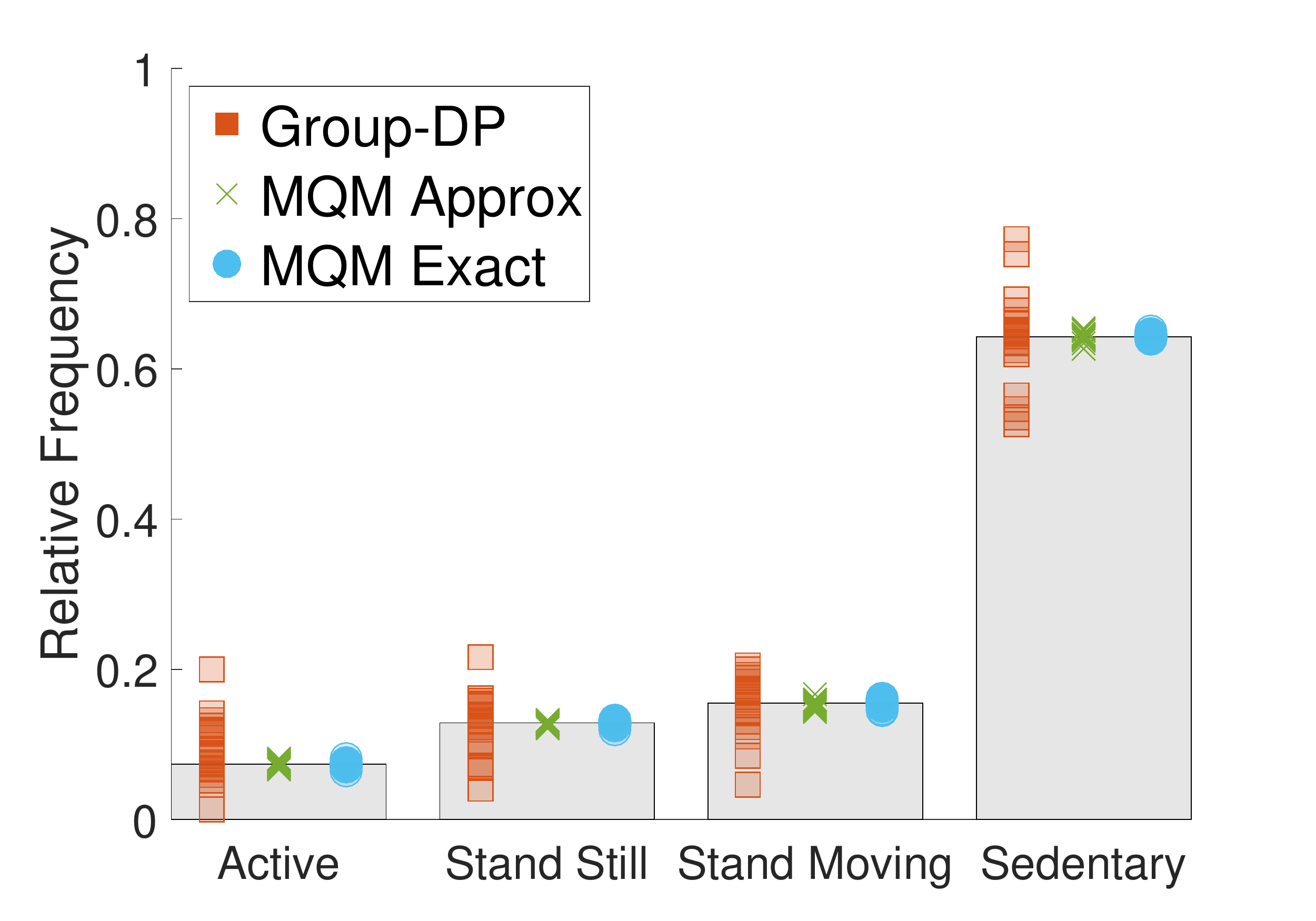}
		\caption{\dataolder\ aggregate. $\epsilon = 1$.}
    \end{subfigure} 
    	\begin{subfigure}[b]{0.325\textwidth}
        \includegraphics[width=\textwidth]{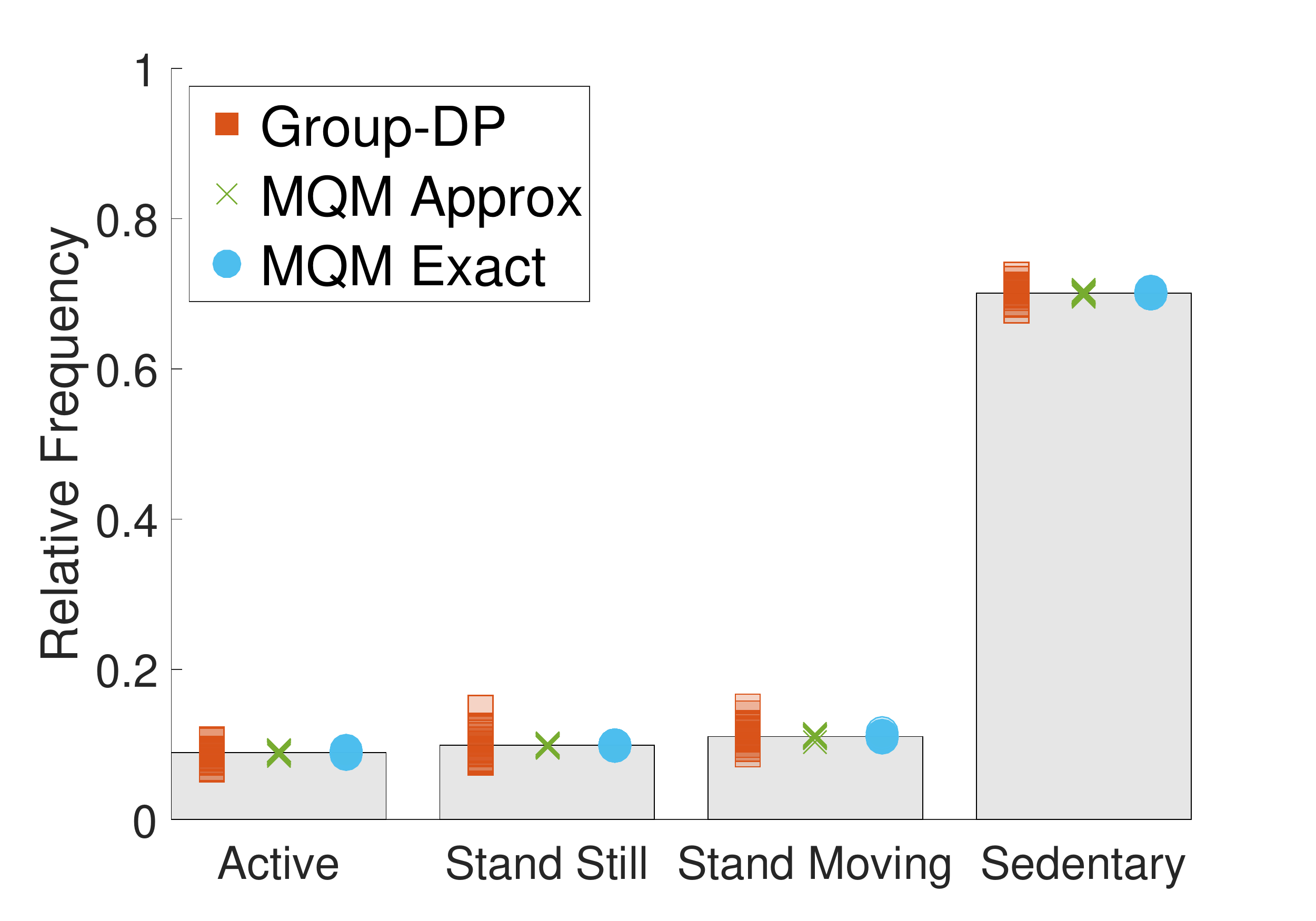}
		\caption{\dataoverw\ aggregate. $\epsilon = 1$.}
    \end{subfigure}

\caption{{\bf{Upper row:}} $L_1$ error of frequency of state $1$ vs. $\alpha$ for $\epsilon=0.2,1,5$ for synthetic data. Recall that $\Theta = [\alpha, 1 - \alpha]$, so $\Theta$ shrinks as we go right. \minferen\ does not apply left of the black dashed vertical line. \mgroupdp\ has error around $5, 1, 0.2$ for $\epsilon = 0.5, 1, 5$ respectively. Reported values are averaged over $500$ random trials. {\bf{Lower row:}} Exact and private aggregated physical activity relative frequency histograms for three groups of participants. Reported values are for $20$ random trials and $\epsilon=1$. Recall \minferen\ does not apply for this problem.}
\label{fig:compare_inferential}
\end{figure*}

\subsection{Simulations}

We first consider synthetic data generated by a binary Markov Chain of length $T = 100$ with states $\{0, 1\}$; the setup is chosen so that all algorithms are computationally tractable when run on reasonable classes $\Theta$. The transition matrix of such a chain is completely determined by two parameters -- $p_0 = \Pr(X_{i+1}=0|X_i=0)$ and $p_1 = \Pr(X_{i+1}=1|X_i=1)$, and its initial distribution by a single parameter $q_0 = \Pr(X_1 = 0)$. Thus a distribution $\theta \in \Theta$ is represented by a tuple $(q_0, p_0, p_1)$, and a distribution class $\Theta$ by a set of such tuples. To allow for effective visualization, we represent the distribution class $\Theta$ by an interval $[\alpha, \beta]$ which means that $\Theta$ includes all transition matrices for which $p_0, p_1 \in [ \alpha, \beta]$ and all initial distribution $q$ in the $2$-dimensional probability simplex. When $0<p_0,p_1<1$, the chain is guaranteed to be aperiodic, irreducible and reversible, and we can use the approximation from Lemma~\ref{lem:mqmc2_full} for \mmqmapprox. We use the optimization procedure described in Appendix~\ref{sec:mqmexactallinit} to improve the efficiency of \mmqmexact. Finally, since the histogram has only two bins, it is sufficient to look at the query $F(X) = \frac{1}{T} \sum_{i=1}^{T} X_i$ which is $\frac{1}{T}$-Lipschitz. 

To generate synthetic data from a family $\Theta = [\alpha, \beta]$, we pick $p_0$ and $p_1$ uniformly from $[\alpha, \beta]$, and an initial state distribution uniformly from the probability simplex. We then generate a state sequence $X_1, \ldots, X_T$ of length $T$ from the corresponding distribution. For ease of presentation, we restrict $\Theta$ to be intervals where $\beta = 1-\alpha$, and vary $\alpha$ from in $0.1$ to $0.4$. We vary $\epsilon$ in $\{0.2, 1, 5\}$, repeat each experiment $500$ times, and report the average error between the actual $F(X)$ and its reported value. For the run-time experiments, we report the average running time of the procedure that computes the scale parameter for the Laplace noise in each algorithm; the average is taken over all $\theta = (p_0, p_1)$ in a grid where $p_0, p_1$ vary in $\{0.1, 0.11, \dots, 0.9\}$.

Figure \ref{fig:compare_inferential} (upper row) shows the accuracy results, and Table~\ref{table:running time} (Column 2) the run-time results for $\epsilon=1$. The values for \mgroupdp\ have high variance and are not plotted in the figure; these values are around $5, 1, 0.2$ respectively. As expected, for a given $\epsilon$, the errors of \minferen, \mmqmapprox\ and \mmqmexact\ decrease as $\alpha$ increases, i.e, the distribution class $\Theta$ becomes narrower. When $\alpha$ is to the left of the black dashed line in the figure, \minferen\ does not apply as the spectral norm of the influence matrix becomes $>1$; the position of this line does not change as a function of $\epsilon$. In contrast, \mmqmapprox\ and \mmqmexact\ still provide privacy and reasonable utility. As expected, \mmqmexact\ is more accurate than \mmqmapprox, but requires higher running time. Thus, the Markov Quilt Mechanism applies to a wider range of distribution families than \minferen; in the region where all mechanisms work, \mmqmapprox\ and \mmqmexact\ perform significantly better than \minferen\ for a range of parameter values, and somewhat worse for the remaining range. 

\subsection{Real Data}
\begin{table*}[ht!]
\begin{center}
  \begin{tabular}{ l || c | c || c | c || c | c  }
    \hline
    \multirow{2}{*}{Algorithm} 		& \multicolumn{2}{|c||}{cyclist} & \multicolumn{2}{|c||}{\dataolder} & \multicolumn{2}{|c}{\dataoverw} \\ \cline{2-7}
    				& Agg & Indi & Agg & Indi & Agg & Indi \\\hline
    \mdp			& 0.2918 & N/A 		& 0.8746	& N/A 		& 0.4763 & N/A\\
    \mgroupdp 		& 0.0834 & 2.3157 	& 0.1138	& 1.7860 	& 0.0458 & 1.1492\\
    \minferen		& N/A 	 & N/A 		& N/A 	& N/A 	& N/A 		& N/A\\
    \mmqmapprox 	& 0.0107 & 0.6319 	& 0.0156 	& 0.2790 	& 0.0048 & 0.1967	\\
    \mmqmexact		& 0.0074 & 0.4077	& 0.0098	& 0.1742	& 0.0033 & 0.1316	\\
    \hline
  \end{tabular}
\end{center}
\vspace{-10pt}
\caption{$L_1$ error of the relative frequency histograms for individual and aggregate tasks for physical activity of three participant groups. $\epsilon=1$. Reported values are averaged over $20$ random trials.}
\label{table:experiment activity}
\vspace{-5pt}
\end{table*}

\begin{table*}[!ht]
\begin{center}
\begin{tabular}{ l || c || c | c | c || c}
\hline
Algorithm 	& Synthetic 						& cyclist 	& \dataolder & \dataoverw & electricity power\\ \hline
\minferen	& $6.3589\times 10^{-4}$ 	 		& N/A 		& N/A		 & N/A 		  & N/A 		\\
\mmqmapprox & $1.8458\times 10^{-4}$  			& 0.0064 	& 0.0060	 & 0.0028	  & 0.0567	\\
\mmqmexact	& $7.6794\times 10^{-4}$ 			& 1.5186 	& 1.2786	 & 0.6299	  & 282.2273			\\
\hline
\end{tabular}
\end{center}
\vspace{-10pt}
\caption{Running time (in seconds) of an optimized algorithm that calculates the scale parameter of the Laplace noise (averaged over $5$ runs). $\epsilon=1$.}
\label{table:running time}
\vspace{-15pt}
\end{table*}

We next apply our algorithm to two real datasets on physical activity measurement and power consumption. Since these are relatively large problems with large state-spaces, it is extremely difficult to search over all Markov Chains in a class $\Theta$, and both \minferen\ as well as \mmqmexact\ become highly computation intensive. To ensure a fair comparison, we pick $\Theta$ to be a singleton set $\{ \theta \}$, where $\theta = (q_{\theta}, P_{\theta})$; here $P_{\theta}$ is the transition matrix obtained from the data, and $q_{\theta}$ is its stationary distribution. For \mmqmapprox, we use $\ell$ from Lemma~\ref{lem:fastmqm}, while for \mmqmexact\, we use as $\ell$ the length of the optimal Markov Quilt that was returned by \mmqmapprox.

\subsubsection{Physical Activity Measurement} 

We use an activity dataset provided by~\cite{data_activity_cyclist, data_activity_overweight1, data_activity_overweight2}, which includes monitoring data from a study of daily habits of $40$ cyclists, $16$ \dataolders, and $36$ \dataoverws. The dataset includes four activities -- active, standing still, standing moving and sedentary -- for all three groups of participants,\footnote{For cyclists, data also includes cycling, which we merge with active for ease of analysis and presentation.} and thus the underlying $\theta$ is a four-state Markov Chain. Activities are recorded about every $12$ seconds for $7$ days when the participants are awake, which gives us more than $9,000$ observations per person on an average in each group. To address missing values, we treat gaps of more than $10$ minutes as the starting point of a new independent Markov Chain. Observe that this improves the performance of \mgroupdp, since the noise added is $\Lap(M/T\epsilon)$, where $M$ is the length of the longest chain. For each group of participants, we calculate a single empirical transition matrix $P_{\theta}$ based on the entire group; this $P_{\theta}$ is used in the experiments. 
For this application, we consider two tasks -- aggregate and individual. In the aggregate task, the goal is to publish a private aggregated relative frequency histogram\footnote{Recall that in a relative frequency histogram, we report the number of observations in each bin divided by the total number of observations.} over participants in each group in order to analyze their comparative activity patterns. While in theory this task can be achieved with differential privacy, this gives poor utility as the group sizes are small. In the individual task, we publish the relative frequency histogram for each individual and report the average error across individuals in each group. 

Table~\ref{table:experiment activity} summarizes the $L_1$ errors for the three groups and both tasks for $\epsilon=1$. For the aggregate task, we also report the error of a differentially private release (denoted as DP). Note that the error of the individual task is the average $L_1$ error of each individual in the group. Figure~\ref{fig:compare_inferential} (lower row) presents the exact and private aggregated relative frequency histograms for the three groups. Table~\ref{table:running time} (Columns 2-4) presents the time taken to calculate the scale parameter of the Laplce noise for each group of participants; as this running time depends on both the size of the group as well as the activity patterns, larger groups do not always have higher running times.

The results in Table~\ref{table:experiment activity} show that the utilities of both \mmqmapprox\ and \mmqmexact\ are significantly better than that of \mgroupdp for all datasets and tasks, and are significantly better than DP for the aggregated task. As expected, \mmqmexact\ is better than \mmqmapprox, but has a higher running time. We also find that \minferen\ cannot be applied to any of the tasks, since the spectral norm of the influence matrix is $> 1$; this holds for all $\epsilon$ as the spectral norm condition does not depend on the value of $\epsilon$. Figure~\ref{fig:compare_inferential} (lower row) shows the activity patterns of the different groups: the active time spent by the cyclist group is significantly longer than the other two groups, and the sedentary time spent by \dataoverws\ group is the longest. These patterns are visible from the private histograms published by \mmqmapprox\ and \mmqmexact, but not necessarily from those published by \mgroupdp.

\subsubsection{Electricity Consumption} 

\begin{table}[h]
\begin{center}
  \begin{tabular}{ l || c | c | c }
    \hline
    Algorithm 		& $\epsilon=0.2$ & $\epsilon=1$ & $\epsilon=5$ \\ \hline
    \mgroupdp  		& 516.1555    & 102.8868		& 19.8712\\
    \minferen			& N/A 	 & N/A 		& N/A\\
    \mmqmapprox 		& 0.3369 & 0.0614 	& 0.0113\\
    \mmqmexact		& 0.1298 & 0.0188 	& 0.0022\\
    \hline
  \end{tabular}
\end{center}
\vspace{-10pt}
\caption{$L_1$ error of relative frequency histogram for electricity consumption data. Reported values are averaged over $20$ random trials.}
\label{table:experiment power}
\vspace{-10pt}
\end{table}
We use data on electricity consumption of a single household in the greater Vancouver area provided by \cite{data_electricity}. Power consumption (in Watt) is recorded every $1$ minute for about two years. Missing values in the original recording have been filled in before the dataset is published. We discretize the power values into $51$ intervals, each of length $200$(W), resulting in a Markov chain with $51$ states of length $T\approx 1,000,000$. Our goal again is to publish a private approximation to the relative frequency histogram of power levels.   

Table~\ref{table:experiment power} reports the $L_1$ errors of the four algorithms on electricity power dataset for different $\epsilon$ values, and Table~\ref{table:running time} the times required to calculate the scale parameter. We again find that \minferen\ does not apply as the spectral norm condition is not satisfied. The error of \mgroupdp\ is very large, because the data forms a single Markov chain, and the number of states is relatively large. We find that in spite of the large number of bins, \mmqmapprox\ and \mmqmexact\ have high utility; for example, even for $\epsilon = 0.2$, the {\em{per bin}} error of \mmqmexact\ is about $0.25$ percent, and for $\epsilon = 1$, the {\em{per bin}} error is $0.04$ percent. Finally, while the running time of \mmqmexact\ is an order of magnitude higher than \mmqmapprox, it is still manageable ($< 5$ minutes) for this problem.

\vspace{15pt}
\newpage
\subsection{Discussion}

We now reconsider our initial questions. First, as expected, the experiments show that the utility of \minferen\ and both versions of \mqm\ decreases with decreasing $\epsilon$ and increasing size of $\Theta$. The utility of \mgroupdp\ does not change with $\Theta$ and is not very high; again this is to be expected as \mgroupdp\ depends only on the worst-case correlation. Overall the utility of both versions of the Markov Quilt Mechanism improve for longer chains. 

Comparing with \minferen, we find that for synthetic data, the Markov Quilt Mechanism applies to a much wider range of distribution families; in the region where all mechanisms work, \mmqmapprox\ and \mmqmexact\ perform significantly better than \minferen\ for a range of parameter values, and somewhat worse for the remaining range. For the two real datasets we consider, \minferen\ does not apply.  We suspect this is because the influence matrix in~\cite{ghosh2016inferential} is calculated based on local transitions between successive time intervals (for example, $X_t$ as a function of $X_{t-1}$); instead, the Markov Quilt Mechanism implicitly takes into account transitions across periods (for example, how $X_{t+k}$ as a function of $X_t$). Our experiments imply that this spectral norm condition may be quite restrictive in practical applications. 

Finally, our experiments show that there is indeed a gap between the performance of \mmqmexact\ and \mmqmapprox, as well as their running times, although the running time of \mmqmexact\ still remains manageable for relatively large problems. Based on these results, we recommend using \mmqmexact\ for medium-sized problems where the state space is smaller (and thus computing the \effect\ is easier) but less data is available, and \mmqmapprox\ for larger problems where the state space is larger but there is a lot of data to mitigate the effect of the approximation.

\vspace{-5pt}
\section{Conclusion}

We present a detailed study of how Pufferfish may be applied to achieve privacy in correlated data problems. We establish robustness properties of Pufferfish against adversarial beliefs, and we provide the first mechanism that applies to any Pufferfish instantiation. We provide a more computationally efficient mechanism for Bayesian networks, and establish its composition properties. We derive a version of our mechanism for Markov Chains, and evaluate it experimentally on a small, medium and a large problem on time series data. Our results demonstrate that Pufferfish offers a good solution for privacy in these problems.

We believe that our work is a first step towards a comprehensive study of privacy in correlated data. There are many interesting privacy problems -- such as privacy of users connected into social networks and privacy of spatio-temporal information gathered from sensors. With the proliferation of sensors and ``internet-of-things'' devices, these privacy problems will become increasingly pressing. We believe that an important line of future work is to model these problems in rigorous privacy frameworks such as Pufferfish and design novel mechanisms for these models.

\section{Acknowledgments} We thank Mani Srivastava and Supriyo Chakravarty for introducing us to the physical activity problem and early discussions and anonymous reviewers for feedback. This work was partially supported by NSF under IIS 1253942 and ONR under N00014-16-1-2616.

\pagebreak

\bibliographystyle{abbrv}
\bibliography{ref,privacy}  %

\appendix
\section{Pufferfish Privacy Details}
\begin{proof}(of Theorem~\ref{thm:Pufferfishsmoothness})
Since $\Theta$ is a closed set, for finite $D$, there must exist some distribution in $\Theta$ that achieves $\Delta$. Call this $\theta$. The ratio $\frac{P(M(X=w)|s_i, \ttheta)}{P(M(X)=w|s_j, \ttheta)}$ is equal to:
\begin{align}\label{eqn:pf pufferfishsmoothness1}
\frac{P(M(X)=w|s_i, \ttheta)}{P(M(X)=w|s_i, \theta)}
	\frac{P(M(X)=w|s_j, \theta)}{P(M(X)=w|s_j, \ttheta)}
	\frac{P(M(X)=w|s_i, \theta)}{P(M(X)=w|s_j, \theta)}.
\end{align}
Bu Pufferfish, the last ratio in \eqref{eqn:pf pufferfishsmoothness1} $\in [e^{-\epsilon}, e^\epsilon]$. Since the outcome of $M$ given $X$ is independent of the generation process for $X$, we have
\begin{align*}
&	\frac{P(M(X)=w|s_i, \ttheta)}{P(M(X)=w|s_i, \theta)}
=	\frac{\int_D P(M(D)=w)P(X=D|s_i, \ttheta) d D}{\int_D P(M(D)=w)P(X=D|s_i, \theta) d D} \\
\leq&	\max_{D} \frac{P(X=D|s_i, \ttheta)}{P(X=D|s_i, \theta)}
\leq	e^{D_{\infty}(X | \tilde{\theta},s_i \| X| \theta, s_i)} \leq e^\Delta.
\end{align*}
Similarly, we can show that this ratio is also $\geq e^{-\Delta}$. Applying the same argument to the second ratio in \eqref{eqn:pf pufferfishsmoothness1} along with simple algebra concludes the proof.
\end{proof}
\section{Wasserstein Mechanism Proofs}%

\begin{proof}(Of Theorem~\ref{thm:wp})
Let $(s_i, s_j)$ be secret pair in $\calQ$ such that that $P(s_i | \theta), P(s_j | \theta) > 0$. Let $\mu_{i, \theta}$,$\mu_{j, \theta}$ be defined as in the Wasserstein Mechanism. Let $\gamma^* = \gamma^*(\mu_{i, \theta}, \mu_{j, \theta})$ be the coupling between $\mu_{i, \theta}$ and $\mu_{j, \theta}$ that achieves the $\infty$-Wasserstein distance. 

Let $M$ denote the Wasserstein mechanism. For any $w$, the ratio $\frac{P(M(X) = w | s_i, \theta)}{P(M(X) = w | s_j, \theta)}$ is:
\begin{eqnarray}
& = & \frac{\int_t P(F(X) = t | s_i, \theta) P(Z = w - t) dt }{ \int_s P(F(X) = s | s_j, \theta) P(Z = w - s) ds} \nonumber \\
& = & \frac{\int_t P(F(X) = t | s_i, \theta) e^{-\epsilon|w - t|/ W} dt }{ \int_s P(F(X) = s | s_j, \theta) e^{-\epsilon |w - s|/ W} ds} \nonumber \\
& = & \frac{\int_t \int_{s = t - W}^{t + W} \gamma^*(t, s) e^{-\epsilon|w - t|/W} ds dt }{\int_s \int_{t = s - W}^{s + W} \gamma^*(t, s) e^{-\epsilon|w - s|/W} dt ds }, \label{eqn:ws1}
\end{eqnarray}
where the first step follows from the definition of the Wasserstein mechanism, the second step from properties of the Laplace distribution, and the third step because for all $W \geq W_\infty(\mu_{i,\theta},\mu_{j,\theta})$, we have 
\[ \mu_{i,\theta}(t) = P(F(X) = t | s_i, \theta)  = \int_{s = t - W}^{t+W} \gamma^*(t, s) ds \]

An analogous statement also holds for $\mu_{j, \theta}(s)$. Observe that in the last step of~\eqref{eqn:ws1}, $|s - t| \leq W$ in both the numerator and denominator; therefore we have that the right hand side of~\eqref{eqn:ws1} is at most:
\begin{align} \label{eqn:ws2}
 e^{\epsilon} &\frac{\int_t \int_{s = t - W}^{t + W} \gamma^*(t, s) e^{-\epsilon|w - s|/W} ds dt }{\int_s \int_{t = s - W}^{s + W} \gamma^*(t, s) e^{-\epsilon|w - s|/W} dt ds } \\
& = e^{\epsilon} \frac{\int_t \int_s \gamma^*(t, s) e^{-\epsilon|w - s|/W} ds dt }{\int_s \int_{t} \gamma^*(t, s) e^{-\epsilon|w - s|/W} dt ds} \leq e^{\epsilon},
\end{align}
where the last step follows as $\gamma^*(t, s) = 0$ when $|s - t| > W$. A similar argument shows that $\frac{P(M(X) = w | s_j, \theta)}{P(M(X) = w | s_i, \theta)} \leq e^{\epsilon}$, thus concluding the proof.\end{proof}

\subsection{Comparison with Group DP}
\label{proof:wmgdp}
\label{proof:wsvsgdp}

Consider a group-DP framework parameterized by a set of groups $\calG = (G_1, \ldots, G_k)$. For each $k$, we use $X_{G_k} \subset X$ to denote the records of all individuals in $G_k$, and we define $D_k = \{(x, y) | x, y\text{ differ only by records in } X_{G_k}\}$.

\begin{definition}[Global Sensitivity of Groups]
We define the global sensitivity of a query $F$ with respect to a group $G_k$ as: $\Delta_{G_k} F = \max_{(x, y)\in D_k} |F(x) - F(y)|$. 
The Global Sensitivity of $F$ with respect to an a Group DP framework $\calG$ is defined as: $\Delta_{\calG}F = \max_{k \in \kset{m}}\Delta_{G_k}F.$
\end{definition}

Analogous to differential privacy, adding to the result of query $F$ Laplace noise with scale parameter $\Delta_{\calG}F/\epsilon$ will provide in $\epsilon$-group DP in the framework $\calG$~\cite{DR14}.

We begin by formally defining a group DP framework corresponding to a given Pufferfish framework. Suppose we are given a Pufferfish instantiation $(\calS, \calQ, \Theta)$ where data $X$ can be written as $X = \{X_1, \cdots, X_n\}$, $X_i \in \calX$, the secret set $\calS = \{ s^i_a\ : a \in \calX, i \in \kset{n}\}$ where $s_a^i$ is the event that $X_i = a$, and the set of secret pairs $\calQ = \{ (s^i_a, s^i_b): a, b \in \calX, a\neq b, i \in \kset{n} \}$. In the corresponding Group DP framework $\calG$, $G_1, \ldots, G_m$ is a partition of $\kset{n}$, such that for any $p\neq q$, $X_i$ and $X_j$ are independent {\em{in all $\theta \in \Theta$}} if $i\in G_p, j \in G_q$.

\begin{proof} (Of Theorem~\ref{thm:wsvsgdp}) All we need to prove is $W \leq \Delta_{\calG}F$, where $W$ is the noise parameter in the Wasserstein Mechanism and $\Delta_{\calG}F$ is the global sensitivity of $F$ in the group-DP framework $\calG$.

For any secret pair $(s^i_a, s^i_b) \in \calQ$, let $k$ be such that $i \in G_k$. Let $X_T = X \setminus X_{G_k}$. For all realizations $x_T$ of $X_T$, define $\nu_{i,a,\theta, x_T} =\Pr(F(X) |X_i=a, X_T = x_T, \theta)$ and $\nu_{i,b,\theta, x_T} =\Pr(F(X_S, X_T = x_T) |X_i=b, \theta)$. 

Since any $X_j \in X_T$ is independent of $X_i$, for all realizations $x_T$ of $X_T$, we have $\Pr(X_T=x_T|X_i=a, \theta) = \Pr(X_T = x_T|X_i=b, \theta) = \Pr(X_T = x_T|\theta)$. And we have $\mu_{i,a,\theta} = \sum_{x_T} P(X_T=x_T | \theta) \nu_{i,a,\theta, x_T}$, and similarly, $\mu_{i,b,\theta} = \sum_{x_T} P(X_T=x_T | \theta) \nu_{i,b,\theta, x_T}$. 
As these two probability distributions are mixtures of the $\nu_{i,a,\theta, x_T}$s and $\nu_{i,b,\theta, x_T}$s with the same mixing coefficients, by Lemma \ref{lemma:wdofmixture}, we have: \\
$W_\infty(\mu_{i,a,\theta}, \mu_{i,b,\theta}) \leq \max_{x_T} W_\infty(\nu_{i,a,\theta, x_T}, \nu_{i,b,\theta, x_T}).$

By definition, $\infty$-Wasserstein of two distributions is upper bounded by the range of the union of the two supports, $W_\infty(\nu_{i,a,\theta, x_T}, \nu_{i,b,\theta, x_T}) \leq \max_{x_{G_k}, x'_{G_k}}|F(X_{G_k}=x_{G_k}, X_T=x_T) - F(X_{G_k}= x'_{G_k}, X_T=x_T)|$, which is $\leq \Delta_{G_k} F$. Thus, $W_\infty(\mu_{i,a,\theta}, \mu_{i,b,\theta}) \leq \Delta_{\calG}F.$ Since this holds for all $(s^i_a, s^i_a) \in Q$ and all $\theta \in \Theta$, $W \leq \Delta_{\calG}F.$
\end{proof}

\begin{lemma}\label{lemma:wdofmixture}
Let $\{\mu_i\}_{i=1}^n$, $\{\nu_i\}_{i=1}^n$ be two collections of probability distributions, and let $\{c_i\}_{i=1}^n$ be mixing weights such that $c_i \geq 0, \sum_i c_i = 1$. Let $\mu = \sum_i c_i\mu_i$ and $\nu = \sum_i c_i\nu_i$ be mixtures of the $\mu_i$'s and $\nu_i$'s with shared mixing weights $\{ c_i\}$. Then $W_\infty(\mu, \nu) \leq \max_{i\in\kset{n}} W_\infty(\mu_i, \nu_i)$.
\end{lemma}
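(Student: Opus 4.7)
The plan is to construct an explicit coupling of $\mu$ and $\nu$ whose support lies within the prescribed distance, and then invoke the infimum in the definition of $W_\infty$. Concretely, for each $i \in \kset{n}$ let $\gamma_i \in \Gamma(\mu_i, \nu_i)$ be a coupling that (approximately) achieves $W_\infty(\mu_i, \nu_i)$, and form the mixture coupling
\[
\gamma \;=\; \sum_{i=1}^n c_i\, \gamma_i.
\]
First I would verify that $\gamma \in \Gamma(\mu, \nu)$: the first marginal of $\gamma$ is $\sum_i c_i \mu_i = \mu$, and the second marginal is $\sum_i c_i \nu_i = \nu$, using linearity of marginalization and the fact that $\gamma_i$ has marginals $\mu_i, \nu_i$.

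Next I would bound the support of $\gamma$. Since $\mathrm{support}(\gamma) \subseteq \bigcup_{i : c_i > 0} \mathrm{support}(\gamma_i)$, any $(x,y) \in \mathrm{support}(\gamma)$ lies in some $\mathrm{support}(\gamma_i)$, giving
\[
|x-y| \;\le\; \max_{(x',y') \in \mathrm{support}(\gamma_i)} |x'-y'| \;\le\; W_\infty(\mu_i,\nu_i) \;\le\; \max_{j \in \kset{n}} W_\infty(\mu_j, \nu_j).
\]
Taking the maximum over $\mathrm{support}(\gamma)$ and then the infimum over couplings in the definition of $W_\infty(\mu,\nu)$ yields the claim.

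The main subtlety is that the infimum in the definition of $W_\infty$ is not guaranteed to be attained, so strictly speaking I cannot fix an \emph{optimal} $\gamma_i$. I would handle this by picking, for any $\eta>0$, couplings $\gamma_i^\eta \in \Gamma(\mu_i,\nu_i)$ with $\max_{(x,y)\in\mathrm{support}(\gamma_i^\eta)} |x-y| \le W_\infty(\mu_i,\nu_i) + \eta$, then forming $\gamma^\eta = \sum_i c_i \gamma_i^\eta$ and arguing as above to get $W_\infty(\mu,\nu) \le \max_i W_\infty(\mu_i,\nu_i) + \eta$; letting $\eta \to 0$ finishes the proof. I would note in passing that terms with $c_i = 0$ can simply be dropped from the construction, so only the $i$ with $c_i>0$ contribute to the support of $\gamma$, which is consistent with the $\max$ on the right-hand side (taken over all $i$, which only makes the bound larger). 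This is really the only obstacle; the rest is a direct application of the definition.
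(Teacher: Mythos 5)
Your proposal is correct and follows essentially the same route as the paper: form the mixture coupling $\gamma = \sum_i c_i \gamma_i$, verify its marginals are $\mu$ and $\nu$, and bound the support by the worst of the component supports. The only difference is that you handle the possible non-attainment of the infimum via $\eta$-approximate couplings, whereas the paper simply assumes an optimal $\gamma_i$ exists; this is a minor (and welcome) refinement rather than a different argument.
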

\begin{proof}
For all $i\in\kset{n}$, let $\gamma_i$ be the coupling between $\mu_i$ and $\nu_i$ that achieves $W_\infty(\mu_i, \nu_i)$. 
Let the support of $\mu, \nu$ be $S$, and that of $\mu_i, \nu_i$ be $S_i$. We can extend each coupling $\gamma_i$ to have value $0$ at $(x,y) \in S\backslash S_i$.

We can construct a coupling $\gamma$ between $\mu$ and $\nu$ as follows: $\gamma(x, y) = \sum_{i=1}^n c_i \gamma_i(x, y)$.  Since this is a valid coupling, $W_\infty(\mu, \nu) \leq \max_{(x,y)\in A} |x - y|$
where $A = \{(x,y)|\gamma(x, y) \neq 0\}$. And we also know that $\max_{(x,y)\in A} |x - y| = \max_i W_\infty(\mu_i, \nu_i)$. Therefore we have $W_\infty(\mu, \nu) \leq \max_{i\in\kset{n}} W_\infty(\mu_i, \nu_i).$
\end{proof}

\section{Markov Quilt Mechanism}%

\subsection{General Properties}\label{sec:appendix_general_prop_mqm}

\begin{proof}(of Theorem~\ref{thm:mq})
Consider any secret pair $(s^i_a, s^i_b) \in \calQ$ and any $\theta \in \Theta$. Let $X_{Q,i}^*$ (with $X_{N,i}^*$, $X_{R,i}^*$) be Markov Quilt with minimum score for $X_i$. Since the trivial Markov Quilt $X_Q = \emptyset$ has $\eT(X_Q | X_i) = 0$ and $\sigma(X_Q) = n/\epsilon$, this score is $\leq n/\epsilon$. Below we use the notation $X_{Q}^*$, $X_{N}^*$, $X_{R}^*$ to denote $X_{Q,i}^*$, $X_{N,i}^*$, $X_{R,i}^*$ and use $X_{Q}^* \cup X_{R}^* = X^*_{R \cup Q}$.

For any $w$, we have $\frac{ P(F(X) + L \sigma_{\max} Z = w | X_i = a, \theta) }{P(F(X) + L \sigma_{\max} Z = w | X_i = b, \theta)}$ is at most:
\begin{align} \label{eqn:mqprivacy}
& \max_{x^*_{R \cup Q}} 
\frac{ P(F(X) + L \sigma_{\max} Z = w | X_i = a, X^*_{R \cup Q}=x^*_{R \cup Q}, \theta) } {P(F(X) + L \sigma_{\max} Z = w | X_i = b, X^*_{R \cup Q}=x^*_{R \cup Q}, \theta) } \nonumber\\
& \qquad\qquad\qquad\qquad\quad \frac{P(X^*_{R \cup Q}=x^*_{R \cup Q} | X_i = a, \theta)}{P(X^*_{R \cup Q}=x^*_{R \cup Q} | X_i = b, \theta)}.
\end{align}

Since $F$ is $L$-Lipschitz, for a fixed $X^*_{R \cup Q}$, $F(X)$ can vary by at most $L\cdot\card{X^*_N}$; thus for $x^*_{R \cup Q}$, the first ratio in \eqref{eqn:mqprivacy} is $\leq e^{\epsilon - \eT(X^*_Q | X_i)}.$

Since $X^*_R$ is independent of $X_i$ given $X^*_Q$, the second ratio in \eqref{eqn:mqprivacy} is at most $e^{\eT(X^*_Q | X_i)}$. The theorem follows. 
\end{proof}

\begin{proof}(of Theorem~\ref{thm:seqcompose})
Let $L_k$ be the Lipschitz coefficient of $F_k$.
Consider any secret pair $(X_i=a,X_i=b)\in\calQ$. 
In a Pufferfish instantiation $(\calS,\calQ,\Theta)$, given $\epsilon$ and $S_{Q,i}$, the active Markov Quilt $X_{Q,i}^*$ for $X_i$ used by the Markov Quilt Mechanism is fixed. 
Therefore all $M_k$ use the same active Markov Quilt, and we denote it by $X_Q$, with corresponding $X_R,X_N$. 
Let $Z_k\sim\Lap(\sigma^{(k)})$ denote the Laplace noise added by the Markov Quilt Mechanism $M_k$ to $F_k(D)$. 
For any $k$, since $X_Q$ is the active Markov Quilt, we have $\sigma^{(k)} \geq \frac{L_k \card{X_N}}{\epsilon - \eT(X_Q |X_i)}$.
Let $X_\RQ=X_R\cup X_Q$. 
Then for any $\{w_k\}_{k=1}^K$, we have
\begin{align}\label{eqn:comp serial main}
&\frac{P(\forall k, F_k(X)+Z_k=w_k | X_i=a)}{P(\forall k, F_k(X)+Z_k=w_k | X_i=b)} \nonumber\\
=&\frac	{\int P(\forall k, F_k(X)+Z_k=w_k, X_\RQ = x_\RQ| X_i=a) d{x_\RQ}}
		{\int P(\forall k, F_k(X)+Z_k=w_k, X_\RQ = x_\RQ| X_i=b) d{x_\RQ}} \nonumber\\
=&\frac	{\int P(\forall k, F_k(X)+Z_k=w_k| X_\RQ = x_\RQ, X_i=a) }
		{\int P(\forall k, F_k(X)+Z_k=w_k| X_\RQ = x_\RQ, X_i=b) } \nonumber \\
&\qquad\qquad\qquad\qquad\qquad		\frac{P(X_\RQ = x_\RQ|X_i=a) d{x_\RQ}}{P(X_\RQ = x_\RQ|X_i=b) d{x_\RQ}}	\nonumber\\		
\leq&\max_{x_\RQ}\frac	{P(\forall k, F_k(X)+Z_k=w_k| X_\RQ = x_\RQ, X_i=a) }
						{P(\forall k, F_k(X)+Z_k=w_k| X_\RQ = x_\RQ, X_i=b) } \nonumber\\
&\qquad\qquad\qquad\qquad\qquad	\frac{P(X_\RQ = x_\RQ|X_i=a)}{P(X_\RQ = x_\RQ|X_i=b)}
\end{align}
Let $X_{N\backslash\{i\}} = X_N\backslash X_i$. The first ratio in \eqref{eqn:comp serial main} equals to
\begin{align*}
=&\frac	{\int P(\forall k, F_k(X)+Z_k=w_k, {X_{N\backslash\{i\}} = x_{N\backslash\{i\}}}}
		{\int P(\forall k, F_k(X)+Z_k=w_k, {X_{N\backslash\{i\}} = x_{N\backslash\{i\}}}} \\
&\qquad\qquad\qquad\qquad\qquad
  \frac{| X_\RQ = x_\RQ, X_i=a) d {x_{N\backslash\{i\}}}}{| X_\RQ = x_\RQ, X_i=b) d {x_{N\backslash\{i\}}}} \\
=&\frac	{\int P(\forall k, F_k(X)+Z_k=w_k|{X_{N\backslash\{i\}} = x_{N\backslash\{i\}}}, X_\RQ = x_\RQ,}
		{\int P(\forall k, F_k(X)+Z_k=w_k|{X_{N\backslash\{i\}} = x_{N\backslash\{i\}}}, X_\RQ = x_\RQ,} \\
&\qquad\qquad
\frac	{X_i=a) P({x_{N\backslash\{i\}}}|X_\RQ = x_\RQ,X_i=a) d {x_{N\backslash\{i\}}}}
		{X_i=b) P({x_{N\backslash\{i\}}}|X_\RQ = x_\RQ,X_i=b) d {x_{N\backslash\{i\}}}}.
\end{align*}
Let $F_k(a,{x_{N\backslash\{i\}}},x_\RQ)$ denote the value of $F_k(X)$ with $X_i=a$, $X_\RQ = x_\RQ$ and ${X_{N\backslash\{i\}}} = {x_{N\backslash\{i\}}}$. 
Since $F_k(X)$'s are fixed given a fixed value of $X$, and $Z_k$'s are independent, the above equals to
\begin{align*}
=&\frac	{\int \Pi_k P(Z_k=w_k-F_k(a,{x_{N\backslash\{i\}}},x_\RQ)) }
		{\int \Pi_k P(Z_k=w_k-F_k(b,{x_{N\backslash\{i\}}},x_\RQ)) }\\
&\qquad\quad
\frac	{ P({X_{N\backslash\{i\}}}={x_{N\backslash\{i\}}}|X_\RQ = x_\RQ,X_i=a) d {x_{N\backslash\{i\}}}}
		{ P({X_{N\backslash\{i\}}}={x_{N\backslash\{i\}}}|X_\RQ = x_\RQ,X_i=b) d {x_{N\backslash\{i\}}}} \\
\leq&\frac	{\Pi_k \max_{x_{N\backslash\{i\}}} P(Z_k=w_k-F_k(a,{x_{N\backslash\{i\}}},x_\RQ)) }
			{\Pi_k \min_{x_{N\backslash\{i\}}} P(Z_k=w_k-F_k(b,{x_{N\backslash\{i\}}},x_\RQ)) }\\
&\qquad
\frac	{\int P({X_{N\backslash\{i\}}}={x_{N\backslash\{i\}}}|X_\RQ = x_\RQ,X_i=a) d {x_{N\backslash\{i\}}}}
		{\int P({X_{N\backslash\{i\}}}={x_{N\backslash\{i\}}}|X_\RQ = x_\RQ,X_i=b) d {x_{N\backslash\{i\}}}}
\end{align*}
Since $P({X_{N\backslash\{i\}}}|X_\RQ = x_\RQ,X_i=a)$, $P({X_{N\backslash\{i\}}}|X_\RQ = x_\RQ,X_i=b)$ are probability distributions which integrate to $1$, the above equals to
\begin{align*}
&\frac	{\Pi_k \max_{x_{N\backslash\{i\}}} P(Z_k=w_k-F_k(a,{x_{N\backslash\{i\}}},x_\RQ)) }
			{\Pi_k \min_{x_{N\backslash\{i\}}} P(Z_k=w_k-F_k(b,{x_{N\backslash\{i\}}},x_\RQ)) }		.	
\end{align*}
Notice that $F_k$ can change by at most $L_k\cdot \card{N}$ when $X_{N\backslash\{i\}}$ and $X_i$ change. So for any ${x_{N\backslash\{i\}}}, {x'_{N\backslash\{i\}}}$,
\begin{align*}
&\frac {P(Z_k=w_k-F_k(a,{x_{N\backslash\{i\}}},x_\RQ))} {P(Z_k=w_k-F_k(b,{x'_{N\backslash\{i\}}},x_\RQ))} \leq e^{\epsilon - \eT(X_Q|X_i)}.
\end{align*}
Therefore the first ratio in \eqref{eqn:comp serial main} is upper bounded by $\Pi_k e^{\epsilon - \eT(X_Q|X_i)}.$

As has been analyzed in the proof to Theorem~\ref{thm:mq}, the second ratio is bounded by $e^{\eT(X_Q|X_i)}$.
Combining the two ratios together, \eqref{eqn:comp serial main} is upper bounded by 
$\Pi_k e^{\epsilon - \eT(X_Q|X_i)} e^{\eT(X_Q|X_i)} \leq e^{K \epsilon},$
and the theorem follows.
\end{proof}
\subsection{Markov Chains}

\begin{proof}(of Lemma~\ref{lem:mqmcminimal})
Consider a Markov Quilt $(X_N, X_Q, X_R)$ of $X_i$. $X_Q = X_{Q,l} \cup X_{Q,r}$ where all nodes in $X_{Q,l}$ have index smaller than $i$, and all nodes in $X_{Q,r}$ have index larger than $i$. When $X_Q$ is non-empty, there are three cases.

First, both $X_{Q,l}$ and $X_{Q,r}$ are non-empty. Then there exist positive integers $a,b$, such that $X_{i-a} \in X_{Q,l}$ is the node with the largest index in $X_{Q,l}$, and $X_{i+b}$ is the node with the smallest index in $X_{Q,r}$. 
Since $X_{Q,l}\backslash X_{i-a}$ is independent of $X_i$ given $X_{i-a}$, and $X_{Q,r}\backslash X_{i-b}$ is independent of $X_i$ given $X_{i-b}$, $\eT(X_Q | X_i) = \eT(\{X_{i-a},X_{i+b}\} | X_i)$. 
Also, all nodes in $\{X_{i-a+1},\dots,X_{i+b-1}\}$ should be included in $X_N$, since they are not independent of $X_i$ given $X_Q$, and thus $\card{X_N} \geq a+b-1$. Thus, if $X_{Q'} = \{X_{i-a},X_{i+b}\} \in S_{Q,i}$ with $X_{N'} = \{X_{i-a+1},\dots,X_{i+b-1}\}$, then $\sigma(X_{Q'})\leq \sigma(X_Q)$.

The second case is when $X_{Q,r}$ is empty but $X_{Q,l}$ is not. Still, there exists positive integer $a$ such that $X_{i-a} \in X_{Q,l}$ is the node with largest index in $X_{Q,l}$. Since $X_{Q,l}\backslash X_{i-a}$ is independent of $X_i$ given $X_{i-a}$, $\eT(X_Q | X_i) = \eT(\{X_{i-a}\} | X_i)$. Since all nodes in $\{X_{i-a+1},\dots,X_{T}\}$ are not independent of $X_i$ given $X_Q$, they should be included in $X_N$, and thus $\card{X_N} \geq T-i+a$. 
Now consider $X_{Q'} = \{X_{i-a}\} \in S_{Q,i}$ with $X_{N'} = \{X_{i-a+1},\dots,X_{T}\}$. The above implies that $\sigma(X_{Q'})\leq \sigma(X_Q)$. The third case is when $X_{Q,l}$ is empty but $X_{Q,r}$ is not; this is analogous to the second case.
\end{proof}

Before proving Lemma~\ref{lem:mqmc2}, we overload the notation $\gT$ in \eqref{def:g} to capture the case where $\Theta$ consists of irreducible, aperiodic and \emph{reversible} Markov chains:
\begin{align}\label{def:g2}
&\gT = \\
&\begin{cases}
2\min_{\theta \in \Theta} \min \{ 1-|\lambda| : P_{\theta} x = \lambda x, |\lambda| < 1 \}, \theta\in\Theta\text{ reversible}\\
\min_{\theta \in \Theta} \min \{ 1-|\lambda| : P_{\theta} P^*_{\theta} x = \lambda x, |\lambda| < 1 \}\text{, otherwise.} \nonumber
\end{cases}
\end{align}
Now we restate Lemma~\ref{lem:mqmc2} to provide a better upper bound when $P_\theta, \forall\theta\in\Theta$ is irreducible, aperiodic, and \emph{reversible}.
\begin{lemma} \label{lem:mqmc2_full}
Let $\piT$ be as in \eqref{def:pimin}, and $\gT$ be as in \eqref{def:g2}. Suppose $a, b$ are integers such that $\min(a, b) \geq \frac{2\log(1/\piT)}{\gT}$. Then, for $X_Q = \{ X_{i - a}, X_{i + b} \}$, $\eT(X_Q | X_i)$ is at most:
\begin{align*} 
\log \left( \frac{ \piT + \exp(-\gT b/2) }{ \piT - \exp(-\gT b/2) } \right)
 +  2 \log \left( \frac{ \piT + \exp(-\gT a/2) }{ \piT - \exp(-\gT a/2) } \right);
\end{align*}
for $X_Q = \{ X_{i-a} \}$, $\eT(X_Q | X_i)  \leq 2 \log \left( \frac{ \piT + \exp(-\gT a/2) }{ \piT - \exp(-\gT a/2) } \right)$;\\
for $X_Q = \{ X_{i+b} \}$, $\eT(X_Q | X_i)  \leq \log \left( \frac{ \piT + \exp(-\gT b/2) }{ \piT - \exp(-\gT b/2) } \right),$ and for $X_Q = \emptyset$,  $\eT(X_Q | X_i)  = \ 0.$
\end{lemma}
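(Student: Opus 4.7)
The plan is to build the bound for each case out of two ingredients: (i) a Markov-chain mixing inequality that controls how close an $a$-step (or $b$-step) transition probability is to the stationary distribution, and (ii) the conditional independence structure of the chain. The trivial case $X_Q=\emptyset$ gives $e_\Theta(X_Q\mid X_i)=0$ immediately since the empty event has a distribution independent of $X_i$.

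First I would establish the key mixing inequality: under the hypotheses of the lemma, for every $\theta\in\Theta$, every pair of states $x,y$, and every $a \ge 2\log(1/\piT)/\gT$,
\[
|P_\theta^a(x,y)-\pi_\theta(y)| \;\le\; \exp(-\gT a/2).
\]
In the reversible case this comes from the spectral decomposition of $P_\theta$ in the $\pi_\theta$-weighted inner product, where the second-largest eigenvalue in modulus equals $1-\gT/2$; in the non-reversible case one applies the analogous argument to the multiplicative reversibilization $P_\theta P_\theta^*$, whose spectral gap is $\gT$ by definition \eqref{def:g2}. The $1/\sqrt{\piT}$ prefactor that appears in the raw spectral bound is absorbed by the lower bound $a \ge 2\log(1/\piT)/\gT$ on the length of the quilt arm, which is exactly what the hypothesis is designed to achieve.

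Now I would handle the forward quilt $X_Q=\{X_{i+b}\}$. Here $P(X_{i+b}=y\mid X_i=u,\theta)=P_\theta^b(u,y)$, and the mixing inequality gives both numerator and denominator of $P_\theta^b(u,y)/P_\theta^b(v,y)$ in the interval $[\pi_\theta(y)-e^{-\gT b/2},\,\pi_\theta(y)+e^{-\gT b/2}]$. Taking the ratio and then maximizing over $\pi_\theta(y)\ge\piT$ (the ratio $\frac{p+\eta}{p-\eta}$ is decreasing in $p$) yields the stated $\log\bigl(\frac{\piT+e^{-\gT b/2}}{\piT-e^{-\gT b/2}}\bigr)$ bound. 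For the backward quilt $X_Q=\{X_{i-a}\}$ I would invoke Bayes' rule,
\[
\frac{P(X_{i-a}=x\mid X_i=u,\theta)}{P(X_{i-a}=x\mid X_i=v,\theta)}
\;=\;
\frac{P_\theta^a(x,u)}{P_\theta^a(x,v)}\cdot\frac{P(X_i=v\mid\theta)}{P(X_i=u\mid\theta)},
\]
and bound each of the two factors. The first factor is handled exactly as in the forward case. For the second factor, the key observation is that $P(X_i=y\mid\theta)=\sum_x P(X_{i-a}=x\mid\theta)\,P_\theta^a(x,y)$, and since the coefficients $P(X_{i-a}=x\mid\theta)$ sum to $1$ and each $P_\theta^a(x,y)$ lies in $[\pi_\theta(y)-e^{-\gT a/2},\pi_\theta(y)+e^{-\gT a/2}]$, the marginal $P(X_i=y\mid\theta)$ lies in the same interval regardless of the initial distribution. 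Multiplying the two factor bounds and again maximizing over $\pi_\theta\ge\piT$ produces the square, hence the factor $2$ in the stated bound.

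Finally, for the two-sided quilt $X_Q=\{X_{i-a},X_{i+b}\}$ I would use the Markov property: conditional on $X_i$, the past $X_{i-a}$ and the future $X_{i+b}$ are independent, so $P(X_{i-a},X_{i+b}\mid X_i,\theta)$ factors as the product of the two one-sided conditionals. The log-ratio is then the sum of the log-ratios from the backward and forward cases, giving the stated additive bound. The main obstacle is establishing the mixing inequality in the clean form $e^{-\gT a/2}$ uniformly over $\Theta$, since the raw spectral bound carries a $1/\sqrt{\piT}$ prefactor; showing that the $a \ge 2\log(1/\piT)/\gT$ hypothesis exactly compensates this prefactor (and similarly handling the non-reversible reduction to $P_\theta P_\theta^*$) is the delicate step that justifies the precise constants in the lemma.
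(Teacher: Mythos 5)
Your overall architecture coincides with the paper's: a spectral mixing bound showing $P_\theta^t(x,\cdot)$ is close to $\pi_\theta$ once $t\ge 2\log(1/\piT)/\gT$, a forward ratio bound for $X_Q=\{X_{i+b}\}$, a Bayes-rule argument for $X_Q=\{X_{i-a}\}$ in which the marginal ratio $P(X_i=v\mid\theta)/P(X_i=u\mid\theta)$ cancels the $\pi_\theta(u)/\pi_\theta(v)$ cross-term coming from $P_\theta^a(x,u)/P_\theta^a(x,v)$ and produces the square (hence the factor $2$), and conditional independence of past and future given $X_i$ to add the two one-sided bounds for the two-sided quilt. This is exactly the route of Lemmas~\ref{lem:expdecay} and~\ref{lem:fwdmc bwdmc} followed by the decomposition in the paper's proof. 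The one step that would fail as literally written is your ``key mixing inequality.'' The standard spectral estimate is a \emph{relative} bound, $\bigl|P_\theta^t(x,y)/\pi_\theta(y)-1\bigr|\le \exp(-\gT t/2)/\piT$ (the $1/\sqrt{\pi_\theta(x)\pi_\theta(y)}$ prefactor is simply bounded by $1/\piT$ and kept), and that is the form the paper uses; the hypothesis $\min(a,b)\ge 2\log(1/\piT)/\gT$ serves only to make this quantity at most $1$, so that denominators such as $\piT-\exp(-\gT a/2)$ stay positive --- it does not ``absorb'' the prefactor. Your \emph{absolute} form $|P_\theta^a(x,y)-\pi_\theta(y)|\le \exp(-\gT a/2)$ does not follow even under that hypothesis: trading the $1/\sqrt{\piT}$ against part of the exponential costs a constant in the exponent (you would only get $\exp(-\gT a/4)$). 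Fortunately this is a normalization slip rather than a structural gap: every ratio you subsequently form is either of the type $P^t(\cdot,y)/P^t(\cdot,y)$ with the same target state, or a product in which the stationary probabilities cancel, so substituting the relative bound gives $\frac{1+\Delta_t}{1-\Delta_t}$ with $\Delta_t=\exp(-\gT t/2)/\piT$, which equals $\frac{\piT+\exp(-\gT t/2)}{\piT-\exp(-\gT t/2)}$ --- exactly the constants in the lemma. With that substitution (and being slightly more careful that the two factors in the backward case are only controlled \emph{jointly}, not individually), your argument is the paper's proof.
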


The main ingredient in the proof of Lemma~\ref{lem:mqmc2} is from the following standard result in Markov Chain theory \cite{markovchaintheory}. 

\begin{lemma} 
\label{lem:expdecay}
Consider an \emph{aperiodic}, \emph{irreducible} $k$-state discrete time Markov Chain with transition matrix $P$.
Let $\pi$ be its stationary distribution and let $\pi^{\min} = \min_x \pi(x)$. Let $P^*$ be the time-reversal of $P$, and let $\g$ be defined as follows:
\begin{align*}
\g=
\begin{cases}
2\min \{ 1-|\lambda| : P x = \lambda x, |\lambda| < 1 \} \text{, }P\text{ is reversible}\\
\min \{ 1-|\lambda| : PP^* x = \lambda x, |\lambda| < 1 \} \text{, otherwise}.
\end{cases}
\end{align*}
If $\Delta_t = \frac{\exp(-t {\g}/2)}{\pi^{\min}}$, then $\left|\frac{P^t(x,y)}{\pi(y)} - 1\right| \leq \Delta_t$ for $t\geq \frac{2\log(1/\pi^{\min})}{\g}$.
\end{lemma}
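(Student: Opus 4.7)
The plan is to prove this by standard $L^2$-spectral techniques for Markov chain operators, then extract the claimed pointwise bound from an $L^2$-norm bound using a carefully chosen test function.

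First, I would set up the Hilbert space $L^2(\pi)$ with inner product $\langle f,g\rangle_\pi = \sum_x \pi(x)f(x)g(x)$, and regard $P$ as the operator $(Pf)(x) = \sum_y P(x,y)f(y)$. The constant function is the top eigenvector with eigenvalue $1$, and $P$ preserves the subspace $H_0 = \{f : \sum_x \pi(x)f(x) = 0\}$ of mean-zero functions. The key intermediate goal is the operator-norm bound
\[
\|P^t f\|_{L^2(\pi)} \le e^{-\g t/2}\,\|f\|_{L^2(\pi)} \qquad \text{for all } f\in H_0.
\]
In the reversible case, $P$ is self-adjoint on $L^2(\pi)$, so it has real eigenvalues $1=\lambda_1>|\lambda_2|\ge\cdots$, and $\|P|_{H_0}\|_2 = |\lambda_2| = 1-\g/2 \le e^{-\g/2}$ by $1-x\le e^{-x}$, giving the bound after iterating. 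In the non-reversible case, the time-reversal $P^*$ is the adjoint of $P$ in $L^2(\pi)$, so $\|P|_{H_0}\|_2^2 = \|P^*P|_{H_0}\|_2$; since $P^*P$ and $PP^*$ share the same non-zero spectrum, this equals $1-\g$, and iterating yields $\|P^t|_{H_0}\|_2 \le (1-\g)^{t/2} \le e^{-\g t/2}$.

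Next, I would apply this inequality to the specific test function $f_y(z) = \mathbf{1}\{z=y\}/\pi(y) - 1$. A direct computation shows $\pi(f_y)=0$, so $f_y\in H_0$, and $\|f_y\|_{L^2(\pi)}^2 = 1/\pi(y) - 1 \le 1/\pi^{\min}$. Moreover $(P^t f_y)(x) = P^t(x,y)/\pi(y) - 1$, which is exactly the quantity we wish to control. Since $\sum_x \pi(x)\bigl(P^t(x,y)/\pi(y)-1\bigr)^2 \le e^{-\g t}/\pi^{\min}$, dropping all terms except the one for our fixed $x$ yields $\pi(x)\bigl(P^t(x,y)/\pi(y)-1\bigr)^2 \le e^{-\g t}/\pi^{\min}$, hence
\[
\Bigl|\tfrac{P^t(x,y)}{\pi(y)} - 1\Bigr| \le \frac{e^{-\g t/2}}{\sqrt{\pi(x)\pi(y)}} \le \frac{e^{-\g t/2}}{\pi^{\min}} = \Delta_t.
\]
The hypothesis $t\ge 2\log(1/\pi^{\min})/\g$ simply ensures $\Delta_t\le 1$, so that the bound is meaningful (for smaller $t$ the inequality $|P^t(x,y)/\pi(y)-1|\le 1$ is vacuous since ratios can exceed $1$).

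The main obstacle is justifying the operator-norm step in the non-reversible case: specifically identifying $\|P|_{H_0}\|_2^2$ with the second-largest eigenvalue of $PP^*$ on $H_0$. This is exactly where the definition of $\g$ via $PP^*$ enters, and it relies on the standard Hilbert-space identity $\|A\|^2 = \|A^*A\|$ applied to $A = P|_{H_0}$, together with the observation that $P^*P$ and $PP^*$ have the same non-zero eigenvalues. Since the lemma is cited as a standard fact from Markov chain theory~\cite{markovchaintheory}, I would either cite the operator-norm inequality directly (e.g., from Fill 1991 or Levin--Peres--Wilmer) and spend the remainder of the proof on the clean $L^2$-to-$L^\infty$ conversion, or reproduce the spectral argument above in a short self-contained form.
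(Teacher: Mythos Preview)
Your proof is correct and is exactly the standard $L^2$-spectral argument one finds in references such as Levin--Peres--Wilmer or Fill. The paper itself does not prove this lemma: it is stated as ``a standard result in Markov Chain theory'' with a citation to \cite{markovchaintheory} and is used as a black box to derive Lemma~\ref{lem:fwdmc bwdmc}. So there is nothing to compare against --- your write-up simply supplies what the paper outsources to the literature, and your closing remark that one could ``cite the operator-norm inequality directly'' is precisely what the paper chose to do.
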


This lemma, along with some algebra and an Bayes Rule, suffices to show the following.
\begin{lemma} \label{lem:fwdmc bwdmc}
For any $\Theta$ consists of irreducible and aperiodic Markov chains, 
let $\piT$ be defined as in \eqref{def:pimin}, and $\gT$ be defined as in \eqref{def:g2}. Let $\Delta_t = \frac{\exp(-t \gT / 2)}{\piT}$.
Suppose $t > \frac{2\log(1/\piT)}{\gT}$. 
For any $\theta \in \Theta$ and any $x$, $x'$ and $y$,
\[ \frac{1 - \Delta_t}{1 + \Delta_t} \leq \frac{\Pr(X_{t+j} = y | X_{j} = x, \theta)}{\Pr(X_{t+j} = y | X_{j} = x', \theta)} \leq \frac{1 + \Delta_t}{1 - \Delta_t}. \]
\[ \left(\frac{1 - \Delta_t}{1 + \Delta_t}\right)^2 \leq \frac{\Pr(X_{j} = y | X_{j+t} = x, \theta)}{\Pr(X_{j} = y | X_{j+t} = x', \theta)} \leq \left(\frac{1 + \Delta_t}{1 - \Delta_t}\right)^2. \]
\end{lemma}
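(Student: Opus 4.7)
The plan is to split the lemma into its two claims and handle the forward chain first, then use Bayes' rule to reduce the backward chain to the forward case.

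For the forward ratio, I would invoke Lemma~\ref{lem:expdecay} applied to the chain $\theta$: since $t \geq 2\log(1/\pi^{\min}_{\Theta})/g_{\Theta} \geq 2\log(1/\pi_{\theta}^{\min})/g_{\theta}$, and $\Delta_t$ as defined is an upper bound on the corresponding chain-specific quantity $\exp(-tg_{\theta}/2)/\pi_{\theta}^{\min}$, we obtain the two-sided bound
\[ (1-\Delta_t)\pi_{\theta}(y) \leq P_{\theta}^{t}(x,y) \leq (1+\Delta_t)\pi_{\theta}(y). \]
By time-homogeneity $P(X_{t+j}=y \mid X_j = x,\theta) = P_{\theta}^t(x,y)$, and the same bound holds with $x$ replaced by $x'$. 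Dividing the upper bound for $x$ by the lower bound for $x'$ yields $(1+\Delta_t)/(1-\Delta_t)$, and symmetrically for the reverse direction.

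For the backward ratio, I would apply Bayes' rule to rewrite
\[ \frac{P(X_j = y \mid X_{j+t} = x, \theta)}{P(X_j = y \mid X_{j+t} = x', \theta)} = \frac{P_{\theta}^{t}(y,x)\,P(X_{j+t}=x' \mid \theta)}{P_{\theta}^{t}(y,x')\,P(X_{j+t}=x \mid \theta)}, \]
where the marginal factors $P(X_j = y \mid \theta)$ cancel. The forward-ratio bound already controls $P_{\theta}^t(y,x)/P_{\theta}^t(y,x')$ by $(1+\Delta_t)/(1-\Delta_t)$. The remaining task is to bound the marginal ratio $P(X_{j+t}=x' \mid \theta)/P(X_{j+t}=x \mid \theta)$; writing $P(X_{j+t}=x \mid \theta) = \sum_z P(X_j=z \mid \theta) P_{\theta}^{t}(z,x)$ and applying the same two-sided bound pointwise in $z$ shows the marginal itself lies in $[(1-\Delta_t)\pi_{\theta}(x), (1+\Delta_t)\pi_{\theta}(x)]$, so this ratio is also bounded by $(1+\Delta_t)/(1-\Delta_t)$. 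Multiplying the two gives the claimed squared bound, and the lower bound follows by symmetry.

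The main (minor) obstacle is that Lemma~\ref{lem:expdecay} is stated for a single chain with its own $\pi^{\min}$ and $g$, while the lemma above uses the $\Theta$-level quantities $\piT$ and $\gT$. I would dispatch this at the start by noting that $\Delta_t = \exp(-tg_{\Theta}/2)/\piT$ dominates $\exp(-tg_{\theta}/2)/\pi_{\theta}^{\min}$ for every $\theta\in\Theta$ (since $g_{\theta} \geq g_{\Theta}$ and $\pi_{\theta}^{\min} \geq \piT$), and that the assumed lower bound on $t$ is likewise uniform in $\theta$, so Lemma~\ref{lem:expdecay} applies to each $\theta\in\Theta$ with the common $\Delta_t$.
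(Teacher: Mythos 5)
Your overall route is the same as the paper's (the paper does not write this proof out; it simply notes that Lemma~\ref{lem:expdecay} ``along with some algebra and a Bayes Rule'' suffices), and your handling of the forward ratio and of the uniformity over $\Theta$ is correct: $g_\theta \ge \gT$ and $\pi_\theta^{\min} \ge \piT$ give $\exp(-t g_\theta/2)/\pi^{\min}_\theta \le \Delta_t$, and the two-sided bound $(1-\Delta_t)\pi_\theta(y) \le P_\theta^t(x,y) \le (1+\Delta_t)\pi_\theta(y)$ yields the first display because the factor $\pi_\theta(y)$ is the \emph{same} in numerator and denominator (same destination $y$, different sources $x,x'$).

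In the backward part, however, two of your intermediate claims are false as stated, even though their product is right. The forward-ratio bound does \emph{not} control $P_\theta^t(y,x)/P_\theta^t(y,x')$ by $(1+\Delta_t)/(1-\Delta_t)$: here the source $y$ is fixed and the destinations differ, so the pointwise bound gives only
\[
\frac{P_\theta^t(y,x)}{P_\theta^t(y,x')} \;\le\; \frac{(1+\Delta_t)\,\pi_\theta(x)}{(1-\Delta_t)\,\pi_\theta(x')},
\]
and the ratio $\pi_\theta(x)/\pi_\theta(x')$ can be as large as $1/\piT$. Likewise the marginal ratio satisfies $P(X_{j+t}=x'\mid\theta)/P(X_{j+t}=x\mid\theta) \le \frac{(1+\Delta_t)\pi_\theta(x')}{(1-\Delta_t)\pi_\theta(x)}$, not $(1+\Delta_t)/(1-\Delta_t)$. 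The proof is rescued only because these two stationary-probability ratios are reciprocals of one another and cancel when you multiply the factors, leaving exactly $\bigl(\frac{1+\Delta_t}{1-\Delta_t}\bigr)^2$. So the decomposition and the final bound are correct, but you should carry the $\pi_\theta(x)/\pi_\theta(x')$ factors explicitly through both estimates and cancel them at the end, rather than asserting that each factor is individually bounded by $(1+\Delta_t)/(1-\Delta_t)$.
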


\begin{proof}(of Lemma~\ref{lem:mqmc2})
Consider $X_Q$ of the form $\{X_{i-a},X_{i+b}\}$. For any $\theta$, any $x,x'$ and any $x_Q$, by conditional independence, $\log \frac{\Pr(X_Q = x_Q | X_i = x, \theta)}{\Pr(X_Q = x_Q | X_i = x', \theta)}$ is equal to:

\begin{align}\label{eqn:mqmc2_proof}
\log\frac{\Pr(X_{i+b}=x_{i+b}|X_i = x, \theta)}{\Pr(X_{i+b}=x_{i+b}|X_i = x', \theta)} + \log \frac{\Pr(X_{i-a}=x_{i-a}|X_i = x, \theta)}{\Pr(X_{i-a}=x_{i-a}|X_i = x', \theta)} \nonumber
\end{align}
By Lemma \ref{lem:fwdmc bwdmc}, this is at most $2\log\frac{1+\Delta_a}{1-\Delta_a} + \log\frac{1+\Delta_b}{1-\Delta_b}$ for any $x, x', x_{i-a}, x_{i+b}$.

Moreover, when $X_Q=\{X_{i-a}\}$, the first term disappears and $\log \frac{\Pr(X_Q = x_Q | X_i = x, \theta)}{\Pr(X_Q = x_Q | X_i = x', \theta)}$  is at most $2\log\frac{1+\Delta_a}{1-\Delta_a}$; the case $X_Q=\{X_{i+b}\}$ is analogous, thus concluding the proof.
\end{proof}

\begin{proof}(of Theorem~\ref{thm:mqm_utility_lemma})
Let $a^*$ be defined as in Lemma~\ref{lem:fastmqm}.
Consider the middle node $X_{\lceil T/2 \rceil}$ and Markov Quilt $X_Q=\{X_{\lceil T/2 \rceil-a^*}, X_{\lceil T/2 \rceil+a^*}\}$; since $T \geq 2a^*+1$, both endpoints of the quilt lie inside the chain. From Lemma \ref{lem:mqmc2} and the definition of $a^*$, $\eT(X_Q | X_i) \leq \epsilon/2$.

Let $C = 8 \lceil {\log\left(\frac{\exp({\epsilon/6})+1}{\exp({\epsilon/6})-1} \frac{1}{\piT}\right)}/{\gT} \rceil$. We have $\sigma(X_Q) \leq \frac{C}{\epsilon}.$ By definition, $\sigma_{\lceil T/2 \rceil} \leq \sigma(X_Q)$. Also, by Lemma~\ref{lem:mqm fast}, $\sigma_j \leq \sigma_{\lceil T/2 \rceil}$ for any $j$. Therefore $\max_{j\in\kset{T}}\sigma_j \leq C/\epsilon$.
\end{proof}

\subsection{Fast \mmqmapprox}
\label{sec:app: mqm fast}
\begin{lemma}\label{lem:mqm fast}
In Algorithm \mmqmapprox, suppose there exists an $i \in \kset{T}$, such that $\sigma_i = \min_{X_Q \in S_{Q,i}} \sigma(X_Q)$ is achieved by $X_Q = \{ X_{i-a}, X_{i+b}\}$ with $i-a \geq 1$, $i+b\leq T$. Then, $\max_{j\in\kset{T}} \sigma_j = \sigma_i$.
\end{lemma}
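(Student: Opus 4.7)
The plan is to show that for every $j \in \kset{T}$, the set $S_{Q,j}$ contains a Markov Quilt whose score (as computed inside Algorithm~\mmqmapprox) is at most $\sigma_i$. This immediately gives $\sigma_j \leq \sigma_i$ for every $j$, and since $\sigma_i$ itself is one of the $\sigma_j$'s, the maximum is achieved at $i$.

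The crucial observation driving the argument is that the score assigned by \mmqmapprox\ to a quilt of the form $\{X_{j-a}, X_{j+b}\}$ is \emph{position-independent}. Both the count $|X_N| = a+b-1$ and the upper bound on the \effect\ supplied by Lemma~\ref{lem:mqmc2} are expressed purely in terms of $a, b, \gT, \piT$; no occurrence of the base index $j$ appears. The same position-independence holds for the single-sided variants $\{X_{j-a}\}$ and $\{X_{j+b}\}$. Fix the pair $(a,b)$ realizing $\sigma_i$ at node $i$, and observe also that membership of a quilt in $S_{Q,j}$ is controlled only by length constraints (e.g.\ $a+b < \ell$) that do not depend on $j$.

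I then proceed by cases on $j$. (i) \emph{Interior}: if $j-a\geq 1$ and $j+b\leq T$, the translated quilt $\{X_{j-a}, X_{j+b}\}$ lies in $S_{Q,j}$ and, by position-independence, has score exactly $\sigma_i$. (ii) \emph{Left boundary}: if $j \leq a$, I replace the missing left endpoint by using the single-sided quilt $\{X_{j+b}\}$. It is in $S_{Q,j}$ since $j+b \leq a+b < \ell$; its $|X_N| = j+b-1 \leq a+b-1$ is no larger than before, and by the single-endpoint clause of Lemma~\ref{lem:mqmc2} its \effect-bound is smaller than that of $\{X_{i-a}, X_{i+b}\}$ by the nonnegative term $2\log\bigl(\tfrac{\piT + \exp(-\gT a/2)}{\piT - \exp(-\gT a/2)}\bigr)$. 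So its score is at most $\sigma_i$. (iii) \emph{Right boundary}: symmetric, using $\{X_{j-a}\}$.

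The remaining worry is the simultaneous boundary case $j \leq a$ and $j \geq T-b+1$, which would force $T \leq a+b-1$. But the hypothesis that $\{X_{i-a}, X_{i+b}\}$ is a valid quilt for $X_i$ supplies $a \leq i-1$ and $b \leq T-i$, whence $T \geq a+b+1$, a contradiction. Hence this case is vacuous and the three cases above exhaust $\kset{T}$. The only delicate step is case~(ii)--(iii): one has to read off Lemma~\ref{lem:mqmc2} to confirm that truncating one endpoint of a two-sided quilt strictly removes a nonnegative summand from the \effect-bound (making the denominator of the score larger), while simultaneously not increasing $|X_N|$. Everything else is bookkeeping on indices and on the $S_{Q,j}$ length constraint.
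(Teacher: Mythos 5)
Your proof is correct and follows essentially the same route as the paper's: translate the optimal two-sided quilt to interior nodes using position-independence of the \mmqmapprox\ score, and for boundary nodes drop the out-of-range endpoint, noting that this shrinks both $\card{X_N}$ and the \effect\ bound from Lemma~\ref{lem:mqmc2}. You are in fact slightly more careful than the paper, which asserts without justification that the two boundary failures cannot occur simultaneously and does not check the $S_{Q,j}$ membership constraints involving $\ell$.
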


\begin{proof}(Of Lemma~\ref{lem:mqm fast})
Let $X_{Q} = \{x_{i-a}, x_{i+b}\}$ be the quilt with the lowest score for node $X_i$. Pick any other node $X_{i'}$. If both $i' - a \geq 1$ and $i' + b \leq T$, then $\{X_{i'-a}, X_{i'+b}\}$ is a quilt for $X_{i'}$ with score $\sigma_i$. Since $\sigma_{i'}$ is the minimum score over all quilts of $X_{i'}$, $\sigma_{i'} \leq \sigma_i$.

Otherwise, at least one of the conditions $i' - a \geq 1$ or $i' + b \leq T$ hold. Suppose $i' - a\geq 1$. Then, consider the quilt $X_{Q'} = \{X_{i'-a}\}$ for $X_{i'}$. Let $X_N$ (resp. $X_{N'}$) be the set of local nodes of $X_i$ (resp. $X_{i'}$) corresponding to $X_Q$ (resp. $X_{Q'}$). Since $i'+b \geq T+1$, we have $\card {X_{N'}} = T-i'+a \leq a+b-1 = \card{X_N}$. This implies that $e_\Theta(X_{Q'}|X_{i'})$ is equal to:
\begin{align*}
& = 2 \log \left( \frac{ \piT + \exp(-\gT a/2) }{ \piT - \exp(-\gT a/2) } \right)\\
& \leq \log \left( \frac{ \piT + \exp(-\gT b/2) }{ \piT - \exp(-\gT b/2) } \right) +  2 \log \left( \frac{ \piT + \exp(-\gT a/2) }{ \piT - \exp(-\gT a/2) } \right),
\end{align*}
which is $e_\Theta(X_{Q}|X_{i})$. This mean that $\sigma(X_{Q'}) = \frac{\card{X_{N'}}}{\epsilon - e_{\Theta}(X_{Q'}|X_{i'})}$ is at most  $\frac{\card{X_N}}{\epsilon - e_{\Theta}(X_Q|X_{i})} = \sigma_i$. Therefore, $\sigma_{i'} \leq \sigma_i$.

The other case where $i' + b \leq T$ is analogous. Thus $\forall i'$, $\sigma_{i'}\leq \sigma_i$ and $\sigma_{\max} = \sigma_i$.
\end{proof}

\begin{proof} (Of Lemma~\ref{lem:fastmqm})
Consider the middle node $X_{\lceil T/2 \rceil}$ and Markov Quilt $X_Q=\{X_{\lceil T/2 \rceil-a^*}, X_{\lceil T/2 \rceil+a^*}\}$; since $T \geq 8a^*$, both endpoints of the quilt lie inside the chain. From Lemma \ref{lem:mqmc2} and the definition of $a^*$, $\eT(X_Q | X_i) \leq \epsilon/2$. Therefore, $\sigma(X_{Q}) = \frac{2a^*-1}{\epsilon - \eT(X_{Q} | X_i)} \leq \frac{4a^*-2}{\epsilon}$.

Consider any Markov Quilt $X_{Q'}$ (with corresponding $X_{N'}$ and $X_{R'}$) with $\card{X_{N'}} \geq 4a^*-2$. Since \effect\ is always non-negative, we have $\sigma(X_{Q'}) = \frac{\card{X_{N'}}}{\epsilon - \eT(X_{Q'} | X_i)} \geq \frac{4a^*-2}{\epsilon} \geq \sigma(X_{Q})$, and thus any Markov Quilt $X_{Q'}$ with $\card{X_{N'}} \geq 4a^*-2$ has score no less than that of $X_Q=\{X_{i-a^*}, X_{i+a^*}\}$. If $X_{Q'}$ is of the form $\{X_{i-a'}\}$ or $\{X_{i+b'}\}$, then since $T \geq 8a^*$, we have $\card{X_{N'}} \geq 4a^*$. Therefore the optimal Markov Quilt for $X_i$ is of the form $\{X_{i-a}, X_{i+b}\}$ with $a+b \leq 4a^*$. Combining this with Lemma~\ref{lem:mqm fast} concludes the proof.
\end{proof}

\subsection{\mmqmexact\ Optimization}\label{sec:mqmexactallinit}
We show that \mmqmexact\ can be expedited further if $\Theta$ is of the form: $\Theta = \Delta_k \times \calP$, where $\calP$ is a set of transition matrices, and $\Delta_k$ is the probability simplex over $k$ items. In other words, $\Theta$ includes tuples of the form $(q, P)$ where $q$ ranges over possible initial distributions and $P$ belongs to a set $\calP$ of transition matrices. 

For any $P \in \calP$, let $\Theta_P = \{ (P, q) \in \Theta\}$. Let $P^j$ be $P$ raised to the $j$-th power. From~\eqref{eqn:maxinfluence_exact}, $e_{\Theta_P}(X_Q | X_i)$ is equal to:
\vspace{-10pt}
\begin{align*}
&	\max_{x, x'\in \calX} \left(\log \max_{q}\frac{(q^\top P^{i-1})(x')}{(q^\top P^{i-1})(x)}  \right.\\
&+\max_{x_{i+b}\in\calX} \log \frac{P^{b}(x, x_{i+b})}{P^{b}(x', x_{i+b})}
+ \left.\max_{x_{i-a}\in\calX} \log \frac{P^{a}(x_{i-a},x)}{P^{a}(x_{i-a},x')}\right).
\end{align*}
Since $(q^\top P^{i-1})(x) = q^\top P^{i-1}(:,x)$, for $i>1$, we have
\begin{align*}
	\max_{q}\frac{(q^\top P^{i-1})(x')}{(q^\top P^{i-1})(x)} 
= 	\max_{q}\frac{q^\top P^{i-1}(:,x')}{q^\top P^{i-1}(:,x)} 
=	\max_{y}\frac{P^{i-1}(y,x')}{P^{i-1}(y,x)} 
\end{align*}
where the the maximum is achieved by $q = e_{y^*}$, %
$y^* = \arg\max_{y}\frac{P^{i-1}(y,x')}{P^{i-1}(y,x)}$. \\

This implies that we only need to iterate over all transition matrices $\calP$ instead of conducting a grid search over $\Theta$, which further improves efficiency.
\end{document}